\documentclass{article} 
\usepackage{iclr2026_conference,times}

\usepackage{amsmath,amsfonts,bm}

\def\1{\bm{1}}

\DeclareMathAlphabet{\mathsfit}{\encodingdefault}{\sfdefault}{m}{sl}
\SetMathAlphabet{\mathsfit}{bold}{\encodingdefault}{\sfdefault}{bx}{n}

\newcommand{\R}{\mathbb{R}}

\usepackage{hyperref}
\usepackage[nameinlink]{cleveref}
\usepackage{url}
\usepackage{enumitem}
\usepackage{booktabs}
\usepackage{tocloft}
\usepackage{changepage}
\usepackage{subcaption}

\let\oldaddcontentsline\addcontentsline
\newcommand{\stoptocentries}{\renewcommand{\addcontentsline}[3]{}}
\newcommand{\starttocentries}{\let\addcontentsline\oldaddcontentsline}

\usepackage{graphicx,accents} 
\makeatletter
\DeclareRobustCommand{\cev}[1]{%
  \mathpalette\do@cev{#1}%
}
\newcommand{\do@cev}[2]{%
  \vbox{\offinterlineskip
    \sbox\z@{$\m@th#1 x$}%
    \ialign{##\cr
      \hidewidth\reflectbox{$\m@th#1\vec{}\mkern4mu$}\hidewidth\cr
      \noalign{\kern-\ht\z@}
      $\m@th#1#2$\cr
    }%
  }%
}
\makeatother

\title{A unified perspective on fine-tuning and sampling with diffusion and flow models}

\author{Carles Domingo-Enrich, Yuanqi Du \\
Microsoft Research New England\\
\texttt{\{carlesd,yuanqidu\}@microsoft.com} \\
\And 
Michael S. Albergo \\
Harvard University, Kempner Institute \\
\texttt{malbergo@fas.harvard.edu}
}

\newenvironment{talign*}
 {\let\displaystyle\textstyle\csname align*\endcsname}
 {\endalign}
\newenvironment{talign}
 {\let\displaystyle\textstyle\csname align\endcsname}
 {\endalign}

\usepackage{amsthm}
\newtheorem{theorem}{Theorem}[section] 
\newtheorem{lemma}[theorem]{Lemma}     
\newtheorem{proposition}{Proposition}[section]
\newtheorem{corollary}[theorem]{Corollary}

\newtheorem{remark}{Remark}[section]

\crefname{section}{Sec.}{Secs.}
\Crefname{section}{Sec.}{Secs.}

\crefname{subsection}{Sec.}{Secs.}
\Crefname{subsection}{Sec.}{Secs.}

\crefname{figure}{Fig.}{Figs.}
\Crefname{figure}{Fig.}{Figs.}

\crefname{table}{Tab.}{Tabs.}
\Crefname{table}{Tab.}{Tabs.}

\crefname{algorithm}{Alg.}{Algs.}
\Crefname{algorithm}{Alg.}{Algs.}

\crefname{theorem}{Thm.}{Thms.}
\Crefname{theorem}{Thm.}{Thms.}

\crefname{lemma}{Lem.}{Lems.}
\Crefname{lemma}{Lem.}{Lems.}

\crefname{corollary}{Cor.}{Cors.}
\Crefname{corollary}{Cor.}{Cors.}

\crefname{proposition}{Prop.}{Props.}
\Crefname{proposition}{Prop.}{Props.}

\crefname{remark}{Rem.}{Rems.}
\Crefname{remark}{Rem.}{Rems.}

\crefname{definition}{Def.}{Defs.}
\Crefname{definition}{Def.}{Defs.}

\crefname{appendix}{App.}{Apps.}
\Crefname{appendix}{App.}{Apps.}

\iclrfinalcopy 
\begin{document}

\maketitle

\begin{abstract}
We study the problem of training diffusion and flow generative models to sample from target distributions defined by an exponential tilting of a base density; a formulation that subsumes both sampling from
 unnormalized densities and reward fine-tuning of pre-trained models. This problem can be approached from a stochastic optimal control (SOC) perspective, using adjoint-based or score matching methods, or from a
 non-equilibrium thermodynamics perspective. We provide a unified framework encompassing these approaches and make three main contributions: (i) bias-variance decompositions revealing that Adjoint
 Matching/Sampling and Novel Score Matching have finite gradient variance, while Target and Conditional Score Matching do not; (ii) norm bounds on the lean adjoint ODE that theoretically support the
effectiveness
 of adjoint-based methods; and (iii) adaptations of the CMCD and NETS loss functions, along with novel Crooks and Jarzynski identities, to the exponential tilting setting. We validate our analysis with reward
 fine-tuning experiments on Stable Diffusion 1.5 and 3.
\end{abstract}

\stoptocentries

\vspace{-3pt}
\section{Introduction}
Recent advances in generative modeling have demonstrated the effectiveness of diffusion and flow matching models for learning complex data distributions \citep{song2021scorebased,ho2020denoising,lipman2022flow,albergo2023stochastic,liuflow}. In many applications, however, it is desirable to tailor the generative process to favor certain qualities, either by sampling from an unnormalized target distribution or by fine-tuning a pre-trained model with a reward function \citep{uehara2024fine,domingoenrich2025adjoint,zhangpath,holdijk2023stochastic}. A natural way to address these challenges is via \emph{exponential tilting}, wherein a base density is modified to a target one by reweighting with an exponential factor. This formulation not only unifies reward fine-tuning and sampling from unnormalized distributions but also naturally lends itself to analysis using tools from stochastic optimal control (SOC), partial differential equation (PDE) analysis, stochastic processes and thermodynamics.

Motivated by the broad applications, such as text-to-image generation \citep{domingoenrich2025adjoint} and protein design \citep{wangfine}, a growing body of work has explored methods for fine-tuning diffusion and flow models. Although the underlying problem structure aligns well with SOC, particularly when cast as controlling an SDE with a reward function, the key challenge is the bias introduced by lifting the base process with the reward function. \citet{uehara2024fine} tackled this challenge by solving two SOC problems. Adjoint matching \citep{domingoenrich2025adjoint} leveraged memoryless noise schedule to bypass this issue and generalized SOC to fine-tune flow models. Moreover, reinforcement learning (RL)-based approaches have also been largely explored \citep{black2023training,liu2025flow,zheng2025diffusionnft}.

In a parallel literature, training diffusion and flow models to sample from unnormalized densities has also been largely studied. Early work \citep{zhangpath,vargasdenoising,berner2022optimal} can be viewed as solving an SOC problem where the reward function is related to the unnormalized density. Another branch of work \citep{phillips2024particle,akhound2024iterated} extended the score matching framework with the target score instead of the conditional score. The idea of adjoint matching has also been applied to this problem, together with reciprocal projection for efficient off-policy training \citep{havens2025adjoint,liu2025adjoint}. Moreover, \citet{vargas2024transport,zhong2024time,albergo2025nets} leveraged fundamental ideas from non-equilibrium thermodynamics, extending the classical annealed importance sampling (AIS) and Jarzynski equality perspective using path-space variational inference and PDE formulations.

In this paper, we provide a unifying framework that formulates training diffusion and flow models to sample from an exponentially tilted target distribution, which covers reward-based fine-tuning of a pretrained model, as well as sampling from unnormalized densities. Our main contributions are as follows: \emph{(i)} we derive bias-variance decompositions for adjoint-based and score matching algorithms to compare the algorithms on equal footing, under which AM/AS and Novel Score Matching (NSM, \cite{phillips2024particle}) perform favorably; \emph{(ii)} we bound the norm of the lean adjoint ODE for Adjoint Matching and Sampling (AM/AS), supporting the empirical performance of the algorithms; \emph{(iii)} we adapt the thermodynamic framework of \citet{vargas2024transport} and \citet{albergo2025nets} to the exponential tilting problem which yields analogs of the Controlled
Monte Carlo Diffusions (CMCD) and Non-Equilibrium Transport Sampler (NETS) loss functions, as well as novel variants of the celebrated Crooks fluctuation theorem and Jarzynski equality. Finally, we perform reward-based fine-tuning experiments on Stable Diffusion 1.5 and 3 using Adjoint Matching, refining the techniques of \cite{domingoenrich2025adjoint}. 

\vspace{-3pt}
\section{Background}

\vspace{-3pt}
\subsection{Flow Matching, DDIM and Föllmer processes} \label{eq:FM_DDIM_Follmer}
\paragraph{Flow Matching or Stochastic Interpolants} Given a real-valued differentiable function $(\alpha_t)_{t \in [0,1]}$ such that $\alpha_0 = 0$, $\alpha_1 = 1$, and a real-valued differentiable function $(\beta_t)_{t \in [0,1]}$ such that $\beta_1 = 0$, the reference flow or stochastic interpolant $\bar{X}$ is defined as 
\begin{talign} \label{eq:reference_flow}
\bar{X}_t = \alpha_t Y + \beta_t \varepsilon, 
\end{talign}
where $\varepsilon \sim p_0 = \mathcal{N}(0,\mathrm{I})$, $Y \sim p_{\mathrm{data}}$. If we let 
\begin{talign} \label{eq:kappa_t_eta_t}
    \kappa_t = \frac{\dot{\alpha}_t}{\alpha_t}, \qquad \eta_t = \beta_t \big( \frac{\dot{\alpha}_t}{\alpha_t} \beta_t - \dot{\beta}_t \big),
\end{talign}
the Flow Matching reference SDE, whose solution has the same time marginals as the reference flow up to a time flip, is
\begin{talign} \label{eq:FM_reference}
    d\tilde{X}_t &= - \kappa_{1-t} \tilde{X}_t \, \mathrm{d}t + \sqrt{2 \eta_{1-t}} \, \mathrm{d}B_t, \qquad \tilde{X}_0 \sim p_{\mathrm{data}}.
\end{talign}
That is, for all $t \in [0,1]$, $\tilde{X}_t$ and $\bar{X}_{1-t}$ have the same distribution.
And the optimal generative SDE reads
\begin{talign} \label{eq:generative_SDE}
    dX_t &= \big( \kappa_t X_t + \big( \frac{\sigma(t)^2}{2} + \eta_t \big) \mathfrak{s}_t(X_t) \big) \, \mathrm{d}t + \sigma(t) \, \mathrm{d}B_t, \qquad X_0 \sim \mathcal{N}(0,\beta_0^2 \mathrm{I}),
\end{talign}
where $\mathfrak{s}_t$ is the score function of the reference process: $\mathfrak{s}_t(x) = \nabla \log p_t(x)$, and the noise schedule $\sigma$ is arbitrary. For any $\sigma$, the marginals $X_t$ also have the same distribution as the reference flow marginals $\bar{X}_t$.
Setting $\sigma(t) = \sqrt{2 \eta_t}$ yields the memoryless process, which is the time reversal of the process in \eqref{eq:FM_reference}:
\begin{talign} \label{eq:memoryless_generative_SDE}
    dX_t &= \big( \kappa_t X_t + 2\eta_t \mathfrak{s}_t(X_t) \big) \, \mathrm{d}t + \sqrt{2 \eta_t} \, \mathrm{d}B_t, \qquad X_0 \sim \mathcal{N}(0,\beta_0^2 \mathrm{I}).
\end{talign}

\begin{remark} \label{rem:score_v_epsilon}
    Although the drift terms in \eqref{eq:generative_SDE} and \eqref{eq:memoryless_generative_SDE} are written in terms of the score to facilitate the analysis, in Flow Matching the vector field that is learned is $v_t(x) = \kappa_t X_t + \eta_t \mathfrak{s}_t(x)$, and in DDIM it is often the noise predictor $\epsilon_t(x) = - \beta_t \mathfrak{s}_t(x)$. These vector fields encode the same information, and the drifts can be written in terms of any of them.
\end{remark}
\paragraph{Subcases of Flow Matching} The following processes are particular instances of flow matching:
\begin{enumerate}[label=(\roman*),left=0pt,topsep=0pt,itemsep=0pt]
\item \emph{Föllmer process:} Given an increasing function $\bar{\alpha}_t$ such that $\bar{\alpha}_0 = 0$ and $\bar{\alpha}_1 = 1$, and a constant $\sigma_0^2 > 0$, the interpolant $\bar{X}$ and the coefficients $\alpha_t$, $\beta_t$, $\kappa_t$ and $\eta_t$ read
\begin{talign}
\begin{split}
    &\bar{X}_t = \bar{\alpha}_t Y + \sqrt{\bar{\alpha}_t (1-\bar{\alpha}_t)} \sigma_0 \varepsilon \implies \alpha_t = \bar{\alpha}_t, \quad \beta_t = \sqrt{\bar{\alpha}_t (1-\bar{\alpha}_t)} \sigma_0, \\
    &\dot{\alpha}_t = \dot{\bar{\alpha}}_t, \quad \dot{\beta}_t = \frac{(1 - 2\bar{\alpha}_t) \dot{\bar{\alpha}}_t}{2\sqrt{\bar{\alpha}_t (1-\bar{\alpha}_t)}} \sigma_0^2, 
    \implies \kappa_t = 
    \frac{\dot{\bar{\alpha}}_t}{\bar{\alpha}_t}, \quad \eta_t = 
    \frac{\dot{\bar{\alpha}}_t \sigma_0^2}{2}.
\end{split}
\end{talign}
\item \emph{DDIM/DDPM:} Given an increasing function $\bar{\alpha}_t$ such that $\bar{\alpha}_0 = 0$ and $\bar{\alpha}_1 = 1$, and a constant $\sigma_0^2 > 0$,  the interpolants and coefficients $\alpha_t$, $\beta_t$, $\kappa_t$ and $\eta_t$ read
\begin{talign}
\begin{split}
    &\bar{X}_t = \sqrt{\bar{\alpha}_t} Y + \sqrt{1-\bar{\alpha}_t} \sigma_0 \varepsilon \implies \alpha_t = \sqrt{\bar{\alpha}_t}, \quad \beta_t = \sqrt{1-\bar{\alpha}_t} \sigma_0, \\
    &\dot{\alpha}_t = \frac{\dot{\bar{\alpha}}_t}{2\sqrt{\bar{\alpha}_t}}, \quad \dot{\beta}_t = - \frac{\dot{\bar{\alpha}}_t}{2\sqrt{1-\bar{\alpha}_t}} \sigma_0,
    \implies \kappa_t = \frac{\dot{\bar{\alpha}}_t}{2\bar{\alpha}_t}, \quad \eta_t = 
    \frac{\dot{\bar{\alpha}}_t \sigma_0^2}{2 \bar{\alpha}_t}.
\end{split}
\end{talign}
\item \emph{Rectified Flow (OT Flow Matching):} Given an increasing function $\bar{\alpha}_t$ such that $\bar{\alpha}_0 = 0$ and $\bar{\alpha}_1 = 1$, and a constant $\sigma_0^2 > 0$, the interpolant $\bar{X}$ and the coefficients $\alpha_t$, $\beta_t$, $\kappa_t$ and $\eta_t$ read
\begin{talign}
\begin{split}
    &\bar{X}_t = \bar{\alpha}_t Y + (1-\bar{\alpha}_t) \sigma_0 \varepsilon \implies \alpha_t = \bar{\alpha}_t, \quad \beta_t = (1-\bar{\alpha}_t)\sigma_0. \\
    &\dot{\alpha}_t = \dot{\bar{\alpha}}_t, \quad \dot{\beta}_t = - \dot{\bar{\alpha}}_t, \implies \kappa_t = \frac{\dot{\bar{\alpha}}_t}{\bar{\alpha}_t}, \quad \eta_t = 
    \frac{(1-\bar{\alpha}_t) \dot{\bar{\alpha}}_t \sigma_0^2}{\bar{\alpha}_t}.
\end{split}
\end{talign}
\end{enumerate}

\vspace{-3pt}
\subsection{Exponential tilting and its control and thermodynamic formulations} \label{subsec:exponential_tilting}
\paragraph{The exponential tilting problem} Given a Flow Matching model that generates a distribution $p^{\mathrm{base}}$ over $\mathbb{R}^d$ and a function $r : \mathbb{R}^d \to \mathbb{R}$, consider the task of modifying the model such that the generated distribution is the following tilted distribution:
\begin{talign} \label{eq:tilted}
    p^{\star}(x) \propto p^{\mathrm{base}}(x) \exp(r(x)).
\end{talign}
Two main settings are covered by the exponential tilting framework:
\begin{enumerate}[label=(\roman*),left=0pt,topsep=0pt,itemsep=0pt]
    \item \emph{Reward fine-tuning:} $p^{\mathrm{base}}$ is the distribution generated by a pre-trained diffusion or flow matching model, and $r$ is a reward model that takes high values for high quality samples. 
    \item \emph{Sampling from unnormalized distributions:} The goal is to sample from the unnormalized distribution proportional to $p^{\star}(x) = \exp(-E(x))$, where $E$ is the energy function. $p^{\mathrm{base}}$ is a Gaussian $\mathcal{N}(0,\sigma_1^2\mathrm{I})$, and the reward $r$ is chosen to be $r(x) = - E(x) + \frac{\|x\|^2}{2\sigma_1^2}$.
\end{enumerate}
While the Föllmer process is typically used for sampling (e.g. \cite{havens2025adjoint}), and DDIM and OT Flow Matching are commonly used for generative modeling, the choice of process is independent of the application.

\paragraph{The stochastic optimal control formulation} \cite{domingoenrich2025adjoint} proves that the exponential tilting problem can be reformulated as the following stochastic optimal control (SOC) problem:\footnote{In \eqref{eq:b_def}, $\mathfrak{s}_t$ is the score function corresponding to $p^{\mathrm{base}}$. In practice, $b_{\sigma}$ is computed using the learned FM vector field or denoiser (\Cref{rem:score_v_epsilon}).}
\begin{talign} \label{eq:control_problem_def}
    &\min\limits_{u \in \mathcal{U}} \mathbb{E} \big[ \frac{1}{2} \int_0^1 \|u(X^u_t,t)\|^2 \, \mathrm{d}t + 
    g(X^u_1) \big], \\
    \begin{split}
    &\text{s.t.}~ \mathrm{d}X^u_t =  \left( b_{\sigma}(X^u_t,t) + \sigma(t) u(X^u_t,t) \right) \, \mathrm{d}t + 
    \sigma(t) \mathrm{d}B_t, \qquad X^u_0 \sim p_0 = \mathcal{N}(0,\beta_0^2 \mathrm{I}),
    \end{split} 
    \label{eq:controlled_SDE} \\
    &\text{where} \quad b_{\sigma}(x,t) = \kappa_t x + \big( \frac{\sigma(t)^2}{2} + \eta_t \big) \mathfrak{s}_t(x), \label{eq:b_def}
\end{talign} 
as long as the uncontrolled process $X = X^0$ satisfies the memoryless property, meaning that $X_0$ and $X_1$ are statistically independent. \cite[Prop.~1]{domingoenrich2025adjoint} shows that a generative process is memoryless if and only if the noise schedule is chosen as
$\sigma(t)^2 = 2 \eta_t + \chi(t)$, with $\chi : [0,1] \to \R$ such that for all $t \in (0,1]$, $\lim_{t' \to 0^{+}} \alpha_{t'} \exp \big( - \int_{t'}^t \frac{\chi(s)}{2 \beta_{s}^2} \, \mathrm{d}s \big) = 0$.
In particular, they refer to $\sigma(t) = \sqrt{2\eta_t}$ as \emph{the} memoryless noise schedule, as it is the only one such that the resulting fine-tuned model can be used to perform inference with an arbitrary noise schedule \cite[Thm.~1]{domingoenrich2025adjoint}.

\paragraph{The thermodynamics formulation} 
Methods like CMCD \citep{vargas2024transport} and NETS \citep{albergo2025nets} were developed in a setting where one has access to a time-dependent energy function $(U_t)_{t \in [0,1]}$, instead of a flow matching vector field that generates $p_{\mathrm{base}}$ and a time-dependent reward function $(r_t)_{t \in [0,1]}$. That is, their goal is to learn a vector field that yields a process with marginals $p^{\star}_t(x) \propto \exp(-U_t(x))$. This is different but related to the task of learning a vector field that yields a process 
with marginals $p^{\star}_t(x) \propto p^{\mathrm{base}}_t(x) \exp(r_t(x))$, which solves the exponential tilting problem \eqref{eq:tilted} if $(r_t)_{t \in [0,1]}$ is chosen such that $r_1 = r$ and $p^{\star}_0 \propto \mathcal{N}(0,\beta_0^2) \exp(r_0)$ is easy to sample from. 
More specifically,
we consider processes $X^v$ of the form 
\begin{talign} \label{eq:X_v_SDE}
    \mathrm{d}X^v_t = \left( b^r_{\sigma}(X^v_t,t) + v(X^v_t,t) \right) \, \mathrm{d}t + 
    \sigma(t) \mathrm{d}B_t, \quad 
    \begin{cases}
    X^v_0 \sim p^{\star}_0 \propto \mathcal{N}(0,\beta_0^2 \mathrm{I}) \exp(r_0), \\
    b^r_{\sigma}(x,t) \! := \! \kappa_t x \! + \! \big( \frac{\sigma(t)^2}{2} \! + \! \eta_t \big) \big( \mathfrak{s}_t(x) \! + \! \nabla r_t(x) \big).
    \end{cases}
\end{talign}
and our goal is to learn a $v^{\star}$ such that the marginal distribution of $X^{v^{\star}}_t$ is $p^{\star}_t(x) \propto p^{\mathrm{base}}_t(x) \exp(r_t(x))$.
Unlike in the SOC formulation, in the thermodynamics formulation the noise schedule $\sigma(t)$ can be picked arbitrarily.

\vspace{-3pt}
\section{Stochastic optimal control algorithms and analysis} \label{sec:algorithms}
\subsection{Adjoint-based methods} \label{subsec:adjoint_method}
\paragraph{Adjoint Matching} Adjoint Matching (AM) is a SOC deep learning loss introduced by \cite{domingoenrich2025adjoint}. For the exponential tilting problem with scalar $\sigma$, it takes the following form:
\begin{talign}\label{eq:lean_adjoint_matching}
\mathcal{L}_{\mathrm{Adj-Match}}(u; X^{\bar{u}}) 
:= \frac{1}{2} \int_0^{1} \big\| & u(X^{\bar{u}}_t,t)
+ \sigma(t) \tilde{a}(t;X^{\bar{u}}) \big\|^2 \, \mathrm{d}t, 
\qquad \bar{u} = \texttt{stopgrad}(u), \\
\label{eq:lean_adjoint_1}
\text{where }\quad \frac{\mathrm{d}}{\mathrm{d}t} \tilde{a}(t;X^{\bar{u}}) 
&= - \nabla_x b_{\sigma} (X^{\bar{u}}_t,t)^{\top} \tilde{a}(t;X^{\bar{u}}), \\ 
\label{eq:lean_adjoint_2}
\tilde{a}(1;X^{\bar{u}}) &= - \nabla_x r(X^{\bar{u}}_1).
\end{talign}
\cite{domingoenrich2025adjoint} refer to the ODE \eqref{eq:lean_adjoint_1}-\eqref{eq:lean_adjoint_2} as the lean adjoint ODE. They show that if $u$ is a critical point of the expected loss $\mathcal{L}_{\mathrm{Adj-Match}}(u) := \mathbb{E}[\mathcal{L}_{\mathrm{Adj-Match}}(u;X^{\bar{u}})]$
that is, if $\frac{\delta}{\delta u}\mathcal{L}_{\mathrm{Adj-Match}}(u) \equiv 0$, then $u$ is the optimal control $u^{\star}$.

\paragraph{Adjoint Sampling} Adjoint Sampling was introduced by \cite{havens2025adjoint} as a procedure based on Adjoint Matching to sample from unnormalized densities, with improved efficiency. When $p^{\mathrm{base}}$ is a Gaussian $\mathcal{N}(0,\sigma_1^2\mathrm{I})$, we have that $\bar{X}_t \sim \mathcal{N}(0, (\alpha^2_t \sigma_1^2 + \beta_t^2) \mathrm{I})$, which means that $b(x,t)$ defined in \eqref{eq:b_def} is a linear function of $x$: $b_{\sigma}(x,t) = \big(\kappa_t - \frac{\sigma(t)^2 + 2\eta_t}{2(\alpha^2_t \sigma_1^2 + \beta_t^2)}  \big) x := \chi_t x$. Hence, the adjoint ODE \eqref{eq:lean_adjoint_1} needs not be solved as it admits a closed form solution: $\tilde{a}(t;X^{\bar{u}}) = - \exp\big( \int_{t}^{1} \chi_s \, \mathrm{d}s \big) \nabla_x r(X^{\bar{u}}_1)$. To obtain a further speed-up, in Adjoint Sampling the loss is not evaluated at intermediate points of the trajectory $X^{\bar{u}}$, but rather at points $\bar{X}^{\bar{u}}_t = \alpha_t X^{\bar{u}}_1 + \beta_t \varepsilon$ obtained by noising the final iterate $X^{\bar{u}}_1$:
\begin{talign} \label{eq:expected_AS}
\mathcal{L}_{\mathrm{Adj-Sampl}}(u) 
:= \mathbb{E}_{\bar{X}^{\bar{u}}_t = \alpha_t X^{\bar{u}}_1 + \beta_t \varepsilon} \big[\frac{1}{2} \int_0^{1} \big\| & u(\bar{X}^{\bar{u}}_t,t)
\! - \! \exp\big( \int_{t}^{1} \chi_s \, \mathrm{d}s \big) \sigma(t) \nabla_x r(X^{\bar{u}}_1) \big\|^2 \, \mathrm{d}t \big]
\end{talign}
Each rollout $X^{\bar{u}}_1$ can be noised multiple times, which yields an algorithm that is much more efficient than AM, even though it is more restrictive because it only works for sampling.

The behavior of Adjoint Matching and Sampling depends heavily on the norm of the solution to the lean adjoint ODE. In the following proposition we bound the norm of $\tilde{a}(t;X^{\bar{u}})$ under convexity assumptions on $p^{\mathrm{base}}$.
\begin{proposition}[Norm of the lean adjoint state] \label{prop:norm_lean_adj}
    Let $\tilde{a}(t,X^u)$ be the solution of the lean adjoint ODE \eqref{eq:lean_adjoint_1}-\eqref{eq:lean_adjoint_2} with memoryless schedule $\sigma(t) = \sqrt{2\eta_t}$. Assume there exists $\sigma_1^2 > 0$ such that the density $p^{\mathrm{base}}$ is $\frac{1}{\sigma_1^2}$-strongly log-concave, i.e. $- \nabla^2 \log p^{\mathrm{base}}(x) \succeq \frac{1}{\sigma_1^2} \mathrm{I}$. Let $\chi_t := \kappa_t - \frac{2\eta_t}{\beta^2_t + \alpha^2_t \sigma_1^2}$. For all $t \in [0,1]$, we have that
    \begin{talign} \label{eq:norm_a_bound}
    \|\tilde{a}(t,X^u)\| \leq \exp\big( \int_t^1 \chi_s \, \mathrm{d}s \big) \| \nabla_x r(X^{u}_1)\|. 
    \end{talign}
    In particular, \emph{(i)} for the Föllmer schedule, $\exp\big( \int_t^1 \chi_s \, \mathrm{d}s \big) = \frac{\sigma_1^2}{(1-\bar{\alpha}_t)\sigma_0^2 + \bar{\alpha}_t \sigma_1^2}$, \emph{(ii)} for DDPM/DDIM, $\exp\big( \int_t^1 \chi_s \, \mathrm{d}s \big) = \frac{\sigma_1^2 \sqrt{\bar{\alpha}_t}}{(1-\bar{\alpha}_t)\sigma_0^2 + \bar{\alpha}_t \sigma_1^2}$, and \emph{(iii)} for Rectified Flow, $\exp\big( \int_t^1 \chi_s \, \mathrm{d}s \big) = \frac{\bar{\alpha}_t}{(1-\bar{\alpha}_t)^2 + \bar{\alpha}^2_t}$. Moreover, when $p^{\mathrm{base}} = \mathcal{N}(0,\sigma_1^2\mathrm{I})$ as in Adjoint Sampling, equation \eqref{eq:norm_a_bound} holds with equality.
\end{proposition}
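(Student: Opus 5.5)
The approach is a Grönwall-type energy estimate on $\|\tilde a\|^2$ along the backward lean adjoint ODE. The key input is a uniform upper bound $\nabla_x b_\sigma(x,t) \preceq \chi_t \mathrm{I}$, which comes from the fact that strong log-concavity is preserved under Gaussian convolution and scaling.

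\textbf{Step 1: Hessian bound on the score.} With the memoryless schedule $\sigma(t)^2 = 2\eta_t$ we have $b_\sigma(x,t) = \kappa_t x + 2\eta_t \mathfrak{s}_t(x)$, hence $\nabla_x b_\sigma = \kappa_t \mathrm{I} + 2\eta_t \nabla^2 \log p_t$. Here $p_t$ is the law of $\alpha_t Y + \beta_t\varepsilon$ with $Y\sim p^{\mathrm{base}}$. The law of $\alpha_t Y$ is $\frac{1}{\alpha_t^2\sigma_1^2}$-strongly log-concave. Convolving a $\frac{1}{a}$-strongly log-concave density with $\mathcal N(0,b\,\mathrm I)$ gives a $\frac{1}{a+b}$-strongly log-concave density. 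This is a standard fact, e.g.\ via Prékopa--Leindler, or via the Brascamp--Lieb/covariance representation $-\nabla^2\log p_t = \beta_t^{-2}(\mathrm I - \beta_t^{-2}\mathrm{Cov}(\beta_t\varepsilon \mid \bar X_t = x))$, bounding the conditional covariance by Brascamp--Lieb. It yields $-\nabla^2\log p_t \succeq \frac{1}{\beta_t^2+\alpha_t^2\sigma_1^2}\mathrm I$. Since $\eta_t\ge 0$, it follows that $\nabla_x b_\sigma(x,t) \preceq \chi_t \mathrm I$ for every $x$. I expect this step to be the main obstacle, since it is the only non-computational ingredient; I would cite the convolution-preservation result rather than reprove it.

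\textbf{Step 2: Grönwall.} Write $A_t := \nabla_x b_\sigma(X^u_t,t)$, which is symmetric as a Hessian plus a multiple of the identity. Then
\begin{talign*}
\frac{\mathrm d}{\mathrm dt}\|\tilde a(t)\|^2 = -2\,\tilde a(t)^\top A_t \tilde a(t) \ge -2\chi_t\|\tilde a(t)\|^2 .
\end{talign*}
Consequently $\frac{\mathrm d}{\mathrm dt}\big(e^{2\int_t^1\chi_s\mathrm ds}\|\tilde a(t)\|^2\big)\ge 0$. Integrating from $t$ to $1$ gives $\|\tilde a(t)\|^2 \le e^{2\int_t^1\chi_s\mathrm ds}\|\nabla r(X^u_1)\|^2$, which is \eqref{eq:norm_a_bound}. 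In the Gaussian case $A_t=\chi_t\mathrm I$ exactly, as computed in the Adjoint Sampling paragraph. The ODE then solves explicitly as $\tilde a(t) = -e^{\int_t^1\chi_s\mathrm ds}\nabla r(X_1)$, so equality holds.

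\textbf{Step 3: Explicit schedules.} Set $D_t := \beta_t^2 + \alpha_t^2\sigma_1^2$. Using $\kappa_t = \dot\alpha_t/\alpha_t$ and $2\eta_t = 2\beta_t^2\dot\alpha_t/\alpha_t - 2\beta_t\dot\beta_t$, compute
\begin{talign*}
\chi_t = \frac{\dot\alpha_t}{\alpha_t} - \frac{2\eta_t}{D_t} = \frac{\dot\alpha_t(\alpha_t^2\sigma_1^2-\beta_t^2) + 2\alpha_t\beta_t\dot\beta_t}{\alpha_t D_t}.
\end{talign*}
Compare this with $\frac{\mathrm d}{\mathrm dt}\log(D_t/\alpha_t) = \frac{2\beta_t\dot\beta_t+2\alpha_t\dot\alpha_t\sigma_1^2}{D_t}-\frac{\dot\alpha_t}{\alpha_t}$. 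Putting both over the common denominator $\alpha_t D_t$ shows they coincide, so $\chi_t = \frac{\mathrm d}{\mathrm dt}\log(D_t/\alpha_t)$. Hence, since $\alpha_1=1$ and $\beta_1=0$,
\begin{talign*}
\exp\big(\textstyle\int_t^1\chi_s\,\mathrm ds\big) = \frac{D_1/\alpha_1}{D_t/\alpha_t} = \frac{\sigma_1^2\alpha_t}{\beta_t^2+\alpha_t^2\sigma_1^2}.
\end{talign*}
Substituting $(\alpha_t,\beta_t)$ from the three subcases of \Cref{eq:FM_DDIM_Follmer} gives (i)--(iii) after cancelling common factors. For Föllmer, $\alpha_t=\bar\alpha_t$ and $\beta_t^2=\bar\alpha_t(1-\bar\alpha_t)\sigma_0^2$, so a factor $\bar\alpha_t$ cancels. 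For DDIM, $\alpha_t=\sqrt{\bar\alpha_t}$. For Rectified Flow the stated expression corresponds to $\sigma_0=\sigma_1=1$, and I would note this normalization explicitly.
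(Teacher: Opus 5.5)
Your proof is correct, and Steps 1--2 follow the same argument as the paper. The paper also writes $\frac{\mathrm d}{\mathrm dt}\|\tilde a\|^2$ through a Rayleigh quotient of $\nabla_x b_\sigma$ and bounds it using $\nabla_x b_\sigma = \kappa_t\mathrm I + 2\eta_t\nabla^2\log p_t \preceq \chi_t\mathrm I$. It gets the Hessian bound from Gaussian-convolution preservation of strong log-concavity, proved via Brascamp--Lieb (\Cref{thm:slc_convolution}, \Cref{cor:p_t_log_concavity}). Equality in the Gaussian case again comes from $\nabla_x b_\sigma = \chi_t\mathrm I$ holding exactly.

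Step 3 is where you differ. The paper integrates $\chi_s$ separately for each of the three schedules. Your identity $\chi_t = \frac{\mathrm d}{\mathrm dt}\log\big((\beta_t^2+\alpha_t^2\sigma_1^2)/\alpha_t\big)$ instead gives a schedule-independent closed form, $\exp(\int_t^1\chi_s\,\mathrm ds) = \sigma_1^2\alpha_t/(\beta_t^2+\alpha_t^2\sigma_1^2)$, for any $(\alpha_t,\beta_t)$ with $\alpha_1=1$ and $\beta_1=0$. Cases (i)--(iii) then follow by substitution. This is shorter and more general, and it independently confirms the paper's case-by-case formulas.

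One small correction: for Rectified Flow your formula gives $\sigma_1^2\bar\alpha_t/\big((1-\bar\alpha_t)^2\sigma_0^2+\bar\alpha_t^2\sigma_1^2\big)$, matching the paper's appendix. The expression stated in (iii) only needs $\sigma_0=\sigma_1$, not $\sigma_0=\sigma_1=1$, and $\sigma_0^2=\sigma_1^2$ is exactly the normalization the paper's proof uses.
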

The proof of \Cref{prop:norm_lean_adj} in \Cref{eq:bound_norm_lean} relies on bounding the spectrum of $\mathrm{Sym}(\nabla b_{\sigma})$, which is connected to the spectrum of $\nabla^2 \log p^{\mathrm{base}}$. While strong convexity of $p^{\mathrm{base}}$ is a strong condition in the fine-tuning case, since spectra are local properties, as long as the trajectory $X^{\bar{u}}$ spends most of the time $t \in [0,1]$ in regions where $p^{\mathrm{base}}_t$ is locally strongly log-concave (basins of $p^{\mathrm{base}}_t$), the norm $\|\tilde{a}(t,X^{\bar{u}})\|$ will decay accordingly. This explains the norm decay and consequent good performance of the algorithm in realistic settings (\Cref{sec:experiments}).

\subsection{Score matching methods}
In what follows, we include a review of existing score-based generative modeling methods (see derivations in \Cref{subsec:derivation_TSM}). These methods are designed to learn arbitrary distributions $p_{\mathrm{data}}$; and all require having access to samples from $p_{\mathrm{data}}$ as well as the noiseless score $\nabla \log p_{\mathrm{data}}$, except for Conditional Score Matching, which only relies on samples from $p_{\mathrm{data}}$. These loss functions follow from these three identities involving the score function of the distribution of $\bar{X}_t$ (see \Cref{lem:csi_tsi}):
\begin{talign} 
     \label{eq:csi} \tag{CSI}
        &\emph{Conditional score identity:} \qquad  \nabla \log p_t(x) = 
        - \frac{\int_{\mathbb{R}^d} (x - \alpha_t y)
        \mathcal{N}(x;\alpha_t y, \beta^2_t \mathrm{I}) \,
        p_{\mathrm{data}}(y) \, \mathrm{d}y}{\beta_t^2 \int_{\mathbb{R}^d} 
        \mathcal{N}(x;\alpha_t y, \beta^2_t \mathrm{I}) \, 
        p_{\mathrm{data}}(y) \, \mathrm{d}y},
     \\
    \label{eq:tsi} \tag{TSI}
        &\emph{Target score identity:} \qquad \nabla \log p_t(x) = \frac{\int_{\mathbb{R}^d} 
        \nabla \log p_{\mathrm{data}}(y) \mathcal{N}(x;\alpha_t y, \beta^2_t \mathrm{I}) p_{\mathrm{data}}(y) \, \mathrm{d}y}{\alpha_t \int_{\mathbb{R}^d} 
        \mathcal{N}(x;\alpha_t y, \beta^2_t \mathrm{I})
        p_{\mathrm{data}}(y) \, \mathrm{d}y}.
     \\
    \label{eq:nsi} \tag{NSI}
        &\emph{Novel score identity:} \qquad
        \nabla \log p_t(x) = \frac{\int_{\mathbb{R}^d} (\alpha_t \nabla \log p_{\mathrm{data}}(y) -  (x - \alpha_t y))
        \mathcal{N}(x;\alpha_t y, \beta^2_t \mathrm{I}) \,
        p_{\mathrm{data}}(y) \, \mathrm{d}y}{(\alpha_t^2 + \beta_t^2) \int_{\mathbb{R}^d} 
        \mathcal{N}(x;\alpha_t y, \beta^2_t \mathrm{I}) \, 
        p_{\mathrm{data}}(y) \, \mathrm{d}y}
\end{talign}
Observe that \eqref{eq:nsi} follows from summing $\alpha_t^2/(\alpha_t^2 + \beta_t^2)$ times \eqref{eq:tsi} and $\beta_t^2/(\alpha_t^2 + \beta_t^2)$ times \eqref{eq:csi}.

\paragraph{Target Score Matching} The Target Score Matching loss was proposed by \citep{debortoli2024target}, and is a consequence of the target score identity \eqref{eq:tsi}, and the fact that when $p_{\mathrm{data}}$ is the tilted distribution \eqref{eq:tilted}, its score is $\nabla 
\log p_{\mathrm{data}} = \nabla \log p^{\mathrm{base}} + \nabla r(x)$.
The loss function reads
\begin{talign} \label{eq:TSM_loss}
    \mathcal{L}_{\mathrm{TSM}}(\hat{s}) \! = \! \mathbb{E}_{\substack{Y \sim p_{\mathrm{data}}, \\ \bar{X}_t = \alpha_t Y + \beta_t \varepsilon}} \big[ \int_0^1 \| \hat{s}(\bar{X}_t,t) \! - \! \frac{1}{\alpha_t} \big( \nabla \log p^{\mathrm{base}}(Y) \! + \! \nabla r(Y) \big) \|^2 w(\bar{X}_t,t) \, \mathrm{d}t \big],
\end{talign}
where $w : \mathbb{R}^d \times [0,1] \to (0,+\infty)$ is an arbitrary weight function. 

\paragraph{Conditional Score Matching} The well-known Conditional Score Matching loss was used by the foundational works on diffusion models \citep{ho2020denoising,song2019generative}, and can be derived from the conditional score identity \eqref{eq:csi} analogously to the Target Score Matching loss. In our notation, the loss reads
\begin{talign} \label{eq:CSM_loss}
    \mathcal{L}_{\mathrm{CSM}}(\hat{s}) = \mathbb{E}_{\substack{Y \sim p_{\mathrm{data}}, \\ \bar{X}_t = \alpha_t Y + \beta_t \varepsilon}} \big[ \int_0^1 \| \hat{s}(\bar{X}_t,t) + \frac{\bar{X}_t - \alpha_t Y}{\beta^2_t} \|^2 w(\bar{X}_t,t) \, \mathrm{d}t \big],
\end{talign}
where $w$ is an arbitrary weight function as in equation \eqref{eq:TSM_exp} (see
\Cref{subsec:derivation_TSM}).

\paragraph{Novel Score Matching} The novel score matching loss was introduced by \cite{phillips2024particle}, and can be derived from the novel score identity \eqref{eq:nsi} analogously to the Target Score Matching loss. The loss function reads:
\begin{talign} \label{eq:NSM_loss}
    \mathcal{L}_{\mathrm{NSM}}(\hat{s})
    \! = \! 
    \mathbb{E}_{\substack{Y \sim p_{\mathrm{data}}, \\ \bar{X}_t = \alpha_t Y + \beta_t \varepsilon}} \big[ \int_0^1 \| \hat{s}(\bar{X}_t,t) \! - \! \frac{\alpha_t ( \nabla \log p^{\mathrm{base}}(Y) + \nabla r(Y) ) - ( \bar{X}_t  -  \alpha_t Y )}{\alpha_t^2 + \beta_t^2} \|^2 w(\bar{X}_t,t) \, \mathrm{d}t \big].
\end{talign}
\paragraph{Iterated Denoising Energy Matching} A fourth loss function which assumes access to the density $p_{\mathrm{data}}$ and thus can only be used for sampling is the iDEM loss \citep{akhound2024iterated}, defined as 
\begin{talign}
    \mathcal{L}_{\mathrm{iDEM}}(\hat{s}) \! = \! \mathbb{E}_{\substack{Y \sim p_{\mathrm{data}}, \\ \bar{X}_t = \alpha_t Y + \beta_t \varepsilon, \\ (\varepsilon_i)_{i=1}^n \sim \mathcal{N}(0,\mathrm{I})}} \bigg[ \int_0^1 \| \hat{s}(\bar{X}_t,t) \!- \! \frac{1}{\alpha_t} \frac{\sum_{i=1}^{n}  \nabla \log p_{\mathrm{data}}(\frac{\bar{X}_t - \beta_t \varepsilon_i}{\alpha_t}) p_{\mathrm{data}}(\frac{\bar{X}_t - \beta_t \varepsilon_i}{\alpha_t})}{\sum_{i=1}^{n} p_{\mathrm{data}}(\frac{\bar{X}_t - \beta_t \varepsilon_i}{\alpha_t})} \|^2 \, \mathrm{d}t \bigg].
\end{talign}
It can be viewed as a biased approximation of the quantity 
\begin{talign} \label{eq:idem_tsm}
\mathbb{E}_{Y \sim p_{\mathrm{data}}, \bar{X}_t = \alpha_t Y + \beta_t \varepsilon} \big[ \int_0^1 \| \hat{s}(\bar{X}_t,t) - \mathbb{E} \big[ \frac{1}{\alpha_t} \big( \nabla \log p^{\mathrm{base}}(Y) + \nabla r(Y) \big) \, | \, \bar{X}_t\big] \|^2 \, \mathrm{d}t \big],
\end{talign}
which is equal to the bias term of the Target Score Matching loss \eqref{eq:TSM_loss} (see \eqref{eq:v_ft}).

\paragraph{Solving the exponential tilting problem with score-based methods}
Naturally, the score-based methods presented above can also be used to sample from the tilted distribution $p_{\mathrm{data}}(x) \propto p^{\mathrm{base}}(x) \exp(r(x))$ provided that we have samples from this distribution, which may be obtained e.g. using SMC methods \citep{phillips2024particle}. In practice, the score-based methods that make explicit use of the score $\nabla \log p_{\mathrm{data}} = \nabla \log p^{\mathrm{base}}(x) + \nabla r(x)$ can be used even when we initially are only able to sample from $p^{\mathrm{base}}$, as it is expected that if we keep on sampling using the learned model, the generated distribution will converge to the tilted distribution. In fact, some methods such as iDEM are introduced to work on-policy in this fashion. However, there are no theoretical guarantees that the on-policy versions of these algorithms converge to the tilted distribution. 

\subsection{Algorithm comparison through bias-variance decompositions}

In this section, we show that all the loss functions presented in \Cref{sec:algorithms} can be written as the sum of a KL divergence term between a learned process and an optimal process (the \emph{bias} term), and a positive term that has no contribution to the expected gradient (the \emph{variance} term).
To compare all algorithms on an equal footing, we write the learned fine-tuned generative SDE with memoryless schedule $\sigma(t) = \sqrt{2 \eta_t}$, i.e.
$dX_t = v_{\mathrm{ft}}(X_t,t) \, dt + \sqrt{2\eta_t} \, dB_t$, with $X_0 \sim \mathcal{N}(0,\sigma_0^2 \mathrm{I})$.
We set the importance weights of each loss function such that it takes the form:
\begin{talign} \label{eq:loss_general}
    \mathcal{L}(v_{\mathrm{ft}}) = \mathbb{E} \big[ \frac{1}{2} \int_0^{1} \big\| v_{\mathrm{ft}}(X_t,t) - \xi(t,X) \big\|^2 \frac{1}{2\eta_t} \, \mathrm{d}t \big],
\end{talign}
where the process $X$ and the vector field $\xi$ depend on the specific algorithm (see \Cref{prop:bv_adjoint} and \Cref{prop:bv_SM}).
Observe that this general loss can be further rewritten as
\begin{talign} \label{eq:v_ft}
    \mathcal{L}(v_{\mathrm{ft}}) = \underbrace{\mathbb{E} \big[ \int_0^{1} \big\| v_{\mathrm{ft}}(X_t,t) - \mathbb{E}[\xi(t,X) | X_t ] \big\|^2 \frac{1}{2\eta_t} \, \mathrm{d}t \big]}_{\mathrm{Bias}} + \underbrace{\mathbb{E} \big[ \int_0^{1} \big\| \xi(t,X) - \mathbb{E}[\xi(t,X) | X_t ] \big\|^2 \frac{1}{2\eta_t} \, \mathrm{d}t \big]}_{\mathrm{Variance}}.
\end{talign}
In certain instances, the bias term equals the (forward or reverse) KL divergence between the path measures of the optimal and learned processes, through the Girsanov theorem. The variance term does not contribute to the expected gradient, but it adds noise to the empirical gradient; it is desirable to minimize its contribution.

\begin{proposition}[Bias-variance decomposition for Adjoint Matching and Sampling] \label{prop:bv_adjoint}
    The Adjoint Matching and Sampling loss functions in \eqref{eq:lean_adjoint_matching} and \eqref{eq:expected_AS} fit the general form \eqref{eq:loss_general} by setting $X = X^{\bar{u}}, \bar{X}^{\bar{u}}$ resp., and identifying $v(x,t)/\sqrt{2\eta_t} = u(x,t)$, $\xi(t,X)/\sqrt{2\eta_t} = \sqrt{2\eta_t} \tilde{a}(t,X)$.
    Assume that the density $p^{\mathrm{base}}$ is $\frac{1}{\sigma_1^2}$-strongly log-concave, i.e. $- \nabla^2 \log p^{\mathrm{base}}(x) \succeq \frac{1}{\sigma_1^2} \mathrm{I}$. Then, 
    \begin{enumerate}[label=(\roman*),left=0pt,topsep=-2pt,itemsep=-1pt]
    \item The variance term in \eqref{eq:v_ft} admits the upper-bound 
    $\frac{\sigma_1^2}{2} \mathbb{E}_{Y \sim p^{\mathrm{base}}} [ \| \nabla_x r(Y)\|^2]$ 
    for three subcases of Flow Matching considered in \Cref{eq:FM_DDIM_Follmer}. \item If $p^{\mathrm{base}} = \mathcal{N}(0,\sigma_1^2\mathrm{I})$ as in AS, the variance term is $\frac{\sigma_1^2}{2} \mathrm{Tr}\big(\mathrm{Cov}_{Y \sim \mathcal{N}(0,\sigma_1^2\mathrm{I})} [ \nabla_x r(Y)]\big)$. 
    \item If we consider the AM loss function in which the expectation is with respect to the optimal process \eqref{eq:generative_SDE}, then the bias term is equal to the KL divergence $\mathrm{KL}(\mathbb{P}^{\star}||\mathbb{P}^{u})$ between the optimal measure and the measure of $X^u$.  
    \end{enumerate}
\end{proposition}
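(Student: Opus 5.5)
We first check that the losses fit the form \eqref{eq:loss_general}. With $v(x,t) = \sqrt{2\eta_t}\,u(x,t)$ and $\xi(t,X) = -2\eta_t \tilde{a}(t,X)$ (the sign follows the convention in \eqref{eq:lean_adjoint_matching}), we have $\frac{1}{2}\|v-\xi\|^2/(2\eta_t) = \frac{1}{2}\|u + \sqrt{2\eta_t}\tilde{a}\|^2$. Since $\sigma(t)=\sqrt{2\eta_t}$, this is exactly the integrand of \eqref{eq:lean_adjoint_matching}. For AS the same identification applies, with $\tilde{a}$ given in closed form and $X = \bar{X}^{\bar{u}}$. The decomposition \eqref{eq:v_ft} is then the usual orthogonality of conditional expectation.

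\textbf{Part (i).} We bound the variance by the second moment: $\mathrm{Var}\le \mathbb{E}\int_0^1 \|\xi\|^2/(2\eta_t)\,dt = \mathbb{E}\int_0^1 2\eta_t\|\tilde{a}(t,X)\|^2\,dt$. Applying \Cref{prop:norm_lean_adj} gives
\[
\mathrm{Var} \le \mathbb{E}\big[\|\nabla r(X_1)\|^2\big]\int_0^1 2\eta_t \exp\big(2\textstyle\int_t^1\chi_s\,ds\big)\,dt .
\]
It remains to compute this time integral for each schedule.

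For Föllmer, $2\eta_t = \dot{\bar\alpha}_t\sigma_0^2$, and the integral becomes
\[
\int_0^1 \frac{\sigma_1^4\sigma_0^2\,\dot{\bar\alpha}}{((1-\bar\alpha)\sigma_0^2+\bar\alpha\sigma_1^2)^2}\,dt .
\]
Substituting $a=\bar\alpha_t$, this integrates to $\sigma_1^4\sigma_0^2\big[\frac{1}{(\sigma_0^2-\sigma_1^2)((1-a)\sigma_0^2+a\sigma_1^2)}\big]_0^1 = \sigma_1^2$. DDIM ($2\eta_t=\dot{\bar\alpha}\sigma_0^2/\bar\alpha$, with the factor $\bar\alpha$ in the squared exponential cancelling it) gives the identical integral, hence also $\sigma_1^2$. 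For Rectified Flow the integrand is $\frac{2(1-a)a\,\sigma_0^2\,\dot{a}}{((1-a)^2+a^2)^2}$; there, and in (i)–(iii) generally, I would redo the computation directly from $\chi_t$ with general $\sigma_0,\sigma_1$, to guard against normalization typos in the stated closed forms. The expected value is again $\sigma_1^2$.

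A subtle point is which measure $X_1$ is drawn from. The claim uses $Y\sim p^{\mathrm{base}}$, so the bound should be read for rollouts of the base/reference process. Alternatively one could use that $\mathrm{Var}\le \mathbb{E}\|\xi - c\|^2$ for any $X_t$-measurable $c$. I expect this measure issue, together with the constant in the RF computation, to be the main obstacle.

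\textbf{Part (ii).} In the Gaussian case $\tilde a(t,X) = -e^{\int_t^1\chi}\nabla r(X_1)$ exactly, so $\xi$ depends only on $X_1$. The conditional expectation given $X_t$ is then $\mathbb{E}[\nabla r(X_1)\mid X_t]$, and the variance becomes
\[
\int_0^1 2\eta_t e^{2\int_t^1\chi}\,\mathbb{E}\,\mathrm{Tr}\,\mathrm{Cov}\big(\nabla r(X_1)\mid X_t\big)\,dt .
\]
Under the memoryless property applied to the base process, $X_1$ is nearly independent of $X_t$ for small $t$. To make the time-weighted integral collapse to $\sigma_1^2\,\mathrm{Tr}\,\mathrm{Cov}$, I would use the law of total variance in the time-reversed OU picture, in which the weight $2\eta_t e^{2\int\chi}$ is exactly $d/dt$ of the conditional-variance scale. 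Taking the $\frac{1}{2}$ normalization into account gives the stated $\frac{\sigma_1^2}{2}$.

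\textbf{Part (iii).} When $X\sim\mathbb{P}^\star$, the AM analysis of \cite{domingoenrich2025adjoint} gives $\mathbb{E}[-\sqrt{2\eta_t}\tilde a\mid X_t] = u^\star$. The bias is therefore $\mathbb{E}_{\mathbb{P}^\star}\int \frac{1}{2}\|u-u^\star\|^2\,dt$, which equals $\mathrm{KL}(\mathbb{P}^\star\|\mathbb{P}^u)$ by Girsanov.
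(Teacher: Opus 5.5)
\textbf{Part (ii).} You cannot complete part (ii) as planned, because the equality it targets is false. The step that fails is the ``collapse'' of the time-weighted integral. Your structural observation is correct: for the Gaussian base the rollout is linear, so $c_t:=\mathrm{Var}(X_1^{(i)}\mid X_t)=\int_t^1 2\eta_s e^{2\int_s^1\chi_r\,\mathrm{d}r}\,\mathrm{d}s$, and the weight is $2\eta_t e^{2\int_t^1\chi_s\,\mathrm{d}s}=-\dot c_t$.

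However, this weight multiplies $g_t:=\mathbb{E}\,\mathrm{Tr}\,\mathrm{Cov}(\nabla_x r(X_1)\mid X_t)$, which is not constant in $t$. It equals $\mathrm{Tr}\,\mathrm{Cov}(\nabla_x r(Y))$ at $t=0$ (by memorylessness) and decreases to $0$ as $t\to 1$. With the $\tfrac12$ of \eqref{eq:loss_general}, the variance term is $\int_0^1\tfrac12(-\dot c_t)\,g_t\,\mathrm{d}t\le\tfrac{\sigma_1^2}{2}\mathrm{Tr}\,\mathrm{Cov}(\nabla_x r(Y))$, with strict inequality unless $\nabla_x r(Y)$ is a.s.\ constant.

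A concrete check: take F\"ollmer with $\bar\alpha_t=t$, $\sigma_0=\sigma_1=1$, $d=1$. Then $\chi\equiv 0$, $\eta_t\equiv\tfrac12$, $X_t=B_t$ and $\tilde a(t)\equiv-\nabla_x r(B_1)$, so the variance term is $\tfrac12\int_0^1\mathbb{E}\,\mathrm{Var}(\nabla_x r(B_1)\mid B_t)\,\mathrm{d}t$.
\begin{itemize}
\item For $r(x)=-\tfrac12x^2$ this gives $\tfrac14$, against the claimed $\tfrac12$.
\item For $r(x)=-\tfrac14x^4$ it gives $\tfrac92$, against the claimed $\tfrac{15}{2}$.
\item Without the $\tfrac12$ normalization one gets $\tfrac12$ and $9$, so no normalization makes (ii) an identity.
\end{itemize}
What you can prove is the upper bound. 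It follows immediately from $\mathbb{E}[\mathrm{Cov}(\cdot\mid X_t)]\preceq\mathrm{Cov}(\cdot)$ together with the time integral of (i).

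The paper does not establish the equality either. Its displayed Gaussian ``equality'' has integrand $\|\nabla_x r(X_1)\|^2-\mathbb{E}[\|\nabla_x r(X_1)\|^2\mid X_t]$, which has zero mean by the tower property; the subtracted term should be $\|\mathbb{E}[\nabla_x r(X_1)\mid X_t]\|^2$. The paper also never reaches the trace-covariance expression.

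\textbf{Parts (i) and (iii).} Elsewhere you track the paper, with one slip. In (i) you follow its route: discard the conditional mean, apply \Cref{prop:norm_lean_adj}, and integrate the deterministic weight. But you use weight $2\eta_t$, i.e.\ you drop the $\tfrac12$ of \eqref{eq:loss_general}. Your own numbers therefore give $\sigma_1^2\,\mathbb{E}\|\nabla_x r\|^2$, twice the claim. Keep the $\tfrac12$, giving weight $\eta_t$; this is what the paper's proof uses, and what (iii) needs for the bias to equal the KL exactly.

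Your variance-scale identity also makes the schedule-by-schedule integrals unnecessary, including the Rectified Flow one you left open. For the auxiliary $\mathcal{N}(0,\sigma_1^2\mathrm{I})$ base, memorylessness gives $\int_0^1 2\eta_t e^{2\int_t^1\chi_s\,\mathrm{d}s}\,\mathrm{d}t=c_0=\mathrm{Var}(X_1^{(i)})=\sigma_1^2$ for every memoryless schedule.

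Your remark on the measure is correct. The paper's argument actually yields a bound in terms of $\mathbb{E}\|\nabla_x r(X^{\bar u}_1)\|^2$, which equals the stated $p^{\mathrm{base}}$ expectation only for base rollouts.

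For (iii) the paper gives no argument. Yours is correct: the AM fixed-point identity $u^\star(x,t)=-\sqrt{2\eta_t}\,\mathbb{E}_{\mathbb{P}^\star}[\tilde a(t;X)\mid X_t=x]$, combined with Girsanov and the common initial law $p_0$.
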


As we remark in \Cref{subsec:adjoint_method}, the strong convexity assumption on $p^{\mathrm{base}}$ is strong for the fine-tuning case, but a similar behavior is expected to hold in general. Note the AM loss function with optimal process expectation can be implemented via a Girsanov factor, and was first studied by \cite[App.~C.4]{domingoenrich2024stochastic} under the name SOCM-Adjoint method. Next, we prove a similar result for score matching methods.

\begin{proposition}[Bias-variance decomposition for score matching methods] \label{prop:bv_SM}
    The Target, Conditional and Novel Score Matching loss functions in \eqref{eq:TSM_loss}, \eqref{eq:CSM_loss} and \eqref{eq:NSM_loss} fit the general form \eqref{eq:loss_general} by setting  $X = \bar{X}$ (the reference flow) and weight function $w(x,t) = \eta_t$, identifying $\hat{s}(x,t) = \frac{v_{\mathrm{ft}}(x,t) - \kappa_t x}{2 \eta_t}$, and $\xi(t,X) = \kappa_t \bar{X}_t + \frac{2\eta_t}{\alpha_t} \big( \nabla \log p_{\mathrm{base}}(Y) + \nabla r(Y) \big)$, $\xi(t,X) = \kappa_t \bar{X}_t - \frac{ 2\eta_t ( \bar{X}_t - \alpha_t Y )}{\beta_t^2}$, and $\xi(t,X) = \kappa_t \bar{X}_t + \frac{2 \eta_t \big( \alpha_t ( \nabla \log p_{\mathrm{base}}(Y) + \nabla r(Y) ) - ( \bar{X}_t - \alpha_t Y ) \big)}{\alpha_t^2 + \beta_t^2}$, respectively. Moreover,
    \begin{enumerate}[label=(\roman*),left=0pt,topsep=0pt,itemsep=0pt]
        \item For TSM and CSM, the variance term in \eqref{eq:v_ft} is infinite for the three subcases of Flow Matching considered in \Cref{eq:FM_DDIM_Follmer}.
        \item For NSM, the variance term admits the bounds shown in \Cref{table:variance_terms} (the bounds have a removable discontinuity at $\sigma_0 = 1$).
        \item For TSM, CSM and NSM, the bias term is equal to the KL divergence $\mathrm{KL}(\mathbb{P}^{\star}||\mathbb{P}^{\hat{s}})$ between the optimal measure and the measure of $X^{\hat{s}}$, the process induced by $\hat{s}$.
    \end{enumerate}
\end{proposition}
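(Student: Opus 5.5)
First I will check that each loss fits the form \eqref{eq:loss_general}. Substitute $\hat{s} = (v_{\mathrm{ft}} - \kappa_t x)/(2\eta_t)$ and $w = \eta_t$ into \eqref{eq:TSM_loss}, \eqref{eq:CSM_loss} and \eqref{eq:NSM_loss}. Each integrand then becomes $\|v_{\mathrm{ft}}(\bar X_t,t) - \xi(t,\bar X)\|^2 \eta_t/(4\eta_t^2) = \tfrac12\|v_{\mathrm{ft}} - \xi\|^2/(2\eta_t)$, with the three stated choices of $\xi$. This is pure algebra, since the $\kappa_t \bar X_t$ term is absorbed into $\xi$. The decomposition \eqref{eq:v_ft} then follows from the usual Pythagorean identity for conditional expectations.

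For (iii), I will use the identities \eqref{eq:csi}, \eqref{eq:tsi} and \eqref{eq:nsi} with $p_{\mathrm{data}} = p^\star$. They show that in all three cases
\[
\mathbb{E}[\xi \mid \bar X_t = x] = \kappa_t x + 2\eta_t \nabla\log p^\star_t(x),
\]
which is the drift of the memoryless generative SDE \eqref{eq:memoryless_generative_SDE} for $p^\star$. Since $\bar X_t$ has the marginals of the optimal process, the bias equals
\[
\mathbb{E}_{\mathbb{P}^\star}\Big[\int_0^1 \|v_{\mathrm{ft}} - v^\star\|^2/(2\cdot 2\eta_t)\,dt\Big].
\]
By Girsanov with diffusion $\sqrt{2\eta_t}$, this is $\mathrm{KL}(\mathbb{P}^\star\|\mathbb{P}^{\hat s})$, because both processes start at the same Gaussian. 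The factor convention needs checking, since the stated bias in \eqref{eq:v_ft} omits the $\tfrac12$.

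For the variance terms, condition on $\bar X_t = x$ and use the posterior of $Y$. Up to a factor $(2\eta_t)^2/(2\eta_t)$ times coefficients, the variance integrand is the posterior trace-covariance of $\nabla\log p^\star(Y)/\alpha_t$ (TSM), of $(x-\alpha_t Y)/\beta_t^2$ (CSM), or of the NSM combination. I would then show the time integral diverges.

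For CSM, as $t\to 1$ we have $\beta_t\to 0$. The quantity $\varepsilon/\beta_t$ has conditional covariance of order $1/\beta_t^2$: the posterior over $\varepsilon$ given $x$ is nondegenerate at scale one while $\beta_t$ is small, so the posterior covariance of $Y$ is $\approx (\beta_t/\alpha_t)^2 \mathrm{Cov}$. The integrand is then about $\eta_t \, d/\beta_t^2$. In each subcase this behaves like $\dot{\bar\alpha}_t/(1-\bar\alpha_t)$ near $t=1$, which is non-integrable because it integrates to a logarithm. For TSM the divergence is at $t\to 0$. There $\alpha_t\to 0$ and the posterior of $Y$ tends to the prior $p^\star$, so the integrand is $\approx 2\eta_t \mathrm{Tr}\,\mathrm{Cov}_{p^\star}(\nabla\log p^\star)/\alpha_t^2$. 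With $\eta_t/\alpha_t^2 \sim \dot{\bar\alpha}_t/\bar\alpha_t^{2}$ (Föllmer) or $\dot{\bar\alpha}_t/\bar\alpha_t$ (DDIM, RF), this is non-integrable at $0$. This holds unless $\nabla\log p^\star$ is constant, which is impossible for a density.

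For NSM (ii), I would bound the posterior covariance of
\[
\big(\alpha_t\nabla\log p^\star(Y) - (x-\alpha_t Y)\big)/(\alpha_t^2+\beta_t^2)
\]
by its second moment. I would use the fact that the NSM target is a weighted average: near $t=0$ the weight on the TSM part is $\alpha_t^2$, which kills the $1/\alpha_t^2$. Near $t=1$ the CSM part is multiplied by $\beta_t^2$, which kills the $1/\beta_t^2$. This gives the integrand
\[
\frac{2\eta_t}{(\alpha_t^2+\beta_t^2)^2}\Big(\alpha_t^2\,\mathbb{E}\|\nabla\log p^\star\|^2 + \beta_t^2 d\Big),
\]
up to posterior-variance corrections. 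I would integrate this explicitly for each schedule to produce the table entries. The $\sigma_0=1$ removable discontinuity arises from integrals such as $\int \dot{\bar\alpha}/(\bar\alpha^2+\sigma_0^2(1-\bar\alpha)\cdots)$, which produce $\log$-ratio expressions divided by $(1-\sigma_0^2)$.

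The main obstacle is making the TSM/CSM divergence rigorous. That requires a lower bound on the posterior covariance near the endpoints rather than heuristics. I would try a lower bound via the law of total variance: condition additionally on $Y$ for CSM, and use continuity of the posterior as $\alpha_t\to0$ for TSM, invoking Fatou's lemma.
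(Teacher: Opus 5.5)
Your plan for (ii) fails as written: the bound you propose is $+\infty$ in all three subcases. You control the conditional variance of the NSM target by the unconditional second moment $\alpha_t^2 J(p^\star)+\beta_t^2 d$ of $\alpha_t\nabla\log p^\star(Y)-\beta_t\varepsilon$, where $J$ denotes Fisher information. In every subcase $\alpha_t/\beta_t\to 0$ as $t\to 0$, so near $t=0$ your integrand is of order $\eta_t d/\beta_t^2$. That quantity equals $d\,\dot{\bar\alpha}_t/(2\bar\alpha_t(1-\bar\alpha_t))$ for F\"ollmer and DDIM and $d\,\dot{\bar\alpha}_t/(\bar\alpha_t(1-\bar\alpha_t))$ for Rectified Flow, and none of these is integrable at $0$. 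The convex-combination heuristic removes the $1/\alpha_t^2$ and $1/\beta_t^2$ of the TSM and CSM targets. It does not remove the non-integrability of $\eta_t/\beta_t^2$ itself at $t=0$.

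The missing step, which is the one the paper uses, is that $\bar X_t$ is measurable with respect to the conditioning. Writing $-(\bar X_t-\alpha_t Y)=-\bar X_t+\alpha_t Y$ gives
\[
\xi-\mathbb{E}[\xi\mid\bar X_t]=\frac{2\eta_t\alpha_t}{\alpha_t^2+\beta_t^2}\big(Z-\mathbb{E}[Z\mid\bar X_t]\big),\qquad Z=\nabla\log p^{\mathrm{base}}(Y)+\nabla r(Y)+Y .
\]
The noise therefore enters only through $\alpha_t Y$, and the extra factor $\alpha_t^2$ cancels the singularity at $t=0$. Bounding the conditional variance by $\mathbb{E}\|Z\|^2$ then yields the factor $\mathbb{E}\|\nabla\log p^{\mathrm{base}}(Y)+\nabla r(Y)+Y\|^2$ in \Cref{table:variance_terms}; a bound in terms of $J(p^\star)$ and $d$ cannot produce it.

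Be aware of what explicit integration will give. With the $\tfrac12$ convention of \eqref{eq:loss_general}, the weight is $\eta_t\alpha_t^2/(\alpha_t^2+\beta_t^2)^2=\rho_t\dot\rho_t/(1+\rho_t^2)^2$ with $\rho_t=\alpha_t/\beta_t$, and this integrates to $\tfrac12$ for every schedule. The paper's appendix writes $\eta_t\alpha_t^2/(\alpha_t^2+\beta_t^2)$, with the denominator not squared, and that is the source of the $\sigma_0$-dependent table entries. They agree with $\tfrac12$ only in the variance-preserving case (DDIM, $\sigma_0=1$). For a Gaussian target $p^\star=\mathcal{N}(0,s^2\mathrm{I})$, the ratio of the variance term to $\mathbb{E}\|Z\|^2$ tends to $\tfrac12$ as $s\to0$, which exceeds, for example, the F\"ollmer entry $\tfrac14$ at $\sigma_0=1$. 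So you will obtain $\tfrac12\mathbb{E}\|Z\|^2$ in all three columns, not the tabulated constants.

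For (i), your approach is more careful than the paper's. The appendix only shows that the upper bounds $\int_0^1\eta_t\alpha_t^{-2}\,\mathrm{d}t$ and $\int_0^1\eta_t\beta_t^{-4}\,\mathrm{d}t$ diverge, which does not by itself prove the variance is infinite. Conditioning on $Y$ in addition to $\bar X_t$ will not give you the lower bound, since $\varepsilon$ then becomes deterministic, and Fatou is unnecessary. Let $G=\nabla\log p^\star(Y)$ and let $p_t$ be the law of $\bar X_t$. The law of total variance together with \eqref{eq:tsi} and \eqref{eq:csi} gives the exact identities $\mathbb{E}\|G-\mathbb{E}[G\mid\bar X_t]\|^2=J(p^\star)-\alpha_t^2J(p_t)$ and $\mathbb{E}\|\varepsilon-\mathbb{E}[\varepsilon\mid\bar X_t]\|^2=d-\beta_t^2J(p_t)$. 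Jensen applied to the same identities gives $J(p_t)\le d/\beta_t^2$ and $J(p_t)\le J(p^\star)/\alpha_t^2$. Assuming $0<J(p^\star)<\infty$, the TSM integrand is therefore at least $\tfrac{\eta_t}{2\alpha_t^2}J(p^\star)$ for small $t$, and the CSM integrand at least $\tfrac{\eta_t d}{2\beta_t^2}$ for $t$ near $1$; both are non-integrable.

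A minor correction: $\eta_t/\alpha_t^2$ is proportional to $\dot{\bar\alpha}_t/\bar\alpha_t^2$ for DDIM and to $(1-\bar\alpha_t)\dot{\bar\alpha}_t/\bar\alpha_t^3$ for Rectified Flow, not to $\dot{\bar\alpha}_t/\bar\alpha_t$. This only strengthens the divergence.

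For (iii), the paper gives no written argument. Your route is the right one: the three identities give $\mathbb{E}[\xi\mid\bar X_t]=\kappa_t\bar X_t+2\eta_t\nabla\log p_t(\bar X_t)$, the marginals match, and Girsanov finishes. Your remark that the factor $\tfrac12$ must be restored in \eqref{eq:v_ft} is correct.
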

The reason that the variance term is infinite for TSM is that its integrand blows up at $t=0$, and for CSM it blows up both at $t=0$ and $t=1$. Surprisingly, NSM manages to avoid all blow-ups.
The infinite variance terms for TSM and CSM means that these methods cannot be run with weight $w(x,t) = \eta_t$, i.e. by optimizing the expected loss function $\mathrm{KL}(\mathbb{P}^{\star}||\mathbb{P}^{\hat{s}})$, as the noise would be infinite (up to numerical aspects). Of course, that does not preclude using different weight functions, but those other weight functions will likely not yield a loss function with a probabilistic interpretation in terms of path measures like $w(x,t) = \eta_t$ does. Lastly, iDEM has a similar behavior to TSM, by the connection that we point out in \eqref{eq:idem_tsm}.

\vspace{-3pt}
\begin{table}[h]
\centering
\resizebox{\linewidth}{!}{
\begin{tabular}{lccc}
\toprule
\textbf{Method} & \textbf{Föllmer} & \textbf{DDIM/DDPM} & \textbf{Rectified Flow} \\
\midrule
AM/AS & $\frac{\sigma_1^2}{2} \mathbb{E} \big[ \| \nabla_x r(Y)\|^2 \big]$ & $\frac{\sigma_1^2}{2} \mathbb{E} \big[ \| \nabla_x r(Y)\|^2 \big]$ & $\frac{\sigma_1^2}{2} \mathbb{E} \big[ \| \nabla_x r(Y)\|^2 \big]$ \\
TSM              & $+\infty$ & $+\infty$ & $+\infty$ \\
CSM              & $+\infty$ & $+\infty$ & $+\infty$ \\
NSM                               & \small{$\substack{\big( \frac{\sigma_{0}^{2}}{2(1-\sigma_{0}^{2})}+ \frac{\sigma_{0}^{4}}{2(1-\sigma_{0}^{2})^{2}} \log\big(\sigma_{0}^{2}\big) \big) \\ \times \mathbb{E}\big[ \| \nabla \log p^{\mathrm{base}}(Y) + \nabla r(Y) + Y \|^2 \big]}$} & \small{$\substack{-\,\frac{\sigma_{0}^{2}}{2(1-\sigma_{0}^{2})} \log\bigl(\sigma_{0}^{2}\bigr) \\ \times \mathbb{E}\big[ \| \nabla \log p^{\mathrm{base}}(Y) + \nabla r(Y) + Y \|^2 \big]}$} & \small{$\substack{\frac{\sigma_0^2 (\pi-2)}{4} \\ \times \mathbb{E}\big[ \| \nabla \log p^{\mathrm{base}}(Y) + \nabla r(Y) + Y \|^2 \big]}$} \\
iDEM                               & $+\infty$ & $+\infty$ & $+\infty$ \\
\bottomrule
\end{tabular}
}
\caption{Upper bounds on the variance term in the bias-variance decomposition \eqref{eq:v_ft} for each method and Flow Matching subcase, with weight $w(x,t) = \eta_t$. For AM/AS, $Y \sim p^{\mathrm{base}}$, and for NSM, $Y \sim p^{\mathrm{data}} \propto p^{\mathrm{base}} e^{r}$. Entries marked $+\infty$ indicate that the integrand diverges at the time boundaries, precluding optimization of the path KL divergence.}
\label{table:variance_terms}
\end{table}

\vspace{-5pt}
\section{Thermodynamics-based algorithms and analysis} \label{subsec:thermodynamics_methods}

In this section, we adapt the methods from \citep{vargas2024transport} and \citep{albergo2025nets} to solve the thermodynamics formulation of the exponential tilting problem as described in \Cref{subsec:exponential_tilting}.
Relying on similar tools, we also prove novel versions of the escorted Crooks fluctuation theorem and the Jarzynski equality tailored to the exponential tilting dynamics.

For an arbitrary vector field $v$, let $X^v$ be the solution of the SDE \eqref{eq:X_v_SDE}, and let $\cev{X}^v$ be the solution of
\begin{talign} \label{eq:leftX_v_SDE}
    \mathrm{d}\cev{X}^v_t = \big( \cev{b}^r_{\sigma}(\cev{X}^v_t,t) + v(\cev{X}^v_t,t) \big) \, \mathrm{d}t + 
    \sigma(t) \overleftarrow{\mathrm{d}W_t}, \quad 
    \begin{cases}
    \cev{X}^v_0 \sim p^{\star}_1 \propto p^{\mathrm{base}}_1 \exp(r_1), \\ \cev{b}^r_{\sigma}(x,t) \! := \! \kappa_t x \! + \! \big( \eta_t \! - \! \frac{\sigma(t)^2}{2} \big) \big( \mathfrak{s}_t(x) \! + \! \nabla r_t(x) \big).
    \end{cases}
\end{talign}
where $\overleftarrow{\mathrm{d}W_t}$ denotes the backward Itô differential, i.e. $\cev{X}^v_t$ is the continuous-time limit of the backward Euler-Maruyama update $y_{\ell - 1} = y_{\ell} + \Delta t \gamma_{\ell \Delta t}^{-}(y_{\ell}) + \sqrt{\Delta t} \sigma(\ell \Delta t) \xi_{\ell}, \ \xi_{\ell} \sim \mathcal{N}(0,I)$. Let $\vec{\mathbb{P}}^{v}$ and $\cev{\mathbb{P}}^{v}$ be the the path measures of $X^v$ and $\cev{X}^v$. If $v = v^{\star}$ is such that $X^{v^{\star}} \sim p^{\star}_t(x) \propto p^{\mathrm{base}}_t(x) \exp(r_t(x))$, Nelson's relation (\Cref{prop:nelson}) implies that $\vec{\mathbb{P}}^{v^{\star}} = \cev{\mathbb{P}}^{v^{\star}}$. By the reverse implication of Nelson's relation, the reciprocal statement also holds: if $\vec{\mathbb{P}}^{v} = \cev{\mathbb{P}}^{v}$, then $X^{v} \sim p^{\star}_t(x) \propto p^{\mathrm{base}}_t(x) \exp(r_t(x))$ and thus $v = v^{\star}$. Hence, any divergence $D$ on path measures gives rise to a loss function $\mathcal{L}_{D}(v) = D(\vec{\mathbb{P}}^{v}||\cev{\mathbb{P}}^{v})$
whose only minimizer is $v = v^{\star}$. The following proposition shows the loss functions resulting from the KL divergence and the log-variance divergence. Its proof, which involves computing $\log \frac{\mathrm{d}\vec{\mathbb{P}}^{v}}{\mathrm{d}\cev{\mathbb{P}}^{v}} (X^v)$, can be found in \eqref{proof:cmcd_exponential}.

\begin{proposition}[CMCD loss function for exponential tilting] \label{prop:cmcd_exponential}
    The CMCD loss functions for the exponential tilting problem based on the KL divergence the log-variance divergence read, respectively:
    \begin{talign}
    \begin{split}
        &\mathcal{L}_{\mathrm{KL-CMCD}}(v) = 
        \mathbb{E}\big[ \log \frac{\mathrm{d}\vec{\mathbb{P}}^{v}}{\mathrm{d}\cev{\mathbb{P}}^{v}} (X^v) \big] \\ &= \!
        \mathbb{E}\Big[
        \! - \! \int_0^1 \sigma(t)^{-1} \langle v(X^v_t,t) \! + \! (\eta_t \! - \! \frac{\sigma(t)^2}{2}) \nabla r_t(X^v_t),\,\overleftarrow{\mathrm{d}W_t}\rangle \! - \! r(X^v_1)
        \\ &\qquad + \int_0^1 
        \big( \! - \! \langle v(X^v_t,t),\,\mathfrak{s}_t(X^v_t)\rangle
        \! + \! \big( \tfrac{\sigma(t)^2}{2} \! - \! \eta_t \big)\,\langle \nabla r_t(X^v_t),\,\mathfrak{s}_t(X^v_t)\rangle
        \! + \! \tfrac{\sigma(t)^2}{2}\,\|\nabla r_t(X^v_t)\|^2 \big)
        \,\mathrm{d}t
        \Big],
    \end{split}
    \end{talign}
    \begin{talign}
    \begin{split}
        &\mathcal{L}_{\mathrm{Var-CMCD}}(v)
        = \mathrm{Var} \Big[ \log \frac{\mathrm{d}\vec{\mathbb{P}}^{v}}{\mathrm{d}\cev{\mathbb{P}}^{v}} (X^v) \Big] = \mathrm{Var} \bigg[ r_0(Y^v_0)
    \! - \! r_1(Y^v_1) \\
    &\quad + \! \int_0^1 \Big[
        \! - \! \langle v(Y^v_t,t),\,\mathfrak{s}_t(Y^v_t)\rangle
        \! + \! \big( \tfrac{\sigma(t)^2}{2} \! - \! \eta_t \big)\,\langle \nabla r_t(Y^v_t),\,\mathfrak{s}_t(Y^v_t)\rangle
        \! + \! \tfrac{\sigma(t)^2}{2}\,\|\nabla r_t(Y^v_t)\|^2
    \Big]\,\mathrm{d}t \\
    &\quad+
    \int_0^1 \Big[
        \sigma(t)^{-1}\big(
            \langle v(Y^v_t,t) \! + \! \eta_t \nabla r_t(Y^v_t),\,\overrightarrow{\mathrm{d}W_t}\rangle
            \! - \! \langle v(Y^v_t,t) \! + \! \eta_t \nabla r_t(Y^v_t),\,\overleftarrow{\mathrm{d}W_t}\rangle
        \big) \\
    &\qquad\qquad
        + \! \tfrac{\sigma(t)}{2}\big(
            \langle \nabla r_t(Y^v_t),\,\overrightarrow{\mathrm{d}W_t}\rangle
            \! + \! \langle \nabla r_t(Y^v_t),\,\overleftarrow{\mathrm{d}W_t}\rangle
        \big)
    \Big] \bigg].
    \end{split}    
    \end{talign}
\end{proposition}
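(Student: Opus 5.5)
The plan is to compute the Radon--Nikodym derivative $\log \frac{\mathrm{d}\vec{\mathbb{P}}^{v}}{\mathrm{d}\cev{\mathbb{P}}^{v}}$ along a path with the discretize-and-pass-to-the-limit argument used for CMCD in \citep{vargas2024transport}. Fix a grid $t_\ell = \ell\Delta t$. Under $\vec{\mathbb{P}}^v$ the path density is $p^\star_0(y_0)\prod_\ell \mathcal{N}(y_{\ell+1}; y_\ell + \Delta t\,(b^r_\sigma+v)(y_\ell,t_\ell), \sigma^2\Delta t)$. Under $\cev{\mathbb{P}}^v$ it is $p^\star_1(y_N)\prod_\ell \mathcal{N}(y_{\ell}; y_{\ell+1} - \Delta t\,(\cev{b}^r_\sigma+v)(y_{\ell+1},t_{\ell+1}), \sigma^2\Delta t)$, with the sign convention of the backward Euler--Maruyama update. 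Taking the log ratio of the Gaussians gives
\[
\frac{1}{\sigma^2}\big\langle \mu^+ , \Delta y\big\rangle - \frac{\Delta t}{2\sigma^2}\|\mu^+\|^2 - \frac{1}{\sigma^2}\big\langle \mu^-, \Delta y\big\rangle + \frac{\Delta t}{2\sigma^2}\|\mu^-\|^2 ,
\]
with $\mu^\pm$ the forward and backward drifts, $\mu^+$ evaluated at the left endpoint and $\mu^-$ at the right endpoint. In the limit the first sum becomes a forward Itô integral and the second a backward Itô integral.

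Next substitute the drifts. Write $S_t := \mathfrak{s}_t + \nabla r_t$. Then $\mu^+ - \mu^- = \sigma^2 S_t$, and $\mu^+ + \mu^- = 2(\kappa_t x + \eta_t S_t + v)$. The cross terms in the squared norms collapse:
\[
\tfrac{1}{2\sigma^2}\big(\|\mu^-\|^2 - \|\mu^+\|^2\big) = -\tfrac12\langle S_t, \mu^++\mu^-\rangle .
\]
For the stochastic part, write $\langle \mu^+,\overrightarrow{\mathrm{d}X}\rangle - \langle\mu^-,\overleftarrow{\mathrm{d}X}\rangle$. The $\sigma^2 S_t$ piece is integrated via the backward integral of $\langle S_t,\overleftarrow{\mathrm{d}X}\rangle$. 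The log-density terms are $\log p^\star_0(X_0) - \log p^\star_1(X_1)$, where $\log p^\star_t = \log p^{\mathrm{base}}_t + r_t - \log Z_t$.

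To eliminate the $\log p^{\mathrm{base}}$ terms, apply Itô's formula to $\log p^{\mathrm{base}}_t(X_t) + r_t(X_t)$, using the Fokker--Planck equation satisfied by $p^{\mathrm{base}}_t$ under the reference SDE with drift $b_\sigma$. A convenient choice is the form $\partial_t \log p^{\mathrm{base}}_t = -\nabla\cdot(\kappa_t x + \eta_t\mathfrak{s}_t) - \langle\kappa_t x + \eta_t \mathfrak{s}_t, \mathfrak{s}_t\rangle$, which comes from the probability-flow ODE. The backward Itô integral of $S_t$ against $\mathrm{d}X$ then produces exactly $\log p^\star_1(X_1) - \log p^\star_0(X_0)$, minus the time derivatives, with the opposite Stratonovich-type correction. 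This cancels the boundary terms, the divergence terms and the $\kappa_t x$ contributions. What remains are the inner products $-\langle v,\mathfrak{s}_t\rangle$, $(\tfrac{\sigma^2}{2}-\eta_t)\langle\nabla r_t,\mathfrak{s}_t\rangle$ and $\tfrac{\sigma^2}{2}\|\nabla r_t\|^2$, plus boundary terms involving $r$ and the $\partial_t r_t$ integral. I expect the $\partial_t r_t$ integral combines with $r_0$ and $r_1$ so that only $r_0(X_0) - r_1(X_1)$ survives, together with the constant $\log Z_1/Z_0$, which is irrelevant under $\mathrm{Var}$.

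For the KL loss, replace $\mathrm{d}X$ by its SDE expression in terms of $\overleftarrow{\mathrm{d}W}$, using the backward noise representation, and take expectations. Forward Itô integrals have zero mean, so only the displayed backward stochastic term, the $-r(X_1)$ term and the $\mathrm{d}t$ terms remain; $\mathbb{E}[r_0(X_0)]$ is constant in $v$ because $X_0\sim p^\star_0$ regardless of $v$. For the log-variance loss, keep the pathwise expression, rewriting the noise part as the symmetric combination of forward and backward differentials shown in the statement.

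The main obstacle is the bookkeeping of forward versus backward Itô integrals. Converting $\int\langle f,\overleftarrow{\mathrm{d}X}\rangle = \int\langle f,\overrightarrow{\mathrm{d}X}\rangle + \int\sigma^2\nabla\cdot f\,\mathrm{d}t$ must exactly absorb the divergence terms from Itô's formula and Fokker--Planck. I would check this first in the Gaussian case, where everything is linear, to fix signs.
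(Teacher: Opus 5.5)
Your route is viable and genuinely different from the paper's, but as written it does not reach the displayed formula. The paper never sees $\log p^{\mathrm{base}}_t$. It applies \Cref{prop:forward-backward-rnd} with the base process and its Nelson reversal as reference: $\Gamma_0=\mathcal{N}(0,\beta_0^2\mathrm{I})$, $\Gamma_1=p^{\mathrm{base}}$, $\gamma^+=b_\sigma$, $\gamma^-=b_\sigma-\sigma^2\mathfrak{s}_t$. The endpoint terms are then just $r_0(Y_0)-\log Z_0$ and $r_1(Y_1)-\log Z_1$, with $Z_0=\mathbb{E}_{\mathcal{N}(0,\beta_0^2\mathrm{I})}[e^{r_0}]$ and $Z_1=\mathbb{E}_{p^{\mathrm{base}}}[e^{r_1}]$. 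Only the drift differences $w^\pm=v+(\eta_t\pm\tfrac{\sigma^2}{2})\nabla r_t$ enter the stochastic integrals.

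You instead compute the log-ratio directly and remove $\log p^{\mathrm{base}}_t$ by Itô plus Fokker--Planck. The problem is that you apply Itô's formula to all of $\phi_t=\log p^{\mathrm{base}}_t+r_t$. Carry this out: take the backward integral of $S_t=\nabla\phi_t$, use the forward-minus-backward identity on the $\mu^+$ part, and use the probability-flow form of $\partial_t\log p^{\mathrm{base}}_t$. Then \emph{all} boundary terms cancel, including $r_0(X_0)$ and $r_1(X_1)$, and you obtain
\[
\log\tfrac{Z_1}{Z_0}-\int_0^1\big(\partial_t r_t+\eta_t\Delta r_t+\nabla\cdot v+\langle\kappa_tX_t+2\eta_t\mathfrak{s}_t,\nabla r_t\rangle+\eta_t\|\nabla r_t\|^2+\langle v,\mathfrak{s}_t+\nabla r_t\rangle\big)\,\mathrm{d}t .
\]
This is the Crooks form of \Cref{prop:crooks_exponential}, not the CMCD form. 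The terms $(\tfrac{\sigma^2}{2}-\eta_t)\langle\nabla r_t,\mathfrak{s}_t\rangle$ and $\tfrac{\sigma^2}{2}\|\nabla r_t\|^2$ do not appear, while $\eta_t\Delta r_t$ and $\nabla\cdot v$ do. So your account of ``what remains'' is inconsistent with the manipulation you describe. Your expectation that $\int_0^1\partial_t r_t(X_t)\,\mathrm{d}t$ ``combines with $r_0$ and $r_1$'' to leave only $r_0(X_0)-r_1(X_1)$ is false: the integral is path dependent, and in fact
\[
\int_0^1\partial_t r_t(X_t)\,\mathrm{d}t=r_1(X_1)-r_0(X_0)-\int_0^1\nabla r_t(X_t)\circ\mathrm{d}X_t .
\]

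That Stratonovich integral is the missing piece. Write $\int\nabla r_t\circ\mathrm{d}X_t=\int\langle\nabla r_t,\mu^+\rangle\,\mathrm{d}t+\tfrac{\sigma}{2}\int\langle\nabla r_t,\overrightarrow{\mathrm{d}W_t}+\overleftarrow{\mathrm{d}W_t}\rangle$. The second part supplies the symmetric noise term. The drift part $\langle\nabla r_t,\mu^+\rangle$, combined with the Crooks integrand, gives exactly $-\langle v,\mathfrak{s}_t\rangle+(\tfrac{\sigma^2}{2}-\eta_t)\langle\nabla r_t,\mathfrak{s}_t\rangle+\tfrac{\sigma^2}{2}\|\nabla r_t\|^2$. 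Likewise, $-\int(\nabla\cdot v+\eta_t\Delta r_t)\,\mathrm{d}t=\int\sigma^{-1}\langle v+\eta_t\nabla r_t,\overrightarrow{\mathrm{d}W_t}-\overleftarrow{\mathrm{d}W_t}\rangle$ gives the antisymmetric noise term.

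A cleaner fix is to apply Itô/Fokker--Planck only to $\log p^{\mathrm{base}}_t$. Split $\mu^\pm=\gamma^\pm+w^\pm$. Check that $\log p^{\mathrm{base}}_0(X_0)-\log p^{\mathrm{base}}_1(X_1)$, together with the terms involving only $\gamma^\pm$, vanishes pathwise; this is your computation with $\mathfrak{s}_t$ in place of $S_t$. Keep the $w^\pm$ terms as stochastic integrals. This reproduces the paper's computation. What your route buys is a self-contained verification of $\overrightarrow{\mathbb{P}}^{\Gamma_0,\gamma^+}=\overleftarrow{\mathbb{P}}^{\Gamma_1,\gamma^-}$, which the paper takes from Nelson's relation.

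For the KL loss, under $\overrightarrow{\mathbb{P}}^{v}$ you must use $\overleftarrow{\mathrm{d}X_t}=\mu^+\,\mathrm{d}t+\sigma\,\overleftarrow{\mathrm{d}W_t}$ with the same $W$. Do not use the drift $\mu^-$: it is the time-reversal drift only when $X^v$ has marginals $p^\star_t$. With the correct representation, the backward term $-\int\sigma^{-1}\langle w^-,\overleftarrow{\mathrm{d}W_t}\rangle$ has nonzero mean, which is why it survives the expectation.
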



The Crooks fluctuation theorem \citep{crooks1999entropy} is a fundamental result in non-equilibrium thermodynamics that expresses the Radon-Nikodym derivative between a pair of forward and backward path measures in terms of a difference of free energies (or logarithm of normalizing constants) and a work functional. The following result, proven in \Cref{subsec:proof_crooks}, provides an analogous expression for the path measures of $X^v$ and $\cev{X}^v$.
\begin{proposition}[Controlled Crooks fluctuation theorem for exponential tilting] \label{prop:crooks_exponential}
For an arbitrary process in the support of $\vec{\mathbb{P}}^{v}$ and/or $\cev{\mathbb{P}}^{v}$, the Radon-Nikodym derivative (RND) between $\vec{\mathbb{P}}^{v}$ and $\cev{\mathbb{P}}^{v}$ at reads 
    \begin{talign}
    \begin{split} \label{eq:escorted_crooks}
    \frac{\mathrm{d}\vec{\mathbb{P}}^{v}}{\mathrm{d}\cev{\mathbb{P}}^{v}}
    (Y)
    &= \exp\Big( 
    - \log \mathbb{E}_{\mathcal{N}(0, \beta_0^2 \mathrm{I})} [\exp(r_0)]
    + \log \mathbb{E}_{p^{\mathrm{base}}} [\exp(r_1)] \
    \\ &\qquad\quad -\int_{0}^{1}\Big(
    \langle \kappa_t Y_t + 2\eta_t \mathfrak{s}_t(Y_t),\,\nabla r_t(Y_t)\rangle
    +\langle v(Y_t,t),\,\mathfrak{s}_t(Y_t)+\nabla r_t(Y_t)\rangle + \partial_t r_t(Y_t)
    \\ &\qquad\qquad\quad 
    +\eta_t\|\nabla r_t(Y_t)\|^2 + \eta_t\,\Delta r_t(Y_t) + \nabla \cdot v(Y_t,t)
    \Big)\,\mathrm{d}t\Big). 
    \end{split}
    \end{talign}
\end{proposition}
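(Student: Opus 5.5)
The plan is to compute the Radon--Nikodym derivative in the standard way used for CMCD/NETS-type results: compare both path measures to a common reference, use the time-reversal (Nelson) relation for the diffusion part, and then use Itô's formula to turn the stochastic integrals into the deterministic work functional.

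\textbf{Step 1: split off the endpoint densities.} Disintegrate each path measure with respect to its initial law. The law of $X^v$ starts from $p^{\star}_0 = \mathcal{N}(0,\beta_0^2\mathrm{I})e^{r_0}/Z_0$. The law of $\cev{X}^v$ ends at time $0$ from $p^{\star}_1 = p^{\mathrm{base}}_1 e^{r_1}/Z_1$, with $Z_0=\mathbb{E}_{\mathcal{N}(0,\beta_0^2\mathrm{I})}[e^{r_0}]$ and $Z_1=\mathbb{E}_{p^{\mathrm{base}}}[e^{r_1}]$. The log-RND then equals
\[
\log p^{\star}_0(Y_0)-\log p^{\star}_1(Y_1)
\]
plus the log-RND of the conditional path laws. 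This is where the term $-\log Z_0+\log Z_1$ comes from. What remains to show is that $\log p^{\mathrm{base}}_0(Y_0)+r_0(Y_0)-\log p^{\mathrm{base}}_1(Y_1)-r_1(Y_1)$, combined with the conditional part, collapses to minus the stated integral.

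\textbf{Step 2: compute the conditional part.} Use the general formula for the RND between a forward SDE with drift $f^+$ and a backward SDE with drift $f^-$ sharing diffusion coefficient $\sigma$ (the one behind Prop.~\ref{prop:cmcd_exponential}). Up to initial laws, it equals
\[
\int \sigma^{-2}\langle f^+ - f^-, \circ\, \mathrm{d}Y_t\rangle - \tfrac12\int \sigma^{-2}\big(\|f^+\|^2-\|f^-\|^2\big)\,\mathrm{d}t - \int \nabla\cdot\tfrac{f^++f^-}{2}\,\mathrm{d}t .
\]
It can also be written with forward/backward Itô integrals, using $\circ\,\mathrm{d}Y=\tfrac12(\overrightarrow{\mathrm{d}Y}+\overleftarrow{\mathrm{d}Y})$.

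Here $f^\pm = \kappa_t x + (\eta_t\pm\sigma^2/2)(\mathfrak{s}_t+\nabla r_t) + v$. This gives
\[
f^+-f^- = \sigma^2(\mathfrak{s}_t+\nabla r_t),\qquad \tfrac{f^++f^-}{2}=\kappa_t x+\eta_t(\mathfrak{s}_t+\nabla r_t)+v .
\]
The stochastic term therefore becomes the Stratonovich integral $\int\langle \nabla(\log p^{\mathrm{base}}_t + r_t)(Y_t),\circ\,\mathrm{d}Y_t\rangle$.

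\textbf{Step 3: remove the stochastic integral.} Apply the Stratonovich chain rule to $\phi_t = \log p^{\mathrm{base}}_t + r_t$:
\[
\int\langle\nabla\phi_t,\circ\,\mathrm{d}Y_t\rangle = \phi_1(Y_1)-\phi_0(Y_0)-\int\partial_t\phi_t\,\mathrm{d}t .
\]
The boundary terms cancel exactly with those from Step 1. This is the key cancellation that makes the result hold pathwise for arbitrary $Y$.

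For $\partial_t \log p^{\mathrm{base}}_t$, use the Fokker--Planck equation of the base flow, $\partial_t p_t = -\nabla\cdot\big(p_t(\kappa_t x+\eta_t\mathfrak{s}_t)\big)$, which is the probability-flow ODE obtained from \eqref{eq:generative_SDE} with $\sigma=0$. This gives
\[
\partial_t\log p_t = -d\kappa_t - \eta_t\Delta\log p_t - \langle\kappa_t x+\eta_t\mathfrak{s}_t,\mathfrak{s}_t\rangle .
\]

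\textbf{Step 4: collect terms.} Expand $\|f^+\|^2-\|f^-\|^2 = 2\sigma^2\langle \tfrac{f^++f^-}{2},\mathfrak{s}_t+\nabla r_t\rangle$, and expand $\nabla\cdot\tfrac{f^++f^-}{2} = d\kappa_t+\eta_t(\Delta\log p_t+\Delta r_t)+\nabla\cdot v$. Then gather everything.

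The $d\kappa_t$ and $\eta_t\Delta\log p_t$ terms cancel against $\partial_t\log p_t$. The terms $\langle\kappa_t x+\eta_t\mathfrak{s}_t,\mathfrak{s}_t\rangle$ cancel with part of $\langle \tfrac{f^++f^-}{2},\mathfrak{s}_t+\nabla r_t\rangle$. What is left is
\[
-\int\Big(\langle\kappa_t Y+2\eta_t\mathfrak{s}_t,\nabla r_t\rangle+\langle v,\mathfrak{s}_t+\nabla r_t\rangle+\partial_t r_t+\eta_t\|\nabla r_t\|^2+\eta_t\Delta r_t+\nabla\cdot v\Big)\,\mathrm{d}t ,
\]
as claimed. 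Note that the $\sigma$-dependence disappears, consistent with the statement.

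\textbf{Main obstacle.} I expect the main difficulty to be Step 2: getting the correct RND formula for forward versus backward-Itô SDEs, in particular the divergence (Itô--Stratonovich correction) term and its sign. I would verify it by discretizing with Euler--Maruyama forward and backward transition kernels, taking the log-ratio of Gaussian densities, and Taylor-expanding the backward drift evaluated at $y_\ell$ versus $y_{\ell-1}$, which produces the $\nabla\cdot$ term. The final sanity check is the case $v=v^\star$ with $r_t\equiv0$, where the RND should be $1$. This holds because $\nabla\cdot v^\star$ must then vanish in the appropriate sense.
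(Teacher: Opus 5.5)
Your proposal is correct, and the computation checks out term by term. The divergence correction $-\int\nabla\cdot\tfrac{f^++f^-}{2}\,\mathrm{d}t$ has the right sign, and the $d\kappa_t$, $\eta_t\Delta\log p^{\mathrm{base}}_t$ and $\langle\kappa_t x+\eta_t\mathfrak{s}_t,\mathfrak{s}_t\rangle$ terms cancel as you claim. However, it follows a different route from the paper.

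The paper applies \Cref{prop:forward-backward-rnd} relative to a reference pair given by the \emph{base} process: $\Gamma_0=\mathcal{N}(0,\beta_0^2\mathrm{I})$, $\Gamma_1=p^{\mathrm{base}}$, $\gamma^+=b_\sigma$, $\gamma^-=b_\sigma-\sigma^2\mathfrak{s}_t$. The forward and backward path measures of this pair coincide by Nelson's relation. Consequently the boundary terms are just $r_0(Y_0)-\log Z_0-r_1(Y_1)+\log Z_1$, the drift differences $a-\gamma^+$ and $b-\gamma^-$ involve only $v$ and $\nabla r_t$, and $\log p^{\mathrm{base}}_t$ is never differentiated. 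The stochastic integrals are then removed with \Cref{lem:sum_diff}: the backward-minus-forward Itô integral gives $-\int(\eta_t\Delta r_t+\nabla\cdot v)\,\mathrm{d}t$, and the sum gives the Stratonovich chain rule for $r_t$ alone.

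You instead use the absolute forward/backward formula, whose reference is, formally, Brownian motion started from Lebesgue measure. Your endpoint terms therefore contain $\log p^{\mathrm{base}}_0$ and $\log p^{\mathrm{base}}_1$. You then re-derive by hand the cancellation that the paper gets from Nelson's relation, using the Stratonovich chain rule for $\log p^{\mathrm{base}}_t$ and the continuity equation of the probability-flow ODE. These two inputs are equivalent, but the trade-offs differ.

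\begin{itemize}
\item The paper's route writes each RND as a genuine Girsanov density against a common probability reference. It never needs $\partial_t\log p^{\mathrm{base}}_t$ or $\Delta\log p^{\mathrm{base}}_t$ along paths.
\item Yours rests on a formal density with respect to ``Lebesgue measure on paths'' and on extra smoothness of $p^{\mathrm{base}}_t$. Your ``conditional path laws'' in Step~1 pin different endpoints, so that step only has meaning through the Euler--Maruyama discretization you mention.
\item In exchange, your route is self-contained and makes the $\sigma$-independence visible from the start, since everything depends only on $\tfrac{f^++f^-}{2}$.
\item Your Steps~3--4 are essentially the paper's alternative Fokker--Planck derivation of the NETS residual. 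This makes the link between the Crooks work functional and that PDE residual immediate.
\end{itemize}

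One small correction to your sanity check. With $r_t\equiv0$, what makes the RND equal to $1$ is $\nabla\cdot v+\langle v,\mathfrak{s}_t\rangle=\nabla\cdot(v\,p^{\mathrm{base}}_t)/p^{\mathrm{base}}_t=0$, not $\nabla\cdot v=0$.
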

Drawing analogy to the standard controlled Crooks fluctuation theorem \citep{vargas2024transport,zhong2024time}, we can treat $- \log \mathbb{E}_{\mathcal{N}(0, \beta_0^2 \mathrm{I})} [\exp(r_0)]
+ \log \mathbb{E}_{p^{\mathrm{base}}} [\exp(r_1)]$ as a free energy difference and the remaining terms as a generalized work functional.
Taking the expectation with respect to $Y \in \vec{\mathbb{P}}^{v}$ of the multiplicative inverse of both sides of 
\eqref{eq:escorted_crooks} yields an analog of the escorted Jarzynski equality, first proposed by \cite{vaikuntanathan2008escorted}.

\begin{proposition}[Escorted Jarzynski equality for exponential tilting] \label{prop:jarzynski_exponential}
The free energy difference admits the expression
    \begin{talign}
    \begin{split} 
    \log \Big(\frac{\mathbb{E}_{p^{\mathrm{base}}} [\exp(r_1)]}{\mathbb{E}_{\mathcal{N}(0, \beta_0^2 \mathrm{I})} [\exp(r_0)]} \Big) &= \log \mathbb{E}_{\vec{\mathbb{P}}^{v}} \Big[\exp \Big( \int_{0}^{1}\Big(
    \langle \kappa_t Y_t \!+ \! 2\eta_t \mathfrak{s}_t(Y_t),\, \nabla r_t(Y_t)\rangle
    +\langle v(Y_t,t),\,\mathfrak{s}_t(Y_t)+\\ &\nabla r_t(Y_t)\rangle + \partial_t r_t(Y_t)
    +\eta_t\|\nabla r_t(Y_t)\|^2 + \eta_t\,\Delta r_t(Y_t) + \nabla \cdot v(Y_t,t)
    \Big)\,\mathrm{d}t \Big) \Big]. \label{eq:I-expanded}
    \end{split}
    \end{talign}
\end{proposition}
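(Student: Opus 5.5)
The plan is to derive the identity directly from the controlled Crooks fluctuation theorem (\Cref{prop:crooks_exponential}), following the remark preceding the statement. Write $\Delta F := \log \mathbb{E}_{p^{\mathrm{base}}}[\exp(r_1)] - \log \mathbb{E}_{\mathcal{N}(0,\beta_0^2\mathrm{I})}[\exp(r_0)]$. Let $\mathcal{W}(Y)$ denote the time integral appearing in \eqref{eq:escorted_crooks}, i.e. the generalized work functional. Then \Cref{prop:crooks_exponential} reads $\frac{\mathrm{d}\vec{\mathbb{P}}^{v}}{\mathrm{d}\cev{\mathbb{P}}^{v}}(Y) = \exp(\Delta F - \mathcal{W}(Y))$. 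Equivalently, $\frac{\mathrm{d}\cev{\mathbb{P}}^{v}}{\mathrm{d}\vec{\mathbb{P}}^{v}}(Y) = \exp(\mathcal{W}(Y) - \Delta F)$ on the common support.

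The key step is the change-of-measure identity $\mathbb{E}_{\vec{\mathbb{P}}^{v}}\big[\frac{\mathrm{d}\cev{\mathbb{P}}^{v}}{\mathrm{d}\vec{\mathbb{P}}^{v}}\big] = \cev{\mathbb{P}}^{v}(\text{path space}) = 1$. This requires $\cev{\mathbb{P}}^{v} \ll \vec{\mathbb{P}}^{v}$. That holds because both are diffusions with the same nondegenerate diffusion coefficient $\sigma(t)$. Their initial/terminal marginals $p^{\star}_0$ and $p^{\star}_1$ have full support, so by Girsanov the measures are mutually absolutely continuous under standard regularity (e.g. Novikov-type integrability of the drifts, which I will assume as in the rest of the section). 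Substituting the Crooks expression gives $\mathbb{E}_{\vec{\mathbb{P}}^{v}}[\exp(\mathcal{W}(Y))]\, e^{-\Delta F} = 1$. Taking logarithms yields $\Delta F = \log \mathbb{E}_{\vec{\mathbb{P}}^{v}}[\exp(\mathcal{W}(Y))]$. After expanding $\mathcal{W}$, this is exactly \eqref{eq:I-expanded}.

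The only genuine obstacle is justifying mutual absolute continuity and the integrability of $\exp(\mathcal{W})$ under $\vec{\mathbb{P}}^{v}$. Here I would note that nonnegativity lets us use Tonelli, so the identity holds in $[0,\infty]$ without extra integrability. Since the right-hand side equals the finite $e^{\Delta F}$, finiteness follows automatically. Everything else is a direct rearrangement of \Cref{prop:crooks_exponential}.
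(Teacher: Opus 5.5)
Your proposal is correct and is the paper's argument: the paper obtains \Cref{prop:jarzynski_exponential} by inverting the Radon--Nikodym derivative in \eqref{eq:escorted_crooks}, taking its expectation under $\vec{\mathbb{P}}^{v}$ (which equals one), and taking logarithms, exactly as you do. Your remark that mutual absolute continuity of $\vec{\mathbb{P}}^{v}$ and $\cev{\mathbb{P}}^{v}$ is needed is appropriate, since the paper leaves it implicit. The appeal to Tonelli is unnecessary: once $\cev{\mathbb{P}}^{v} \ll \vec{\mathbb{P}}^{v}$, the identity $\mathbb{E}_{\vec{\mathbb{P}}^{v}}[\mathrm{d}\cev{\mathbb{P}}^{v}/\mathrm{d}\vec{\mathbb{P}}^{v}] = 1$ follows directly from the definition of the Radon--Nikodym derivative.
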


Taking an expectation of the squared log-RND \eqref{eq:escorted_crooks} and applying Jensen's inequality yields an analog of the NETS loss, first introduced by \cite{albergo2025nets} in the standard thermodynamic setting. The proof is in \Cref{subsec:proof_nets}.
\begin{proposition}[NETS loss function for exponential tilting] \label{prop:nets_exponential}
Given an arbitrary process $Y$, the PINN (physics informed neural network) NETS loss for exponential tilting reads
\begin{talign}
\begin{split}
    \mathcal{L}_{\mathrm{NETS}}(v,F) &= \mathbb{E}\Big[ \int_{0}^{1}\Big(
\langle \kappa_t Y_t \! + \! 2\eta_t \mathfrak{s}_t(Y_t),\,\nabla r_t(Y_t)\rangle
\! + \! \langle v(Y_t,t),\,\mathfrak{s}_t(Y_t)\! + \! \nabla r_t(Y_t)\rangle 
\\ &\qquad\qquad
+ \! \partial_t r_t(Y_t) 
\! + \! \eta_t\|\nabla r_t(Y_t)\|^2 \! + \! \eta_t\,\Delta r_t(Y_t) \! + \! \nabla \cdot v(Y_t,t) \! - \! \partial_t F_t
\Big)^2 \,\mathrm{d}t \Big],
\end{split}
\end{talign}
and it satisfies that $\mathcal{L}_{\mathrm{NETS}}(v,F) \leq \mathbb{E}\big[\big( \log \frac{\mathrm{d}\vec{\mathbb{P}}^{v}}{\mathrm{d}\cev{\mathbb{P}}^{v}}
    (Y) \big)^2 \big]$.
\end{proposition}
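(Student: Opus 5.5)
The plan is to start from the controlled Crooks identity (\Cref{prop:crooks_exponential}) and rewrite the squared log-RND so that the integrand of $\mathcal{L}_{\mathrm{NETS}}$ appears inside it. For a path $Y$ write
\begin{talign*}
A_t(Y_t) &:= \langle \kappa_t Y_t + 2\eta_t \mathfrak{s}_t(Y_t), \nabla r_t(Y_t)\rangle + \langle v(Y_t,t), \mathfrak{s}_t(Y_t)+\nabla r_t(Y_t)\rangle + \partial_t r_t(Y_t) \\
&\qquad + \eta_t \|\nabla r_t(Y_t)\|^2 + \eta_t \Delta r_t(Y_t) + \nabla\cdot v(Y_t,t),
\end{talign*}
and set $\Delta F := \log \mathbb{E}_{p^{\mathrm{base}}}[\exp(r_1)] - \log \mathbb{E}_{\mathcal{N}(0,\beta_0^2\mathrm{I})}[\exp(r_0)]$. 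Then \eqref{eq:escorted_crooks} reads $\log \frac{\mathrm{d}\vec{\mathbb{P}}^{v}}{\mathrm{d}\cev{\mathbb{P}}^{v}}(Y) = \Delta F - \int_0^1 A_t(Y_t)\,\mathrm{d}t$.

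I would take $F$ to be any differentiable function of $t$ with $F_1 - F_0 = \Delta F$, which is the free-energy normalization implicit in the statement. Since $\int_0^1 \partial_t F_t \,\mathrm{d}t = \Delta F$, the log-RND becomes
\begin{talign*}
\log \frac{\mathrm{d}\vec{\mathbb{P}}^{v}}{\mathrm{d}\cev{\mathbb{P}}^{v}}(Y) = -\int_0^1 \big(A_t(Y_t) - \partial_t F_t\big)\,\mathrm{d}t .
\end{talign*}
After squaring and taking expectations over $Y$, both sides of the claimed inequality are functionals of the same integrand $g_t := A_t(Y_t) - \partial_t F_t$. The right-hand side is $\mathbb{E}[(\int_0^1 g_t\,\mathrm{d}t)^2]$ and the left-hand side is $\mathbb{E}[\int_0^1 g_t^2\,\mathrm{d}t]$. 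The proof therefore reduces to comparing these two quantities pathwise.

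This comparison is the main obstacle, and it is where I expect the plan to break. Jensen's inequality (equivalently Cauchy--Schwarz on $[0,1]$, which has unit length) gives, for each path,
\begin{talign*}
\Big(\int_0^1 g_t\,\mathrm{d}t\Big)^2 \le \int_0^1 g_t^2\,\mathrm{d}t .
\end{talign*}
Taking expectations yields $\mathbb{E}[(\log \mathrm{RND})^2] \le \mathcal{L}_{\mathrm{NETS}}(v,F)$. This is the \emph{opposite} of the displayed inequality, and it is the direction that makes $\mathcal{L}_{\mathrm{NETS}}$ a tractable upper bound to minimize.

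To get the inequality exactly as worded, I would first look for structure that forces equality, namely $g_t$ constant in $t$ along every path. The clearest case is at the optimum $v = v^{\star}$ with $F_t$ the true log-normalizer of $p^{\star}_t$. There the Fokker--Planck equation for $p^{\star}_t \propto p^{\mathrm{base}}_t e^{r_t}$ should give $A_t(x) = \partial_t F_t$ pointwise. Hence $g \equiv 0$, and both sides vanish. Away from the optimum I do not see a general argument for $\mathcal{L}_{\mathrm{NETS}} \le \mathbb{E}[(\log \mathrm{RND})^2]$, because non-constant $g_t$ makes the Jensen gap strictly positive.

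My write-up would therefore do three things. It would carry out the reduction above, prove the Jensen inequality in the direction $\mathbb{E}[(\log \mathrm{RND})^2] \le \mathcal{L}_{\mathrm{NETS}}(v,F)$, and establish equality (both sides zero) at $(v^{\star}, F^{\star})$. I would flag that the displayed inequality appears to have its direction reversed relative to what the Jensen argument supports.
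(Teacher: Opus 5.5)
Your plan matches the paper's proof: rewrite the Crooks log-RND using $\int_0^1 \partial_t F_t\,\mathrm{d}t = F_1 - F_0$, then apply Jensen pathwise. The paper fixes $F_t$ as the log-normalizer of $p^{\mathrm{base}}_t e^{r_t}$; you allow any $F$ with the correct endpoint difference, which works equally well. You are also right about the direction: the paper's own derivation concludes $\mathbb{E}[(\log \frac{\mathrm{d}\vec{\mathbb{P}}^{v}}{\mathrm{d}\cev{\mathbb{P}}^{v}}(Y))^2] \le \mathcal{L}_{\mathrm{NETS}}(v,F)$, so the inequality displayed in the proposition is reversed, and your equality-at-the-optimum remark agrees with the paper's alternative Fokker--Planck derivation.
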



\section{Experiments} \label{sec:experiments}

While the scope of our paper is general, as we cover fine-tuning and sampling, and many different algorithms, in this section we focus on the performance of Adjoint Matching for fine-tuning Stable Diffusion 1.5 and Stable Diffusion 3 with ImageReward \citep{xu2023imagereward} as the reward model. We fine-tune using the 10000 prompts considered by \cite{xu2023imagereward} and report metrics computed on their 100-prompt validation dataset (generating 10 images per prompt).

In \Cref{fig:sd1.5} we plot the trade-offs \cite{astolfi2024consistency} between DreamSim variance (\cite{fu2023learning}, a metric that measures per-prompt diversity) and ImageReward, CLIPScore \citep{hessel2021clipscore} and HPSv2 \citep{wu2023humanpreferencescorev2}. Our results follow the same trend as those of \cite{domingoenrich2025adjoint}, which carried out similar experiments on a proprietary base model. We use the $\eta \geq 0$ as a multiplier of the noise $\sigma(t) = \sqrt{2\eta_t}$: the effective noise is $\eta \sigma(t)$ ($\eta$ and $\eta_t$ denote different quantities). We perform inference with $\eta = 0$ (no noise) and $\eta = 1$ (memoryless), and with two schedules: the default DDIM schedule and the schedule used during fine-tuning. Remarkably, $\eta=0$ performs better w.r.t ImageReward and HPS, and $\eta=1$ is better at CLIPScore. 

In \Cref{fig:sd3}(\emph{left}) we plot the same trade-offs for Stable Diffusion 3, and include Aesthetic Score \citep{laion_aesthetic_2024} as well. Arguably, $\eta=1$ outperforms $\eta=0$ in this case, as most points on the Pareto front use the former. In \Cref{fig:sd3}(\emph{right}) we ablate the choice of the initial variance $\sigma_0^2$; as we show in \Cref{subsec:derivation}, we can simulate a generative SDE with a rescaled noise schedule $\sigma$ by reusing the pretrained vector field, which was learned at $\sigma_0^2$. In the figure, points linked by an arrow correspond to settings which only differ by the training $\sigma_0^2$, the tail being for $\sigma_0^2=1$ and the head being for $\sigma_0^2=1.5$. We perform inference at both $\sigma_0^2=1$ and 1.5. The results are inconclusive, which is consistent with \Cref{prop:bv_adjoint}, that states that the (bound on the) variance term for Adjoint Matching is independent of $\sigma_0$.  

\begin{figure}[t]
    \centering
    \includegraphics[width=0.99\linewidth]{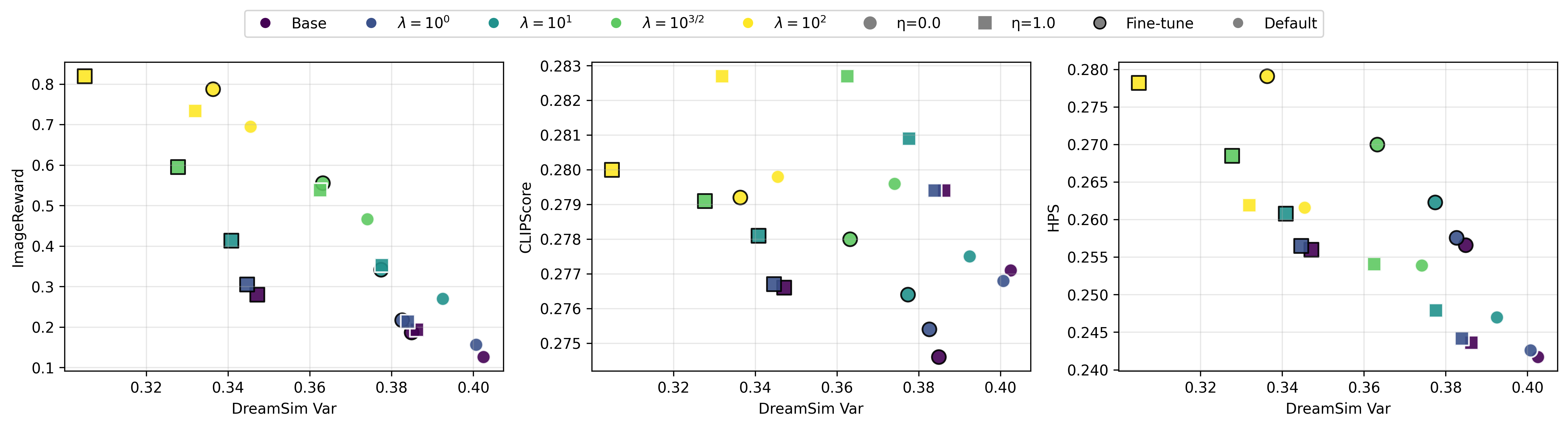}
    \caption{Trade-offs between per-prompt diversity (DreamSim variance, $x$-axis) and quality metrics (ImageReward, CLIPScore, HPSv2; $y$-axis) for Stable Diffusion 1.5 fine-tuned with Adjoint Matching. Each point corresponds to a different combination of inference noise $\eta \in \{0,1\}$, reward multiplier $\lambda \in \{1, 10, 10^{3/2}, 100\}$, and schedule (default DDIM or fine-tuning schedule).}
    \label{fig:sd1.5}
\end{figure}




\begin{figure}
\begin{adjustwidth}{-0.4cm}{-0.4cm}
    \centering
    \begin{subfigure}{0.495\linewidth}
        \includegraphics[width=\linewidth]{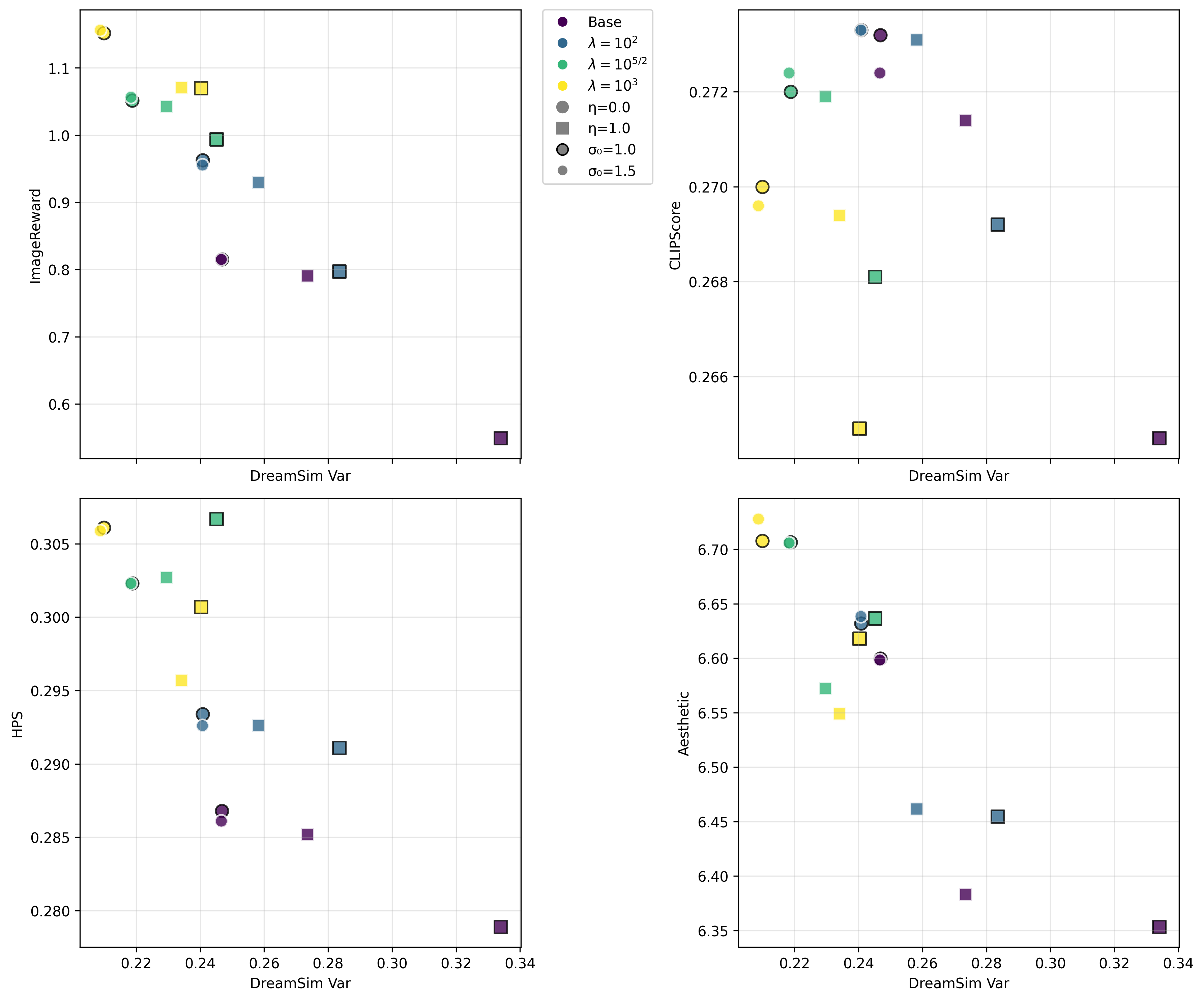}
        \caption{Results for the base model and models fine-tuned at $\sigma_0^2 = 1$ and $\lambda \in \{10^2, 10^{5/2}, 10^3\}$, with inference at $\eta \in \{0, 1\}$ and $\sigma_0 = \{1, 1.5\}$.}
    \end{subfigure}
    \hfill
    \begin{subfigure}{0.495\linewidth}
        \includegraphics[width=\linewidth]{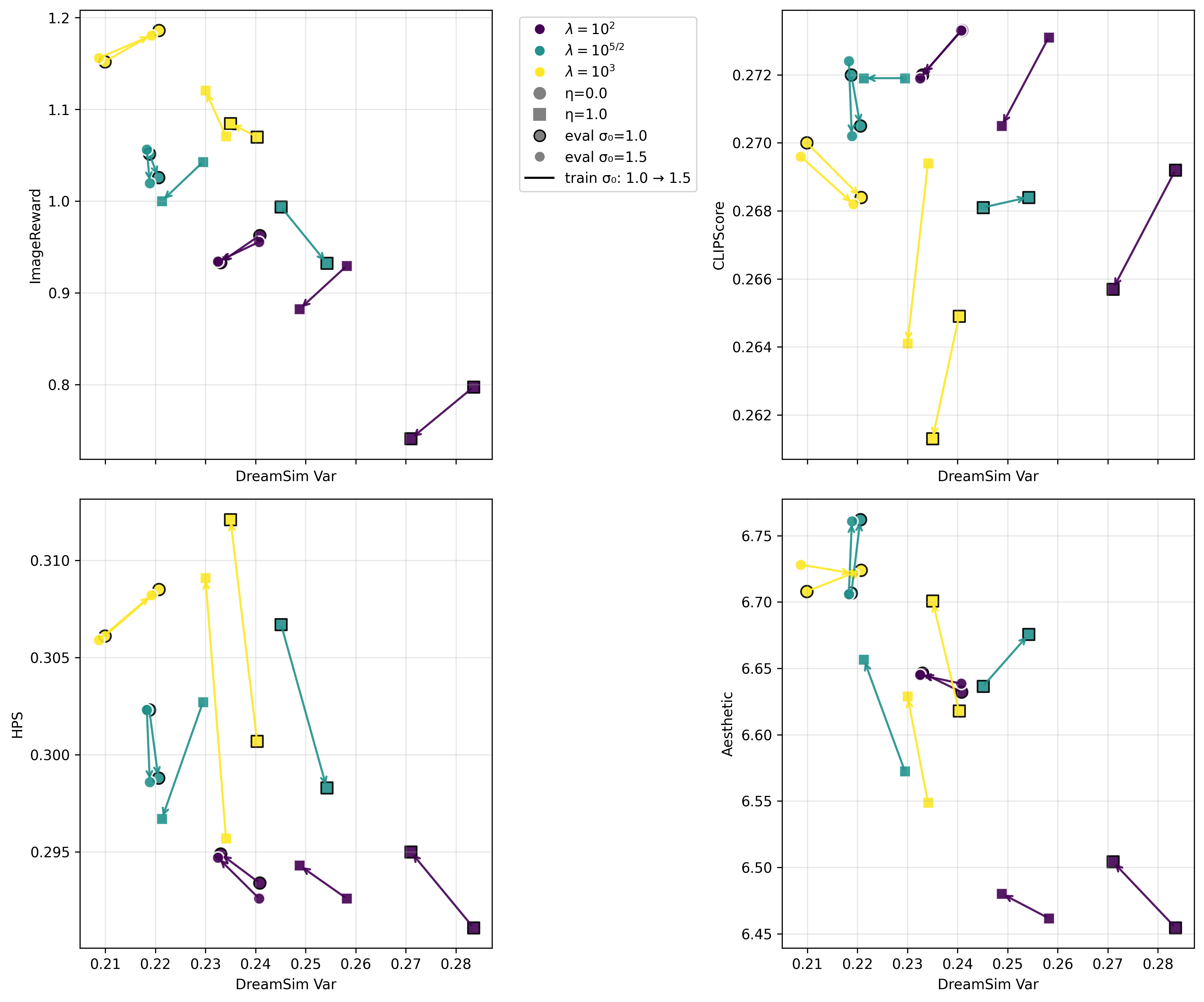}
        \caption{Results for models fine-tuned at $\sigma_0^2 \in \{1, 1.5\}$ and $\lambda \in \{10^2, 10^{5/2}, 10^3\}$, and different inference parameters. Points linked only differ in training $\sigma_0^2$.}
    \end{subfigure}
    \caption{Trade-offs between per-prompt diversity (DreamSim variance, $x$-axis) and quality metrics ($y$-axis) for Stable Diffusion 3 fine-tuned with Adjoint Matching.}
    \label{fig:sd3}
\end{adjustwidth}
\end{figure}

\section{Conclusion}

In this paper, we introduced new developments that clarify the algorithmic design of fine-tuning and sampling in diffusion and flow models, and validated key implications with empirical studies. Together, our results help unify seemingly disparate procedures within a common perspective and suggest principled routes for designing improved algorithms.

\textbf{Limitations and future work}. Our experiments focused on fine-tuning text-to-image diffusion models. Extending the empirical evaluation to other domains such as protein design remains future work. In addition, we primarily evaluated SOC- and score matching approaches, and leave a systematic study of the proposed thermodynamics-inspired methods for future investigation. Finally, our framework is not comprehensive: it does not explicitly include several recent fine-tuning and sampling algorithms (e.g., \cite{liu2025flow,akhoundsadegh2025progressive}), which we view as an important direction for extending the scope of the analysis.

\bibliography{iclr2026_conference}
\bibliographystyle{iclr2026_conference}

\appendix
\newpage

\starttocentries
\tableofcontents

\section{Useful theoretical results}
\begin{lemma}[Conditional and target score identities, \cite{debortoli2024target}] 
\label{lem:csi_tsi}
    Let $p_t$ be the density of the marginal $\bar{X}_t$ of the reference flow defined in equation \eqref{eq:reference_flow}. For any $\alpha > 0$, define the map $T_{\alpha}$ as $x \mapsto T_{\alpha}(x) = \alpha x$, and let $(T_{\alpha})_{\#}p$ be the pushforward of the distribution $p$ by $T_{\alpha}$, whose density is $(T_{\alpha_t})_{\#} p_{\mathrm{data}}(x) = p_{\mathrm{data}}(x/\alpha_t) \frac{1}{\alpha_t}$. We write the density of the Gaussian $\mathcal{N}(\alpha_t y,\beta_t^2 \mathrm{I})$ as $\mathcal{N}(x;\alpha_t y,\beta_t^2 \mathrm{I}) = \exp \big( - \frac{\|x - \alpha_t y\|^2}{2 \beta^2_t} \big)/(2 \pi \beta^2_t)^{d/2}$. Then,
    \begin{talign} \label{eq:p_t} 
        p_t(x) = \int_{\mathbb{R}^d} 
        \mathcal{N}(x;\alpha_t y, \beta^2_t \mathrm{I})
        p_{\mathrm{data}}(y) \, \mathrm{d}y = [(T_{\alpha_t})_{\#} p_{\mathrm{data}} \ast \mathcal{N}(0,\beta_t^2 \mathrm{I})](x),
    \end{talign}
    and we obtain the following identities:
    \begin{talign} 
     \label{eq:csi_2} \tag{CSI}
        &\emph{Conditional score identity:} \qquad  \nabla \log p_t(x) = 
        - \frac{\int_{\mathbb{R}^d} (x - \alpha_t y)
        \mathcal{N}(x;\alpha_t y, \beta^2_t \mathrm{I}) \,
        p_{\mathrm{data}}(y) \, \mathrm{d}y}{\beta_t^2 \int_{\mathbb{R}^d} 
        \mathcal{N}(x;\alpha_t y, \beta^2_t \mathrm{I}) \, 
        p_{\mathrm{data}}(y) \, \mathrm{d}y},
     \\
    \label{eq:tsi_2} \tag{TSI}
        &\emph{Target score identity:} \qquad \nabla \log p_t(x) = \frac{\int_{\mathbb{R}^d} 
        \nabla \log p_{\mathrm{data}}(y) \mathcal{N}(x;\alpha_t y, \beta^2_t \mathrm{I}) p_{\mathrm{data}}(y) \, \mathrm{d}y}{\alpha_t \int_{\mathbb{R}^d} 
        \mathcal{N}(x;\alpha_t y, \beta^2_t \mathrm{I})
        p_{\mathrm{data}}(y) \, \mathrm{d}y},
     \\
    \label{eq:nsi_2} \tag{NSI}
        &\emph{Novel score identity:} \qquad
        \nabla \log p_t(x) = \frac{\int_{\mathbb{R}^d} (\alpha_t \nabla \log p_{\mathrm{data}}(y) -  (x - \alpha_t y))
        \mathcal{N}(x;\alpha_t y, \beta^2_t \mathrm{I}) \,
        p_{\mathrm{data}}(y) \, \mathrm{d}y}{(\alpha_t^2 + \beta_t^2) \int_{\mathbb{R}^d} 
        \mathcal{N}(x;\alpha_t y, \beta^2_t \mathrm{I}) \, 
        p_{\mathrm{data}}(y) \, \mathrm{d}y}.
\end{talign}
\end{lemma}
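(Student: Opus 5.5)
The plan is to start from \eqref{eq:p_t}, which is a direct consequence of the definition $\bar{X}_t = \alpha_t Y + \beta_t \varepsilon$ with $Y$ and $\varepsilon$ independent. Conditionally on $Y = y$, $\bar{X}_t \sim \mathcal{N}(\alpha_t y, \beta_t^2 \mathrm{I})$. Integrating against $p_{\mathrm{data}}(y)\,\mathrm{d}y$ gives the first expression. The change of variables $z = \alpha_t y$ rewrites it as the convolution of $(T_{\alpha_t})_{\#}p_{\mathrm{data}}$ with the Gaussian kernel, with the Jacobian factor giving the pushforward density.

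For \eqref{eq:csi_2}, I differentiate the first expression for $p_t$ under the integral sign, which is justified by dominated convergence since the Gaussian and its gradient are bounded for $\beta_t > 0$. The Gaussian gradient is $\nabla_x \mathcal{N}(x;\alpha_t y,\beta_t^2\mathrm{I}) = -\frac{x-\alpha_t y}{\beta_t^2}\mathcal{N}(x;\alpha_t y,\beta_t^2\mathrm{I})$. Dividing by $p_t(x)$ then gives the identity directly.

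For \eqref{eq:tsi_2}, I use the convolution form $p_t(x) = \int q_t(z)\,\mathcal{N}(x; z, \beta_t^2\mathrm{I})\,\mathrm{d}z$, where $q_t = (T_{\alpha_t})_{\#}p_{\mathrm{data}}$. Since the kernel depends on $x - z$, we have $\nabla_x \mathcal{N}(x;z,\cdot) = -\nabla_z \mathcal{N}(x;z,\cdot)$. Integrating by parts in $z$ moves the derivative onto $q_t$, so $\nabla p_t(x) = \int \nabla q_t(z)\,\mathcal{N}(x;z,\beta_t^2\mathrm{I})\,\mathrm{d}z$. Because $q_t(z) \propto p_{\mathrm{data}}(z/\alpha_t)$, we get $\nabla q_t(z) = \frac{1}{\alpha_t}(\nabla\log p_{\mathrm{data}})(z/\alpha_t)\,q_t(z)$. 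Changing variables back to $y = z/\alpha_t$ and dividing by $p_t(x)$ yields \eqref{eq:tsi_2}.

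The main obstacle is the integration by parts: the boundary terms must vanish. I would state mild regularity assumptions on $p_{\mathrm{data}}$, namely that it is $C^1$ and that $p_{\mathrm{data}}$ and $\nabla p_{\mathrm{data}}$ are integrable. Then $q_t\,\mathcal{N}(x;\cdot,\cdot)$ decays at infinity, because the Gaussian factor dominates and $p_{\mathrm{data}}$ is integrable, so the boundary terms vanish. These assumptions are implicit in the statement, and I would flag them as such.

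Finally, \eqref{eq:nsi_2} is a convex combination of the other two identities. Multiply \eqref{eq:tsi_2} by $\alpha_t^2/(\alpha_t^2+\beta_t^2)$ and \eqref{eq:csi_2} by $\beta_t^2/(\alpha_t^2+\beta_t^2)$, and add. The prefactors become $\alpha_t/(\alpha_t^2+\beta_t^2)$ on the target term and $-1/(\alpha_t^2+\beta_t^2)$ on the conditional term. Combining the numerators under one integral gives exactly \eqref{eq:nsi_2}, as already observed after \eqref{eq:nsi}.
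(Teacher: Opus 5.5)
Your derivations of the convolution formula for $p_t$, of (CSI) by differentiating the Gaussian kernel under the integral, and of (NSI) as the weighted sum of (TSI) and (CSI) are exactly the paper's. The genuine difference is in (TSI).

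The paper substitutes $z = x - \alpha_t y$, giving $p_t(x) = \int \mathcal{N}(z;0,\beta_t^2\mathrm{I})\, p_{\mathrm{data}}\big(\tfrac{x-z}{\alpha_t}\big)\,\alpha_t^{-d}\,\mathrm{d}z$, so all the $x$-dependence sits inside $p_{\mathrm{data}}$. Differentiating under the integral then yields $\tfrac{1}{\alpha_t}(\nabla p_{\mathrm{data}})\big(\tfrac{x-z}{\alpha_t}\big)$ directly, and undoing the substitution gives the identity. Both routes prove $\nabla(q_t \ast g) = (\nabla q_t) \ast g$, with $g$ the $\mathcal{N}(0,\beta_t^2\mathrm{I})$ density. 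The paper places $x$ in the argument of $q_t$; you place it in $g$ and transfer the derivative by parts. The paper's route never produces boundary terms, so the obstacle you single out does not arise there. In exchange, it needs a domination hypothesis to pass $\nabla_x$ onto $p_{\mathrm{data}}\big(\tfrac{x-z}{\alpha_t}\big)$ (e.g.\ bounded $\nabla p_{\mathrm{data}}$), which it leaves implicit. Your version makes the regularity explicit, and your hypotheses ($p_{\mathrm{data}} \in C^1$, $p_{\mathrm{data}}, \nabla p_{\mathrm{data}} \in L^1$) suffice.

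One correction concerns how you dispatch the boundary terms. Integrability of $p_{\mathrm{data}}$ does not give pointwise decay of $h(z) := q_t(z)\,\mathcal{N}(x;z,\beta_t^2\mathrm{I})$, so that is not the right reason they vanish. What works is that $h \in W^{1,1}(\mathbb{R}^d)$ under your assumptions. Take a cutoff $\phi_R = \phi(\cdot/R)$. Then $\int \nabla(h\phi_R) = 0$, $\big|\int h \nabla\phi_R\big| \le \tfrac{C}{R}\|h\|_{L^1} \to 0$, and $\int \phi_R \nabla h \to \int \nabla h$ by dominated convergence. Hence $\int \nabla h = 0$.

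Finally, because you work with $q_t \propto p_{\mathrm{data}}(\cdot/\alpha_t)$ up to a constant, your argument is insensitive to the Jacobian. In dimension $d$ that Jacobian is $\alpha_t^{-d}$, not the $\alpha_t^{-1}$ written in the statement.
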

\begin{proof}
While this result is not new (see \cite{debortoli2024target} and references within), we provide a proof here because of its relevance. Observe that $Z_t = X_t - \alpha_t Y = \beta_t \varepsilon \sim \mathcal{N}(0,\beta_t^2 \mathrm{I})$.
The following formula holds for the density of the noised fine-tuned distribution:
\begin{talign}
\begin{split}
    p_t(x) &= \int_{\mathbb{R}^d} \frac{\exp \big( - \frac{\|x - \alpha_t y\|^2}{2 \beta^2_t} \big)}{(2 \pi \beta^2_t)^{d/2}} p_{\mathrm{data}}(y) \, \mathrm{d}y = \int_{\mathbb{R}^d} \frac{\exp \big( - \frac{\|x - \tilde{y}\|^2}{2 \beta^2_t} \big)}{(2 \pi \beta^2_t)^{d/2}} p_{\mathrm{data}}(\tilde{y}/\alpha_t) \frac{1}{\alpha_t} \, \mathrm{d}\tilde{y} \\ &= \int_{\mathbb{R}^d} \frac{\exp \big( - \frac{\|x - \tilde{y}\|^2}{2 \beta^2_t} \big)}{(2 \pi \beta^2_t)^{d/2}} (T_{\alpha})_{\#} p_{\mathrm{data}}(\tilde{y}) \, \mathrm{d}\tilde{y} = [(T_{\alpha_t})_{\#} p_{\mathrm{data}} \ast \mathcal{N}(0,\beta_t^2 \mathrm{I})](x),
\end{split}
\end{talign}
where we applied the change of variables $\tilde{y} = T_{\alpha_t} y = \alpha_t y$, and that the density of the pushforward $(T_{\alpha_t})_{\#} p_{\mathrm{data}}(x)$ is $p_{\mathrm{data}}(x/\alpha_t) \frac{1}{\alpha_t}$. 

Equation \eqref{eq:csi} follows simply from $\nabla \log p_t(x) = \frac{\nabla p_t(x)}{p_t(x)}$ and from taking the gradient with respect to $x$ under the integration sign. We prove \eqref{eq:tsi} next.
If we let $x - \alpha_t y = z$, we have that $y = \frac{x - z}{\alpha_t}$, which implies that $|\frac{\mathrm{d}y}{\mathrm{d}z}| = \frac{1}{\alpha_t}$.
Thus, we can write
\begin{talign}
    p_t(x) = \int_{\mathbb{R}^d} \frac{\exp \big( - \frac{\|z\|^2}{2 \beta^2_t} \big)}{(2 \pi \beta_t^2)^{d/2}} p_{\mathrm{data}}(\frac{x - z}{\alpha_t}) \frac{1}{\alpha_t} \, \mathrm{d}z.
\end{talign}
And
\begin{talign}
\begin{split}
    \nabla p_t(x) &= \int_{\mathbb{R}^d} \frac{\exp \big( - \frac{\|z\|^2}{2 \beta^2_t} \big)}{(2 \pi \beta^2_t)^{d/2}} \nabla_x \big( p_{\mathrm{data}}(\frac{x - z}{\alpha_t}) \big) \frac{1}{\alpha_t} \, \mathrm{d}z \\ &= \int_{\mathbb{R}^d} \frac{\exp \big( - \frac{\|z\|^2}{2 \beta^2_t} \big)}{(2 \pi \beta^2_t)^{d/2}}  \nabla_x \log p_{\mathrm{data}}(\frac{x - z}{\alpha_t}) p_{\mathrm{data}}(\frac{x - z}{\alpha_t}) \frac{1}{\alpha_t} \, \mathrm{d}z
    \\ &= \int_{\mathbb{R}^d} \frac{\exp \big( - \frac{\|z\|^2}{2 \beta^2_t} \big)}{(2 \pi \beta^2_t)^{d/2}} \nabla \log p_{\mathrm{data}}(\frac{x - z}{\alpha_t}) p_{\mathrm{data}}(\frac{x - z}{\alpha_t}) \frac{1}{\alpha_t^2} \, \mathrm{d}z
    \\ &= \int_{\mathbb{R}^d} \frac{\exp \big( - \frac{\|x - \alpha_t y\|^2}{2 \beta^2_t} \big)}{(2 \pi \beta^2_t)^{d/2}} \nabla \log p_{\mathrm{data}}(y) p_{\mathrm{data}}(y) \frac{1}{\alpha_t} \, \mathrm{d}y.
\end{split}
\end{talign}
Using that $\nabla \log p_t(x) = \frac{\nabla p_t(x)}{p_t(x)}$ concludes the proof of \eqref{eq:tsi}. To prove \eqref{eq:nsi_2}, we sum $\frac{\alpha_t^2}{\alpha_t^2 + \beta_t^2}$ times the \eqref{eq:tsi} identity and $\frac{\beta_t^2}{\alpha_t^2 + \beta_t^2}$ times the \eqref{eq:csi} identity,
and obtain:
\begin{talign} 
    \nabla \log p_t(x) &= 
    \frac{\int_{\mathbb{R}^d} \frac{1}{\alpha_t^2 + \beta_t^2} \big( \alpha_t \nabla \log p_{\mathrm{data}}(y) -  (x - \alpha_t y) \big)
    \mathcal{N}(x;\alpha_t y, \beta^2_t \mathrm{I}) \,
    p_{\mathrm{data}}(y) \, \mathrm{d}y}{\int_{\mathbb{R}^d} 
    \mathcal{N}(x;\alpha_t y, \beta^2_t \mathrm{I}) \, 
    p_{\mathrm{data}}(y) \, \mathrm{d}y}. 
\end{talign}
\end{proof}

\begin{theorem}[Convolution with a Gaussian Preserves Strong Log-Concavity]
\label{thm:slc_convolution}
Let \(f:\mathbb{R}^d\to(0,\infty)\) be a density of the form $f(x) = e^{-\varphi(x)}$,
where $\varphi\in C^2(\mathbb{R}^d)$ satisfies $\nabla^2 \varphi(x) \;\succeq\; \gamma\,I$ for all $x\in\mathbb{R}^d$,
i.e. $f$ is $\gamma$-strongly log-concave.  Let $g(x) \;=\; (2\pi\sigma^2)^{-d/2}\exp \bigl(-\tfrac{\|x\|^2}{2\sigma^2}\bigr)$
be the density of \(\mathcal{N}(0,\sigma^2I)\).  Define
\[
h \;=\; f * g.
\]
Then \(h\) is \(\displaystyle\frac{\gamma}{1+\gamma\,\sigma^2}\)-strongly log-concave, i.e.
\begin{talign}
\nabla^2\bigl[-\log h(x)\bigr]
\;\succeq\;
\frac{\gamma}{1+\gamma\,\sigma^2}\;I
\quad\forall\,x\in\mathbb{R}^d.
\end{talign}
\end{theorem}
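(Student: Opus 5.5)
The plan is to compute the Hessian of $-\log h$ directly through a posterior covariance and bound that covariance with the Brascamp--Lieb inequality. Write
\[
h(x) = \int f(y)\, g(x-y)\, \mathrm{d}y .
\]
For fixed $x$, define the "posterior" density on $y$,
\[
\pi_x(y) \propto \exp\Big(-\varphi(y) - \tfrac{\|x-y\|^2}{2\sigma^2}\Big).
\]
Differentiating under the integral sign (justified by Gaussian tails and the fact that strong convexity of $\varphi$ gives at least quadratic growth), we get
\[
\nabla \log h(x) = -\frac{1}{\sigma^2}\big(x - \mathbb{E}_{\pi_x}[y]\big).
\]
Differentiating once more gives
\[
\nabla^2[-\log h(x)] = \frac{1}{\sigma^2} I - \frac{1}{\sigma^4}\mathrm{Cov}_{\pi_x}(y).
\]
This is the usual Tweedie-type identity, and its first-order part is exactly the conditional score identity already used in Lemma~\ref{lem:csi_tsi}. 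The problem thus reduces to an upper bound on the posterior covariance.

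The key step is that $\pi_x$ has potential $\varphi(y) + \|x-y\|^2/(2\sigma^2)$, whose Hessian is at least $(\gamma + \sigma^{-2}) I$. So $\pi_x$ is $(\gamma+\sigma^{-2})$-strongly log-concave. The Brascamp--Lieb inequality (equivalently, the Poincaré inequality for strongly log-concave measures, applied to linear test functions $y\mapsto\langle e, y\rangle$) then gives
\[
\mathrm{Cov}_{\pi_x}(y) \preceq \frac{1}{\gamma + \sigma^{-2}} I = \frac{\sigma^2}{1+\gamma\sigma^2} I .
\]
Plugging this in,
\[
\nabla^2[-\log h] \succeq \frac{1}{\sigma^2}\Big(1 - \frac{1}{1+\gamma\sigma^2}\Big) I = \frac{\gamma}{1+\gamma\sigma^2} I,
\]
which is the claim.

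The main obstacle is justifying the covariance bound rigorously. Brascamp--Lieb is a classical result I would cite. If a self-contained argument is preferred, the fallback is a Prékopa--Leindler route: write $h(x) = e^{-\frac{\gamma}{1+\gamma\sigma^2}\|x\|^2/2}\, k(x)$ and show $k$ is log-concave. Up to a factor independent of $y$, $k$ is the marginal in $x$ of $\exp\big(-[\varphi(y) - \tfrac{\gamma}{2}\|y\|^2] - Q(x,y)\big)$, where $Q$ is the quadratic $\tfrac{\gamma}{2}\|y\|^2 + \tfrac{\|x-y\|^2}{2\sigma^2} - \tfrac{\gamma}{2(1+\gamma\sigma^2)}\|x\|^2$. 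Since $\varphi - \tfrac{\gamma}{2}\|\cdot\|^2$ is convex, it suffices that $Q$ is jointly convex. Its minimum over $y$ is exactly zero, and its Hessian is positive semidefinite by a Schur complement computation. Marginals of log-concave functions are log-concave, which closes the argument. The remaining technical points, differentiation under the integral and the boundedness of the integrals, follow from the quadratic lower bound on $\varphi$.
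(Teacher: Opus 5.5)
Your proposal is correct and follows essentially the same route as the paper. The paper writes $\nabla^2[-\log h] = \mathbb{E}[\nabla_x^2 F] - \mathrm{Var}[\nabla_x F]$ for $F(y;x)=\varphi(y)+\|x-y\|^2/(2\sigma^2)$, which is your identity $\frac{1}{\sigma^2}I-\frac{1}{\sigma^4}\mathrm{Cov}_{\pi_x}(y)$, and then bounds the posterior covariance by Brascamp--Lieb using the $(\gamma+\sigma^{-2})$-strong convexity of $F(\cdot;x)$. Your Pr\'ekopa fallback is also valid: the Schur complement of $Q$ is exactly zero, so $Q$ is jointly convex with $\min_y Q=0$. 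It gives an alternative proof that bypasses the covariance bound entirely.
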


\begin{proof}
Set $\psi(z) = \frac{\|z\|^2}{2\sigma^2}$, $F(y;x) = \varphi(y) + \psi(x-y)$. Then $h(x) \;=\;
\int_{\mathbb{R}^d} e^{-F(y;x)} \,dy$.
A standard “log-sum-exp” Hessian identity yields
\begin{talign}
\nabla^2\bigl[-\log h(x)\bigr]
=
\mathbb{E}\bigl[\nabla^2_x F(Y;x)\bigr]
\;-\;
\mathrm{Var} \bigl[\nabla_x F(Y;x)\bigr],
\end{talign}
where \(Y\) is drawn from the density proportional to \(e^{-F(y;x)}\). We handle the two terms separately.

\noindent\emph{1. Second‐derivative term.}
\begin{talign}
\nabla^2_xF(y;x)
=\nabla^2\psi(x-y)
=\frac{1}{\sigma^2}I,
\end{talign}
so $\mathbb{E}[\nabla^2_xF]=\frac1{\sigma^2}I$.

\noindent\emph{2. Variance term.}
\begin{talign}
\nabla_xF(y;x)
=\nabla\psi(x-y)
=\frac{x-y}{\sigma^2},
\end{talign}
hence
$\mathrm{Var}[\nabla_xF]
=\frac1{\sigma^4}\,\mathrm{Cov}(Y)$.

Since \(F(\cdot;x)\) is \((\gamma + 1/\sigma^2)\)-strongly convex in \(y\),
the Brascamp–Lieb inequality gives
\begin{talign}
\mathrm{Cov}(Y)\;\preceq\;\frac{1}{\gamma + 1/\sigma^2}\,I
=\frac{\sigma^2}{1+\gamma\,\sigma^2}\,I.
\end{talign}
Therefore,
\begin{talign}
\mathrm{Var}[\nabla_xF]
\;\preceq\;
\frac{1}{\sigma^4}\cdot\frac{\sigma^2}{1+\gamma\,\sigma^2}\,I
=\frac{1}{\sigma^2(1+\gamma\,\sigma^2)}\,I.
\end{talign}
Combining,
\begin{talign}
\nabla^2[-\log h(x)]
\;\succeq\;
\frac{1}{\sigma^2}I
\;-\;
\frac{1}{\sigma^2(1+\gamma\,\sigma^2)}I
=
\frac{\gamma}{1+\gamma\,\sigma^2}\,I.
\end{talign}
Thus \(h\) is \(\frac{\gamma}{1+\gamma\sigma^2}\)-strongly log-concave.
\end{proof}
\begin{corollary} \label{cor:p_t_log_concavity}
    Let $p_t$ be the density of the marginal $\bar{X}_t$ of the reference flow defined in equation \eqref{eq:reference_flow}. Suppose that the density $p_{\mathrm{data}}$ is in $C^2(\mathbb{R}^d)$ and is $\gamma$-strongly log-concave, i.e. $- \nabla^2 \log p_{\mathrm{data}}(x) \;\succeq\; \gamma\,I$ for all $x\in\mathbb{R}^d$. Then, for all $t \in [0,1]$, the density $p_t$ is also in $C^2(\mathbb{R}^d)$ and $\frac{\gamma}{\alpha^2_t+\gamma \beta_t^2}$-strongly log-concave.
\end{corollary}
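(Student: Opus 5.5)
The plan is to write $p_t$ as the convolution in \eqref{eq:p_t} and apply \Cref{thm:slc_convolution}, after accounting for the scaling by $\alpha_t$. By \Cref{lem:csi_tsi}, $p_t = (T_{\alpha_t})_{\#} p_{\mathrm{data}} \ast \mathcal{N}(0,\beta_t^2 \mathrm{I})$, so there are two steps: first compute the strong log-concavity constant of the pushforward, then feed it into the theorem with $\sigma^2 = \beta_t^2$.

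\emph{Step 1 (scaling).} For $\alpha_t > 0$, the density of $(T_{\alpha_t})_{\#} p_{\mathrm{data}}$ is proportional to $p_{\mathrm{data}}(x/\alpha_t)$; the normalization is the Jacobian factor $\alpha_t^{-d}$ in dimension $d$, a constant that does not affect Hessians. Writing $p_{\mathrm{data}} = e^{-\varphi}$, we get
\begin{talign}
-\nabla^2_x \log p_{\mathrm{data}}(x/\alpha_t) = \alpha_t^{-2} \nabla^2 \varphi(x/\alpha_t) \succeq \frac{\gamma}{\alpha_t^2} \mathrm{I}.
\end{talign}
The pushforward is therefore $\gamma/\alpha_t^2$-strongly log-concave, and it is $C^2$ because $p_{\mathrm{data}}$ is.

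\emph{Step 2 (convolution).} Applying \Cref{thm:slc_convolution} with $\gamma' = \gamma/\alpha_t^2$ and $\sigma^2 = \beta_t^2$ shows that $p_t$ is strongly log-concave with constant
\begin{talign}
\frac{\gamma'}{1+\gamma' \beta_t^2} = \frac{\gamma/\alpha_t^2}{1 + \gamma\beta_t^2/\alpha_t^2} = \frac{\gamma}{\alpha_t^2 + \gamma \beta_t^2},
\end{talign}
which is the claimed constant. For $C^2$ regularity, when $\beta_t > 0$ the convolution with a Gaussian is smooth, and differentiating under the integral sign is justified by Gaussian tails.

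\emph{Step 3 (endpoints).} The main care is needed at $t$ where $\alpha_t = 0$ or $\beta_t = 0$, since there the convolution representation degenerates.
\begin{itemize}
\item At $t=1$ we have $\beta_1 = 0$ and $\alpha_1 = 1$, so $p_1 = p_{\mathrm{data}}$, which is $\gamma$-strongly log-concave and $C^2$ by assumption. This matches $\gamma/(1+0)$, so this case follows directly from the hypothesis without invoking the theorem.
\item At $t=0$ we have $\alpha_0 = 0$, so $\bar{X}_0 = \beta_0 \varepsilon \sim \mathcal{N}(0,\beta_0^2 \mathrm{I})$. This is exactly $1/\beta_0^2$-strongly log-concave, which agrees with $\gamma/(0+\gamma\beta_0^2)$.
\end{itemize}
The pushforward scaling step is not valid at $\alpha_t = 0$, so the $t=0$ case must be handled separately as above. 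I expect no real obstacle beyond this endpoint bookkeeping. If $\alpha_t < 0$ could occur, the same computation goes through with $|\alpha_t|$, since only $\alpha_t^2$ appears.
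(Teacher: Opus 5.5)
Your proof is correct and follows the paper's argument: you show that $(T_{\alpha_t})_{\#}p_{\mathrm{data}}$ is $\gamma/\alpha_t^2$-strongly log-concave and then apply \Cref{thm:slc_convolution} with variance $\beta_t^2$. Your separate handling of $t=0$ (where $\alpha_0=0$ makes the pushforward undefined) and $t=1$ (where $\beta_1=0$), together with your justification of the $C^2$ claim and the corrected Jacobian factor $\alpha_t^{-d}$, fills in details the paper leaves implicit.
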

\begin{proof}
    By equation \eqref{eq:p_t} from Lemma \ref{lem:csi_tsi}, we have that 
    \begin{talign}
        p_t(x) = [(T_{\alpha_t})_{\#} p_{\mathrm{data}} \ast \mathcal{N}(0,\beta_t^2 \mathrm{I})](x),
    \end{talign}
    Observe that  
    \begin{talign}
    \begin{split}
        &\nabla \log [(T_{\alpha_t})_{\#} p_{\mathrm{data}}(x)] = \frac{\nabla (T_{\alpha_t})_{\#} p_{\mathrm{data}}(x)}{(T_{\alpha_t})_{\#} p_{\mathrm{data}}(x)} = \frac{\nabla_x \big( p_{\mathrm{data}}(x/\alpha_t) \frac{1}{\alpha_t} \big)}{p_{\mathrm{data}}(x/\alpha_t) \frac{1}{\alpha_t}} = \frac{\nabla p_{\mathrm{data}}(x/\alpha_t) \frac{1}{\alpha^2_t}}{p_{\mathrm{data}}(x/\alpha_t) \frac{1}{\alpha_t}} = \frac{\nabla \log p_{\mathrm{data}}(x/\alpha_t)}{\alpha_t}, \\
        &\implies - \nabla^2 \log [(T_{\alpha_t})_{\#} p_{\mathrm{data}}(x)] = - \frac{\nabla^2 \log p_{\mathrm{data}}(x/\alpha_t)}{\alpha^2_t} \succeq \frac{\gamma}{\alpha^2_t} \mathrm{I},
    \end{split}
    \end{talign}
    where we used the $\gamma$-strong log-concavity of $p_{\mathrm{data}}$. This shows that $(T_{\alpha_t})_{\#} p_{\mathrm{data}}$ is $\frac{\gamma}{\alpha_t^2}$-strongly concave. Thus, a direct application of Theorem \ref{thm:slc_convolution} with $f = p_{\mathrm{data}}$, $g = \mathcal{N}(0,\beta_t^2 \mathrm{I})$ implies that the strong log-concavity constant of $p_t$ is
    \begin{talign}
        \frac{\gamma}{1+\gamma\sigma^2} = \frac{\frac{\gamma}{\alpha^2_t}}{1+\frac{\gamma \beta_t^2}{\alpha^2_t}} = \frac{\gamma}{\alpha^2_t+\gamma \beta_t^2}.
    \end{talign}
\end{proof}

\begin{theorem}[Adaptation of Theorem 2.3 of \cite{shaul2024bespoke}] \label{thm:bespoke}
Let $(\alpha_t, \beta_t)$ and $(\tilde{\alpha}_r, \tilde{\beta}_r)$ be two pairs of flow matching coefficients, i.e., differentiable functions $\alpha_t, \tilde{\alpha}_r : [0,1] \to [0,1]$, and $\alpha_t, \tilde{\alpha}_r : [0,1] \to [0,+\infty)$ satisfying:
\begin{talign}
\begin{split}
&\alpha_0 = \tilde{\alpha}_0 = 0 = \beta_1 = \tilde{\beta}_1, \quad \alpha_1 = \tilde{\alpha}_1 = 1, \\ &\text{and} \ \mathrm{SNR}(t) := \frac{\alpha_t}{\beta_t}, \ \widetilde{\mathrm{SNR}}(t) := \frac{\tilde{\alpha}_t}{\tilde{\beta}_t} \text{ are strictly increasing on }[0,1).
\end{split}
\end{talign}

Define the scale-time transformation from \(r\) to \(t(r)\) via matching signal-to-noise ratios:
\begin{talign}
\frac{\tilde{\alpha}_r}{\tilde{\beta}_r} = \frac{\alpha_{t(r)}}{\beta_{t(r)}},
\end{talign}
and define the scale function
\begin{talign}
s_r := \frac{\tilde{\beta}_r}{\beta_{t(r)}} = \frac{\tilde{\alpha}_r}{\alpha_{t(r)}}.
\end{talign}
Then, if $p_t$ is the marginal of $\bar{X}_t := \alpha_t Y + \beta_t \varepsilon$ and $\tilde{p}_t$ is the marginal of $\tilde{X}_t := \tilde{\alpha}_t Y + \tilde{\beta}_t \varepsilon$, for all $r \in [0,1]$,
\begin{talign}
    \nabla \log \tilde{p}_r(x) = \frac{1}{s_r} \nabla \log p_{t(r)}(x/s_r).
\end{talign}

\end{theorem}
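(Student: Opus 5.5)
The proof is a direct change of variables in the Gaussian convolution formula \eqref{eq:p_t} of \Cref{lem:csi_tsi}. Fix $r \in [0,1)$ and set $t = t(r)$. This is well defined: both signal-to-noise ratios are strictly increasing and continuous, start at $0$ at time $0$, and tend to $+\infty$ as the time tends to $1$. So $\mathrm{SNR}$ is a bijection onto $[0,\infty)$, and $t(r) = \mathrm{SNR}^{-1}(\widetilde{\mathrm{SNR}}(r))$. The defining relation gives $\tilde{\alpha}_r/\alpha_t = \tilde{\beta}_r/\beta_t$, so the scale $s_r$ is consistently defined, with $\tilde{\alpha}_r = s_r \alpha_t$ and $\tilde{\beta}_r = s_r \beta_t$. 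The endpoint $r=1$ (and $r=0$, where $\alpha$ vanishes) is handled separately. At $r=1$ we have $t(1)=1$ and $s_1 = \tilde{\alpha}_1/\alpha_1 = 1$, so the claim is trivial. At $r = 0$ we use the $\beta$-ratio to define $s_0$, and both marginals are centered Gaussians.

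The key step is the distributional identity $\tilde{X}_r = \tilde{\alpha}_r Y + \tilde{\beta}_r \varepsilon = s_r(\alpha_t Y + \beta_t \varepsilon) = s_r \bar{X}_t$. It holds because $(Y,\varepsilon)$ has the same joint law in both constructions. Hence $\tilde{p}_r = (T_{s_r})_{\#} p_t$, and by the change-of-variables formula the density is $\tilde{p}_r(x) = s_r^{-d} p_t(x/s_r)$. (The lemma writes the Jacobian factor as $1/\alpha$, but in dimension $d$ it is $s_r^{-d}$; any such constant disappears after taking the logarithmic gradient.) Equivalently, one can substitute $\tilde{\alpha}_r = s_r\alpha_t$, $\tilde{\beta}_r = s_r\beta_t$ into \eqref{eq:p_t} and pull $s_r$ out of the Gaussian kernel: $\mathcal{N}(x; s_r\alpha_t y, s_r^2\beta_t^2 \mathrm{I}) = s_r^{-d}\,\mathcal{N}(x/s_r;\alpha_t y,\beta_t^2 \mathrm{I})$.

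Taking $\log$ and then $\nabla_x$ with the chain rule gives $\nabla \log \tilde{p}_r(x) = \frac{1}{s_r} (\nabla \log p_t)(x/s_r)$, which is the claim. There is no real obstacle here. The only points needing care are the well-definedness of $t(r)$ via monotonicity of the SNRs, and the degenerate endpoints where $\alpha$ or $\beta$ vanish, which I would dispatch by the direct computations above or by continuity.
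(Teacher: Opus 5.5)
Your proof is correct and is essentially the paper's argument. The paper substitutes the kernel rescaling $\mathcal{N}(x;\tilde{\alpha}_r y,\tilde{\beta}_r^2\mathrm{I}) = s_r^{-d}\,\mathcal{N}(x/s_r;\alpha_{t(r)} y,\beta_{t(r)}^2\mathrm{I})$ into the conditional score identity for both marginals, whereas you apply the same rescaling at the density level (equivalently $\tilde{X}_r = s_r \bar{X}_{t(r)}$) and finish with the chain rule, a marginally more direct route to the same identity that also handles the endpoints $r\in\{0,1\}$, which the paper leaves implicit.
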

\begin{proof}
    First, we prove that $t(r)$ is well-defined. Observe that by assumption, $\mathrm{SNR}$ and $\widetilde{\mathrm{SNR}}$ are bijective functions between $[0,1) \to [0,+\infty)$, and that we can construct $t(r) := \mathrm{SNR}^{-1}\big(\widetilde{\mathrm{SNR}}(r))$.
    
    Using the conditional score identity, we obtain that
    \begin{talign} \label{eq:csi_2_app} 
        \nabla \log p_t(x) &= 
        - \frac{\int_{\mathbb{R}^d} \frac{x - \alpha_t y}{\beta^2_t} 
        \mathcal{N}(x;\alpha_t y, \beta^2_t \mathrm{I}) \,
        p_{\mathrm{data}}(y) \, \mathrm{d}y}{\int_{\mathbb{R}^d}  
        \mathcal{N}(x;\alpha_t y, \beta^2_t \mathrm{I}) \, 
        p_{\mathrm{data}}(y) \, \mathrm{d}y}, \qquad
        \nabla \log \tilde{p}_t(x) = 
        - \frac{\int_{\mathbb{R}^d} \frac{x - \tilde{\alpha}_t y}{\tilde{\beta}^2_t} 
        \mathcal{N}(x;\tilde{\alpha}_t y, \tilde{\beta}^2_t \mathrm{I}) \,
        p_{\mathrm{data}}(y) \, \mathrm{d}y}{\int_{\mathbb{R}^d}  
        \mathcal{N}(x;\tilde{\alpha}_t y, \tilde{\beta}^2_t \mathrm{I}) \, 
        p_{\mathrm{data}}(y) \, \mathrm{d}y},
    \end{talign}
    and observe that
    \begin{talign}
        \frac{\| x - \tilde{\alpha}_r y \|^2}{2\tilde{\beta}^2_r} = \frac{\| x - s_r \alpha_{t(r)} y \|^2}{2 s^2_r \beta^2_{t(r)}} = \frac{\| x / s_r - \alpha_{t(r)} y \|^2}{2 \beta^2_{t(r)}}, \quad \frac{x - \tilde{\alpha}_r y}{\tilde{\beta}^2_r} = \frac{x - s_r \alpha_{t(r)} y}{s^2_r \beta^2_{t(r)}} = \frac{x / s_r - \alpha_{t(r)} y}{s_r \beta^2_{t(r)}}
    \end{talign}
    which means that
    \begin{talign}
        \mathcal{N}(x;\tilde{\alpha}_r y, \tilde{\beta}^2_r \mathrm{I}) = \frac{\exp\big( - \frac{\|x - \tilde{\alpha}_r y\|^2}{2 \tilde{\beta}^2_r} \big)}{(2 \pi \tilde{\beta}^2_r)^{d/2}} = \frac{\exp\big( - \frac{\|x/s_r - \tilde{\alpha}_{t(r)} y\|^2}{2 \beta^2_{t(r)}} \big)}{(2 \pi s^2 _r\beta^2_{t(r)})^{d/2}} = \frac{1}{s_r^d} \mathcal{N}(x/s_r;\tilde{\alpha}_{t(r)} y, \tilde{\beta}^2_{t(r)} \mathrm{I}),
    \end{talign}
    and
    \begin{talign}
    \begin{split}
        \nabla \log \tilde{p}_r(x) &= 
        - \frac{\int_{\mathbb{R}^d} \frac{x - \tilde{\alpha}_r y}{\tilde{\beta}^2_r} 
        \mathcal{N}(x;\tilde{\alpha}_r y, \tilde{\beta}^2_r \mathrm{I}) \,
        p_{\mathrm{data}}(y) \, \mathrm{d}y}{\int_{\mathbb{R}^d}  
        \mathcal{N}(x;\tilde{\alpha}_r y, \tilde{\beta}^2_r \mathrm{I}) \, 
        p_{\mathrm{data}}(y) \, \mathrm{d}y} \\ &= - \frac{\int_{\mathbb{R}^d} \frac{x / s_r - \alpha_{t(r)} y}{s_r \beta^2_{t(r)}}
        \mathcal{N}(x/s_r;\tilde{\alpha}_{t(r)} y, \tilde{\beta}^2_{t(r)} \mathrm{I}) \,
        p_{\mathrm{data}}(y) \, \mathrm{d}y}{\int_{\mathbb{R}^d}  
        \mathcal{N}(x/s_r;\tilde{\alpha}_{t(r)} y, \tilde{\beta}^2_{t(r)} \mathrm{I}) \, 
        p_{\mathrm{data}}(y) \, \mathrm{d}y} = \frac{1}{s_r} \nabla \log p_{t(r)}(x/s_r).
    \end{split}
    \end{talign}
\end{proof}
\begin{corollary} \label{cor:OT_FM}
    Consider Rectified Flow with two values $\sigma_0$, $\tilde{\sigma}_0$. That is, we take the pairs $(\alpha_t,\beta_t)$ and $(\tilde{\alpha}_t,\tilde{\beta}_t)$ in Theorem \ref{thm:bespoke} such that:
    \begin{talign}
        \alpha_t = \tilde{\alpha}_t = \bar{\alpha}_t, \qquad \beta_t = (1-\bar{\alpha}_t) \sigma_0, \qquad \tilde{\beta}_t = (1-\bar{\alpha}_t) \tilde{\sigma}_0.
    \end{talign}
    Let $\bar{\alpha}^{-1} : [0,1] \to [0,1]$ be the inverse of the function $\bar{\alpha}(t) := \bar{\alpha}_t$. Then, we have that
    \begin{talign}
        t(r) = \bar{\alpha}^{-1} (\frac{\sigma_0 \bar{\alpha}_r}{\tilde{\sigma}_0(1-\bar{\alpha}_r)+\sigma_0 \bar{\alpha}_r}), \qquad s_r = \frac{\tilde{\sigma}_0(1-\bar{\alpha}_r)+\sigma_0 \bar{\alpha}_r}{\sigma_0}.
    \end{talign}
    and by Theorem \ref{thm:bespoke}, if $p_t$ is the marginal of $\bar{X}_t := \alpha_t Y + \beta_t \varepsilon$ and $\tilde{p}_t$ is the marginal of $\tilde{X}_t := \tilde{\alpha}_t Y + \tilde{\beta}_t \varepsilon$,
    \begin{talign}
        \nabla \log \tilde{p}_r(x) = \frac{1}{s_r} \nabla \log p_{t(r)}(x/s_r).
    \end{talign}
\end{corollary}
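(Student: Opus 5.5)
The plan is to verify the hypotheses of Theorem \ref{thm:bespoke} for the two Rectified Flow schedules, solve the SNR-matching equation for $t(r)$ in closed form, read off $s_r$, and then invoke the theorem directly. Throughout we assume, as in \Cref{eq:FM_DDIM_Follmer}, that $\bar{\alpha}$ is continuous and strictly increasing with $\bar{\alpha}_0 = 0$ and $\bar{\alpha}_1 = 1$. Hence $\bar{\alpha}^{-1}$ exists on $[0,1]$.

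\textbf{Step 1: hypotheses.} The boundary conditions $\alpha_0=\tilde\alpha_0=0$, $\alpha_1=\tilde\alpha_1=1$ and $\beta_1=\tilde\beta_1=0$ are immediate. The signal-to-noise ratios are
\begin{talign}
\mathrm{SNR}(t) = \frac{\bar{\alpha}_t}{(1-\bar{\alpha}_t)\sigma_0}, \qquad \widetilde{\mathrm{SNR}}(r)=\frac{\bar{\alpha}_r}{(1-\bar{\alpha}_r)\tilde\sigma_0}.
\end{talign}
The map $a \mapsto a/(1-a)$ is strictly increasing on $[0,1)$ and tends to $+\infty$ as $a \to 1$. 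Composed with the strictly increasing $\bar{\alpha}$, it shows that both ratios are strictly increasing bijections $[0,1)\to[0,+\infty)$.

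\textbf{Step 2: solve for $t(r)$.} For $r\in[0,1)$, set $c := \widetilde{\mathrm{SNR}}(r)\,\sigma_0 = \frac{\sigma_0\bar{\alpha}_r}{\tilde\sigma_0(1-\bar{\alpha}_r)}$. The matching condition $\mathrm{SNR}(t)=\widetilde{\mathrm{SNR}}(r)$ is equivalent to $\bar{\alpha}_t/(1-\bar{\alpha}_t)=c$, that is, $\bar{\alpha}_t = c/(1+c)$. Clearing denominators gives
\begin{talign}
\bar{\alpha}_{t(r)} = \frac{\sigma_0\bar{\alpha}_r}{\tilde\sigma_0(1-\bar{\alpha}_r)+\sigma_0\bar{\alpha}_r},
\end{talign}
and applying $\bar{\alpha}^{-1}$ yields the claimed formula for $t(r)$.

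\textbf{Step 3: compute $s_r$.} Writing $D_r := \tilde\sigma_0(1-\bar{\alpha}_r)+\sigma_0\bar{\alpha}_r$, we get $s_r=\tilde\alpha_r/\alpha_{t(r)} = \bar{\alpha}_r D_r/(\sigma_0\bar{\alpha}_r) = D_r/\sigma_0$ whenever $\bar{\alpha}_r>0$. As a consistency check, we also verify the other expression $\tilde\beta_r/\beta_{t(r)}$. Since $1-\bar{\alpha}_{t(r)}=\tilde\sigma_0(1-\bar{\alpha}_r)/D_r$, we have
\begin{talign}
\frac{\tilde\beta_r}{\beta_{t(r)}}=\frac{(1-\bar{\alpha}_r)\tilde\sigma_0}{(1-\bar{\alpha}_{t(r)})\sigma_0} = \frac{D_r}{\sigma_0},
\end{talign}
in agreement with the first expression. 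This second expression also covers $r=0$: there $t(0)=0$ and $s_0=\tilde\sigma_0/\sigma_0$, even though the ratio $\tilde\alpha_0/\alpha_0$ is $0/0$.

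\textbf{Step 4: conclude.} The identity $\nabla\log\tilde p_r(x)=\frac1{s_r}\nabla\log p_{t(r)}(x/s_r)$ now follows from Theorem \ref{thm:bespoke}. The only delicate point is the endpoint $r=1$, where the SNR is infinite. There the formulas give $t(1)=\bar{\alpha}^{-1}(1)=1$ and $s_1=1$, and both marginals equal $p_{\mathrm{data}}$, so the identity holds trivially. The computation in Steps 2 and 3 is routine, so this endpoint bookkeeping is the main thing to state carefully.
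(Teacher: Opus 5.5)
Your proposal is correct and follows essentially the same route as the paper. Like the paper, you invert $\mathrm{SNR}$ via $\bar{\alpha}_t = c/(1+c)$, compose with $\widetilde{\mathrm{SNR}}$ to get $t(r)$, compute $s_r = \bar{\alpha}_r/\bar{\alpha}_{t(r)}$, and invoke Theorem~\ref{thm:bespoke}. Your additional checks are sound and clean up endpoint issues the paper's proof glosses over: the strict monotonicity and bijectivity of the SNRs, the cross-check $s_r = \tilde{\beta}_r/\beta_{t(r)}$ (which covers $r=0$, where $\tilde{\alpha}_0/\alpha_0$ is $0/0$), and the separate treatment of $r=1$, where the SNR is infinite.
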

\begin{proof}
    Observe that with these choices,
    \begin{talign}
        \mathrm{SNR}(t) := \frac{\alpha_t}{\beta_t} = \frac{\bar{\alpha}_t}{(1-\bar{\alpha}_t) \sigma_0}, \qquad \widetilde{\mathrm{SNR}}(t) := \frac{\bar{\alpha}_t}{(1-\bar{\alpha}_t) \tilde{\sigma}_0}.
    \end{talign}
    We invert $\mathrm{SNR}$: 
    \begin{talign}
        y = \frac{\bar{\alpha}_t}{(1-\bar{\alpha}_t) \sigma_0} \implies \sigma_0 y - \bar{\alpha}_t \sigma_0 y = \bar{\alpha}_t \implies \bar{\alpha}(t) := \bar{\alpha}_t = \frac{\sigma_0 y}{1+\sigma_0 y} \implies \mathrm{SNR}^{-1} = \bar{\alpha}^{-1} (\frac{\sigma_0 y}{1+\sigma_0 y}).
    \end{talign}
    Thus,
    \begin{talign}
        t(r) = \mathrm{SNR}^{-1}\big(\widetilde{\mathrm{SNR}}(r)) = \bar{\alpha}^{-1} (\frac{\sigma_0\frac{ \bar{\alpha}_r}{(1-\bar{\alpha}_r) \tilde{\sigma}_0}}{1+\sigma_0 \frac{\bar{\alpha}_r}{(1-\bar{\alpha}_r) \tilde{\sigma}_0}}) = \bar{\alpha}^{-1} (\frac{\sigma_0 \bar{\alpha}_r}{\tilde{\sigma}_0(1-\bar{\alpha}_r)+\sigma_0 \bar{\alpha}_r}),
    \end{talign}
    and 
    \begin{talign}
        s_r = \frac{\tilde{\alpha}_r}{\alpha_{t(r)}} = \frac{\bar{\alpha}_r}{\bar{\alpha}_{t(r)}} = \frac{\bar{\alpha}_r}{\frac{\sigma_0 \bar{\alpha}_r}{\tilde{\sigma}_0(1-\bar{\alpha}_r)+\sigma_0 \bar{\alpha}_r}} = \frac{\tilde{\sigma}_0(1-\bar{\alpha}_r)+\sigma_0 \bar{\alpha}_r}{\sigma_0}.
    \end{talign}
    Applying Theorem \ref{thm:bespoke} yields the final result.
\end{proof}

\begin{proposition}[Forward--backward Radon--Nikodym derivatives, Prop.~2.2 of \cite{vargas2024transport}]\label{prop:forward-backward-rnd}
Consider the SDEs
\begin{talign}
\mathrm{d}Y_t &= \gamma_t^{+}(Y_t)\,\mathrm{d}t + \sigma(t) \,\overrightarrow{\mathrm{d}W_t},
\qquad Y_0 \sim \Gamma_0 
\;\;\Longrightarrow\;\;
\bigl(Y_t\bigr)_{0 \le t \le T} \sim 
\overrightarrow{\mathbb{P}}^{\Gamma_0,\gamma_t^{+}}, 
\label{eq:forward-sde-12gammaplus}\\ 
\mathrm{d}Y_t &= \gamma^{-}(Y_t)\,\mathrm{d}t + \sigma(t)\,\overleftarrow{\mathrm{d}W_t},
\qquad Y_T \sim \Gamma_T
\;\;\Longrightarrow\;\;
\bigl(Y_t\bigr)_{0 \le t \le T} \sim 
\overleftarrow{\mathbb{P}}^{\Gamma_T,\gamma^{-}},
\label{eq:backward-sde-12gammaminus} \\
\mathrm{d}Y_t &= a_t(Y_t)\,\mathrm{d}t + \sigma(t)\,\overrightarrow{\mathrm{d}W_t},
\qquad Y_0 \sim \mu 
\;\;\Longrightarrow\;\;
\bigl(Y_t\bigr)_{0 \le t \le T} \sim 
\overrightarrow{\mathbb{P}}^{\mu,a}, 
\label{eq:forward-sde-12a}\\ 
\mathrm{d}Y_t &= b_t(Y_t)\,\mathrm{d}t + \sigma(t)\,\overleftarrow{\mathrm{d}W_t},
\qquad Y_T \sim \nu
\;\;\Longrightarrow\;\;
\bigl(Y_t\bigr)_{0 \le t \le T} \sim 
\overleftarrow{\mathbb{P}}^{\nu,b}.
\label{eq:backward-sde-12b}
\end{talign}
Here, \eqref{eq:forward-sde-12gammaplus} and \eqref{eq:forward-sde-12a} are forward Itô SDEs, and \eqref{eq:backward-sde-12gammaminus} and \eqref{eq:backward-sde-12b} are backward Itô SDEs, i.e. \eqref{eq:forward-sde-12gammaplus} and \eqref{eq:backward-sde-12gammaminus} are the continuous-time limits of
\begin{talign}
\begin{split}
    y_{\ell + 1} &= y_{\ell} + \Delta t \gamma_{\ell \Delta t}^{+}(y_{\ell}) + \sqrt{\Delta t} \sigma(\ell \Delta t) \xi_{\ell}, \qquad \xi_{\ell} \sim \mathcal{N}(0,I), \qquad y_{0} \sim \Gamma_0, \\
    y_{\ell - 1} &= y_{\ell} + \Delta t \gamma_{\ell \Delta t}^{-}(y_{\ell}) + \sqrt{\Delta t} \sigma(\ell \Delta t) \xi_{\ell}, \qquad \xi_{\ell} \sim \mathcal{N}(0,I), \qquad y_{T} \sim \Gamma_T.
\end{split}
\end{talign}
Suppose that 
\begin{talign}
\overrightarrow{\mathbb{P}}^{\Gamma_0,\gamma^+}
=
\overleftarrow{\mathbb{P}}^{\Gamma_T,\gamma^-},
\end{talign}
and that it is absolutely continuous with respect to both $\overrightarrow{\mathbb{P}}^{\mu,a}$ and $\overleftarrow{\mathbb{P}}^{\nu,b}$. Then, $\overrightarrow{\mathbb{P}}^{\mu,a}$-almost surely, the corresponding Radon--Nikodym derivative can be expressed as
\begin{talign}
\log \frac{\mathrm{d}\overrightarrow{\mathbb{P}}^{\mu,a}}
        {\mathrm{d}\overleftarrow{\mathbb{P}}^{\nu,b}} (Y)
&= 
\log \frac{\mathrm{d}\mu}{\mathrm{d}\Gamma_0}(Y_0)
- \log \frac{\mathrm{d}\nu}{\mathrm{d}\Gamma_T}(Y_T) \label{eq:rnd-14a}\\
&\quad
+ 
\int_{0}^{T}
\sigma(t)^{-2} \big\langle \bigl(a_t - \gamma^{+}_t\bigr)(Y_t),
\overrightarrow{\mathrm{d}Y_t}
- \tfrac{1}{2}\bigl(a_t + \gamma^{+}_t\bigr)(Y_t)\,\mathrm{d}t \big\rangle 
\label{eq:rnd-14b}\\
&\quad
- 
\int_{0}^{T}
\sigma(t)^{-2}
\big\langle \bigl(b_t - \gamma^{-}_t\bigr)(Y_t), 
\overleftarrow{\mathrm{d}Y_t}
- \tfrac{1}{2}\bigl(b_t + \gamma^{-}_t\bigr)(Y_t)\,\mathrm{d}t \big\rangle.
\label{eq:rnd-14c}
\end{talign}
\end{proposition}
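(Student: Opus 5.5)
This is the forward–backward RND result of Prop.~2.2 in \cite{vargas2024transport}; I would prove it by time discretization. The plan is to compute the likelihood ratio of the Euler--Maruyama chains, rewrite it relative to the reference chain that corresponds to $\overrightarrow{\mathbb{P}}^{\Gamma_0,\gamma^+}=\overleftarrow{\mathbb{P}}^{\Gamma_T,\gamma^-}$, and pass to the limit $\Delta t\to 0$.

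\emph{Step 1: factor through the reference measure.} Write $\mathbb{Q}:=\overrightarrow{\mathbb{P}}^{\Gamma_0,\gamma^+}=\overleftarrow{\mathbb{P}}^{\Gamma_T,\gamma^-}$. By the chain rule for Radon--Nikodym derivatives, using the assumed absolute continuity,
\begin{talign}
\log \frac{\mathrm{d}\overrightarrow{\mathbb{P}}^{\mu,a}}{\mathrm{d}\overleftarrow{\mathbb{P}}^{\nu,b}} = \log \frac{\mathrm{d}\overrightarrow{\mathbb{P}}^{\mu,a}}{\mathrm{d}\mathbb{Q}} - \log \frac{\mathrm{d}\overleftarrow{\mathbb{P}}^{\nu,b}}{\mathrm{d}\mathbb{Q}}.
\end{talign}
For the first term, use the forward representation of $\mathbb{Q}$. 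Both measures are forward diffusions with the same diffusion coefficient $\sigma(t)$. Girsanov's theorem, together with disintegration over the initial condition, gives $\log\frac{\mathrm{d}\mu}{\mathrm{d}\Gamma_0}(Y_0)$ plus the stochastic integral \eqref{eq:rnd-14b}. Discretely, this is the ratio of Gaussian transition kernels $\mathcal{N}(y_{\ell+1}; y_\ell+\Delta t\, a, \Delta t\sigma^2)/\mathcal{N}(y_{\ell+1}; y_\ell+\Delta t\,\gamma^+,\Delta t\sigma^2)$. Its logarithm equals $\sigma^{-2}\langle a-\gamma^+, (y_{\ell+1}-y_\ell) - \tfrac{\Delta t}{2}(a+\gamma^+)\rangle$, with drifts evaluated at the left endpoint $y_\ell$. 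Summing over $\ell$ gives forward Itô sums.

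For the second term, use the backward representation of $\mathbb{Q}$. The identical computation with the chain run from $T$ down to $0$ gives $\log\frac{\mathrm{d}\nu}{\mathrm{d}\Gamma_T}(Y_T)$ plus the backward integral appearing in \eqref{eq:rnd-14c}. Here the drifts are evaluated at the right endpoint $y_\ell$ of each increment $y_{\ell-1}-y_\ell$, which is exactly the backward Itô convention. Subtracting the two expressions gives \eqref{eq:rnd-14a}--\eqref{eq:rnd-14c}.

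\emph{Step 2: justify the limit.} This is the main obstacle. The discrete likelihood ratios are ratios of finite-dimensional densities on $(y_0,\dots,y_N)$. I would first show that under mild regularity (Lipschitz drifts, $\sigma$ bounded away from $0$) the forward and backward Riemann sums converge in probability under $\overrightarrow{\mathbb{P}}^{\mu,a}$ to the corresponding Itô integrals. I would then invoke the standard fact that these finite-dimensional RNDs converge to the path-space RND, the argument used in \cite{vargas2024transport}. The delicate point is that the backward integral must be evaluated along a path drawn from the forward measure. This requires quasi-invariance: backward stochastic integrals are well defined $\overrightarrow{\mathbb{P}}^{\mu,a}$-a.s. because that measure is equivalent to $\mathbb{Q}$, under which the path is a backward semimartingale. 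Since the statement allows us to assume the absolute-continuity hypotheses, I would state the result $\overrightarrow{\mathbb{P}}^{\mu,a}$-a.s. and rely on them for this transfer.
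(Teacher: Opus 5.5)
The paper does not prove this proposition itself (it is imported from \cite{vargas2024transport}), and your Step~1 is the standard argument behind it: factor through $\mathbb{Q}$, apply Girsanov forward in time to $\mathrm{d}\overrightarrow{\mathbb{P}}^{\mu,a}/\mathrm{d}\mathbb{Q}$, and apply it to the reversed path $s\mapsto Y_{T-s}$ for $\mathrm{d}\overleftarrow{\mathbb{P}}^{\nu,b}/\mathrm{d}\mathbb{Q}$, so that backward It\^o integrals become forward ones; the proposal is correct. Two small caveats: the chain rule as you use it needs $\overrightarrow{\mathbb{P}}^{\mu,a}\ll\mathbb{Q}$ and $\overleftarrow{\mathbb{P}}^{\nu,b}\ll\mathbb{Q}$ (the stated hypothesis only gives $\mathbb{Q}\ll$ both), which Girsanov supplies for $\mu\sim\Gamma_0$, $\nu\sim\Gamma_T$ under Novikov-type conditions; and Step~2 is unnecessary once Girsanov is used in continuous time, while its appeal to convergence of ``finite-dimensional RNDs'' is misplaced, because Euler--Maruyama Gaussian-kernel ratios are not the Radon--Nikodym derivatives of the path measures restricted to the time grid, so martingale convergence does not apply to them directly.
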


\begin{proposition}[Nelson’s relation, \cite{nelson1967dynamical,anderson1982reverse}] \label{prop:nelson}
Let $\overrightarrow{\mathbb{P}}^{\mu,a}$ and $\overleftarrow{P}^{\nu,b}$ be the path measures defined in \Cref{prop:forward-backward-rnd}.
For $\mu$ and $a$ of sufficient regularity, denote the time--marginals
of the corresponding path measure by 
\[
\overrightarrow{\mathbb{P}}^{\mu,a}_t \;=:\; \rho^{\mu,a}_t .
\]
Then we have 
\[
\overrightarrow{\mathbb{P}}^{\mu,a}
= \overleftarrow{\mathbb{P}}^{\nu,b}
\quad \text{if and only if} \quad
\nu = \overrightarrow{\mathbb{P}}^{\mu,a}_T
\;\;\text{and}\;\;
b_t = a_t - \sigma^2 \nabla \ln \rho^{\mu,a}_t,
\quad \forall t \in (0,T].
\]
\end{proposition}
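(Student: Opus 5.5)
We would prove Nelson's relation by reducing it to the classical time-reversal formula for diffusions (Anderson; Haussmann--Pardoux), together with the fact that the law of a diffusion with a given diffusion coefficient determines its drift. The first step is to fix the sign convention for backward SDEs. Interpreted via the reversed clock $s = T - t$, a backward Itô SDE $\mathrm{d}Y_t = b_t(Y_t)\,\mathrm{d}t + \sigma(t)\,\overleftarrow{\mathrm{d}W_t}$ with $Y_T \sim \nu$ should mean that $Z_s := Y_{T-s}$ solves the \emph{forward} Itô SDE
\begin{talign*}
\mathrm{d}Z_s = -b_{T-s}(Z_s)\,\mathrm{d}s + \sigma(T-s)\,\mathrm{d}\tilde{W}_s, \qquad Z_0 \sim \nu .
\end{talign*}
This is the convention of \cite{vargas2024transport}, under which the statement holds. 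The Euler--Maruyama display in \Cref{prop:forward-backward-rnd} is written with the opposite sign, so we would state explicitly that we read it as $y_{\ell-1} = y_\ell - \Delta t\, b + \sqrt{\Delta t}\,\sigma\,\xi_\ell$. With this convention, $\overrightarrow{\mathbb{P}}^{\mu,a} = \overleftarrow{\mathbb{P}}^{\nu,b}$ becomes the statement that the time reversal of $Y \sim \overrightarrow{\mathbb{P}}^{\mu,a}$ has the same law as the forward diffusion $Z$ above.

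Next we would invoke the time-reversal theorem. Under regularity assumptions ($\mu$ and $a$ regular enough that $\rho_t := \rho^{\mu,a}_t$ is positive and $C^1$ in $x$, and $\int_0^T \mathbb{E}\|\sigma^2 \nabla \log \rho_t(Y_t)\|^2\,\mathrm{d}t < \infty$), the reversed process $\hat{Y}_s = Y_{T-s}$ is a diffusion with initial law $\rho_T$, diffusion coefficient $\sigma(T-s)$, and drift
\begin{talign*}
-a_{T-s}(x) + \sigma(T-s)^2 \nabla \log \rho_{T-s}(x).
\end{talign*}
One can sanity-check this formula with the Fokker--Planck equation. The marginals $\rho_{T-s}$ satisfy $\partial_s \rho_{T-s} = \nabla\cdot(a\rho) - \tfrac{\sigma^2}{2}\Delta \rho$, and rewriting $\nabla\cdot(a\rho) = -\nabla\cdot\big((-a + \sigma^2 \nabla\log\rho)\rho\big) + \sigma^2 \Delta \rho$ gives exactly the forward Fokker--Planck equation with the drift above. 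The rigorous argument, which goes through the generator identity for the reversed process, is the Haussmann--Pardoux theorem, and we would cite it rather than re-derive it.

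For the direction ($\Leftarrow$): if $\nu = \rho_T$ and $b_t = a_t - \sigma^2 \nabla \log \rho_t$, then $-b_{T-s} = -a_{T-s} + \sigma^2 \nabla\log\rho_{T-s}$. So $Z$ and $\hat{Y}$ solve the same SDE with the same initial law, and uniqueness in law (assumed under the regularity hypotheses) gives $\overleftarrow{\mathbb{P}}^{\nu,b} = \overrightarrow{\mathbb{P}}^{\mu,a}$.

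For the direction ($\Rightarrow$): equality of path measures gives equality of the time-$T$ marginals, so $\nu = \rho_T$, and it gives equality in law of $Z$ and $\hat{Y}$. Two Itô diffusions with the same nondegenerate diffusion coefficient and the same law have drifts that agree $\mathrm{d}s \otimes \rho$-a.e. One way to see this is to identify the drift as $\lim_{h\to0} h^{-1}\mathbb{E}[Z_{s+h} - Z_s \mid Z_s = x]$, which is a functional of the law. Alternatively, Girsanov shows that the log-density between the two laws is $\int \sigma^{-1}(\text{drift difference})\cdot\mathrm{d}W - \tfrac12\int\|\cdot\|^2\,\mathrm{d}s$, and this vanishes identically only if the drift difference is zero a.e. Either argument yields $b_t = a_t - \sigma^2 \nabla\log\rho_t$ for a.e.\ $t \in (0,T]$ on the support of $\rho_t$, and continuity upgrades this to all $t \in (0,T]$.

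We expect the main obstacle to be the regularity bookkeeping in the time-reversal step, not the algebra. The points to control are positivity and differentiability of $\rho_t$ for $t>0$ (hence the exclusion of $t=0$, where $\rho_0 = \mu$ may be singular), the finite-energy condition on $\nabla\log\rho_t$, and well-posedness of both SDEs. We would absorb all of these into the phrase ``of sufficient regularity'', e.g.\ locally Lipschitz drifts with linear growth and $\sigma > 0$ continuous, under which the Haussmann--Pardoux hypotheses hold.
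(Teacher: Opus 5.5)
The paper gives no proof of this proposition. It is imported from \cite{nelson1967dynamical,anderson1982reverse} and used as a black box: to set $\gamma^-_t = \gamma^+_t - \sigma(t)^2\mathfrak{s}_t$ and $b_t = a_t - \sigma(t)^2(\mathfrak{s}_t + \nabla r_t)$ in the CMCD derivation, and to argue that $\overrightarrow{\mathbb{P}}^{v} = \overleftarrow{\mathbb{P}}^{v}$ characterizes $v^\star$. So there is no in-paper argument to compare with. Your proposal supplies the standard proof behind those citations, and it is correct at the level of regularity the statement leaves implicit.

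Your route is the modern rigorous version: the Haussmann--Pardoux reversal formula, uniqueness in law for $(\Leftarrow)$, and identification of the drift from the law for $(\Rightarrow)$. Nelson's original argument instead computes the mean backward derivative $D_- Y_t = a_t(Y_t) - \sigma(t)^2\nabla\ln\rho_t(Y_t)$ directly, which is essentially your conditional-increment characterization run in reversed time.

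For $(\Rightarrow)$, the cleanest of your drift-identification arguments is uniqueness of the canonical semimartingale decomposition. Under the common law, $\int_0^s (\beta_1-\beta_2)(r,Z_r)\,\mathrm{d}r$ is a continuous finite-variation martingale in the natural filtration of $Z$, hence identically zero. This avoids both the limit exchange in $h^{-1}\mathbb{E}[Z_{s+h}-Z_s \mid Z_s]$ and the Novikov-type conditions behind the Girsanov version. You are right to use the Fokker--Planck computation only as a sanity check, since marginals determine the drift only up to a field $w$ with $\nabla\cdot(w\rho_t)=0$.

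In $(\Leftarrow)$, uniqueness in law for the reversed equation does not follow from regularity of $a$ alone. Its drift contains $\sigma^2\nabla\ln\rho_t$, which may fail to be Lipschitz, or even blow up, as $t\to 0$. You can either regard uniqueness as built into the notation $\overleftarrow{\mathbb{P}}^{\nu,b}$, or prove equality of laws on $[0,T-\epsilon]$ in reversed time and let $\epsilon\to 0$ using path continuity. On that interval $\nabla\ln\rho_t$ is locally Lipschitz for smooth $a$ and $\sigma>0$.

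Your sign remark is correct and necessary. With the backward Euler--Maruyama update exactly as printed in \Cref{prop:forward-backward-rnd}, the reversed-time drift would be $+b$, and the relation would read $b_t = -a_t + \sigma^2\nabla\ln\rho_t$. The paper's own usage confirms your reading. With $\sigma(t)^2 = 2\eta_t$, the CMCD proof takes $\gamma^-_t(x) = \kappa_t x$, and its hypothesis $\overrightarrow{\mathbb{P}}^{\Gamma_0,\gamma^+} = \overleftarrow{\mathbb{P}}^{\Gamma_1,\gamma^-}$ says the memoryless generative process is the reversal of \eqref{eq:FM_reference}. Under your convention the reversed process has drift $-\kappa_{1-s}x$ and noise $\sqrt{2\eta_{1-s}}$, which is exactly \eqref{eq:FM_reference}. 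Under the printed convention it would instead have drift $+\kappa_{1-s}x$.
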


\section{Proofs for the SOC-based methods}

\subsection{Proof of \Cref{prop:norm_lean_adj}: Bound on the norm of the lean adjoint state} \label{eq:bound_norm_lean}
    Given a matrix $M \in \mathbb{R}^{d \times d}$ and a point $x \in \mathbb{R}^d$, the Rayleigh quotient is defined as $R(M,x) = \frac{\langle x, M x \rangle}{\langle x, x\rangle}$. The norm of the lean adjoint state $\tilde{a}(t,X^u)$ satisfies the following ODE:
\begin{talign}
\begin{split}
    \frac{\mathrm{d}}{\mathrm{d}t} \| \tilde{a}(t,X^u) \|^2 &= 2 \langle \tilde{a}(t,X^u), \frac{\mathrm{d}}{\mathrm{d}t} \tilde{a}(t,X^u) \rangle = - 2 \langle \tilde{a}(t,X^u),  \nabla_x b (X_t,t)^{\top} \tilde{a}(t;X^{\bar{u}}) \rangle \\ &= - 2 R(\nabla_x b (X_t,t)^{\top}, \tilde{a}(t,X^u)) \| \tilde{a}(t;X^{\bar{u}}) \|^2.  
\end{split}
\end{talign}
When we integrate this ODE backwards in time from $1$ to $t \in [0,1]$, we obtain that
\begin{talign}
    \|\tilde{a}(t,X^u)\|^2 = \exp\big( 2 \int_t^1 R(\nabla_x b (X_s,s)^{\top}, \tilde{a}(s,X^u)) \, \mathrm{d}s \big) \| \nabla_x r(X^{u}_1)\|^2. 
\end{talign}
Since $\tilde{a}(t,X^u)$ appears in the the regression target vector field of the Adjoint Matching loss, it is desirable that the norm $\|\tilde{a}(t,X^u)\|$ is small. A way to obtain bounds on $\|\tilde{a}(t,X^u)\|$ is under the condition that $ \mathrm{Sym}(\nabla_x b (X_t,t)) \preceq \chi_t \mathrm{I}$ for some constant $\chi_t \in \mathbb{R}$, as in this case, since $R(\nabla_x b (X_t,t)^{\top}, \tilde{a}(t,X^u)) = R(\mathrm{Sym}(\nabla_x b (X_t,t)), \tilde{a}(t,X^u)) \leq \chi_t$, we get that 
\begin{talign}
\begin{split} \label{eq:tilde_a_bound}
    \|\tilde{a}(t,X^u)\|^2 &\leq \exp\big( 2 \int_t^1 \chi_s \, \mathrm{d}s \big) \| \nabla_x r(X^{u}_1)\|^2. 
\end{split}
\end{talign}
Observe that $\nabla b(x,t) = \kappa_t \mathrm{I} + \big( \frac{\sigma(t)^2}{2} + \eta_t \big) \nabla \mathfrak{s}_t(x)$, and for the memoryless noise schedule $\sigma(t) = \sqrt{2 \eta_t}$, 
\begin{talign}
\nabla b(x,t) = \kappa_t \mathrm{I} + 2\eta_t \nabla \mathfrak{s}_t(x).
\end{talign}
Since $\eta_t > 0$, in order to obtain a bound of the form $ \mathrm{Sym}(\nabla_x b (X_t,t)) \preceq \chi_t \mathrm{I}$, we need a similar bound on $\mathrm{Sym}(\nabla_x \mathfrak{s}_t(X_t))$.
Next, we show that such bounds are easy to obtain in the case in which the data distribution $p_{\mathrm{data}}$ is Gaussian or strongly log-concave. The former case, under which we can obtain an analytic expression of the score, is particularly relevant because it is the setting considered in Adjoint Sampling \citep{havens2025adjoint}.  

\paragraph{The gradient $\nabla b$ for Gaussian data distributions} Now, if assume that $p_{\mathrm{data}} = \mathcal{N}(0,\sigma_1^2 \mathrm{I})$, as is the case in the sampling setting, we can compute $\mathfrak{s}_t$ explicitly through equation \eqref{eq:p_t} of Lemma \ref{lem:csi_tsi}. 
By equation \eqref{eq:p_t}, we obtain that
\begin{talign}
    p_t(x) = \int_{\mathbb{R}^d} \frac{\exp \big( - \frac{\|x - \alpha_t y\|^2}{2 \beta^2_t} \big)}{(2 \pi \beta^2_t)^{d/2}} \frac{\exp \big( - \frac{\|y\|^2}{2 \sigma_1^2} \big)}{(2 \pi \sigma^2)^{d/2}} \, \mathrm{d}y = \frac{\exp \big( - \frac{\|x\|^2}{2 (\beta^2_t + \alpha^2_t \sigma_1^2)} \big)}{\big(2 \pi (\beta^2_t + \alpha^2_t \sigma_1^2) \big)^{d/2}},
\end{talign}
which implies $\mathfrak{s}_t(x) = \nabla \log p_t(x) = - \frac{x}  {\beta^2_t + \alpha^2_t \sigma_1^2}$, and this means that 
\begin{talign} \label{eq:nabla_b}
    \nabla b(x,t) &= \kappa_t \mathrm{I} + 2\eta_t \nabla \mathfrak{s}_t(x) = \big( \kappa_t - \frac{2\eta_t}{\beta^2_t + \alpha^2_t \sigma_1^2} \big) \mathrm{I} = \big( \frac{\dot{\alpha}_t}{\alpha_t} - \frac{2\beta_t \big( \frac{\dot{\alpha}_t}{\alpha_t} \beta_t - \dot{\beta}_t \big)}{\beta^2_t + \alpha^2_t \sigma_1^2} \big) \mathrm{I} =: \chi_t \mathrm{I},
\end{talign}
\begin{enumerate}[label=(\roman*)]
\item Föllmer process: 
\begin{talign}
\begin{split}
    &\nabla b(x,t) = \big( \kappa_t - \frac{2\eta_t}{\beta^2_t + \alpha^2_t \sigma_1^2} \big) \mathrm{I} = \big( \frac{\dot{\bar{\alpha}}_t}{\bar{\alpha}_t} - \frac{\dot{\bar{\alpha}}_t \sigma_0^2}{\bar{\alpha}_t (1-\bar{\alpha}_t) \sigma_0^2 + \bar{\alpha}^2_t \sigma_1^2} \big) \mathrm{I} = \frac{\dot{\bar{\alpha}}_t}{\bar{\alpha}_t} \big( 1 - \frac{1}{1-\bar{\alpha}_t + \bar{\alpha}_t \sigma_1^2/\sigma_0^2} \big) \mathrm{I} =: \chi_t \mathrm{I},
    \\ &\implies \int_t^1 \chi_s \, \mathrm{d}s = \int_t^1 \frac{\dot{\bar{\alpha}}_s}{\bar{\alpha}_s} \big( 1 - \frac{1}{1-\bar{\alpha}_s + \bar{\alpha}_s \sigma_1^2/\sigma_0^2} \big) \, \mathrm{d}s = \log \big( \frac{\sigma_1^2}{(1-\bar{\alpha}_s)\sigma_0^2 + \bar{\alpha}_s \sigma_1^2} \big),
    \\ &\implies \|\tilde{a}(t,X^u)\| = \frac{\sigma_1^2}{(1-\bar{\alpha}_s)\sigma_0^2 + \bar{\alpha}_s \sigma_1^2} \| \nabla_x r(X^{u}_1)\|. 
\end{split}
\end{talign}
If we set $\sigma_0^2 = \sigma_1^2$, we obtain $\nabla b(x,t) = 0$ and $\int_t^1 \chi_s \, \mathrm{d}s = 0$.
\item DDIM/DDPM: 
\begin{talign}
\begin{split}
    &\nabla b(x,t) = \big( \kappa_t - \frac{2\eta_t}{\beta^2_t + \alpha^2_t \sigma_1^2} \big) \mathrm{I} = \big( \frac{\dot{\bar{\alpha}}_t}{2\bar{\alpha}_t} - \frac{\frac{\dot{\bar{\alpha}}_t \sigma_0^2}{ \bar{\alpha}_t}}{(1-\bar{\alpha}_t) \sigma^2_0 + \bar{\alpha}_t \sigma_1^2} \big) \mathrm{I} = \frac{\dot{\bar{\alpha}}_t}{2\bar{\alpha}_t} \big( 1 - \frac{2}{1-\bar{\alpha}_t + \bar{\alpha}_t \sigma_1^2/\sigma_0^2} \big) \mathrm{I} := \chi_t \mathrm{I}, 
    \\ &\implies \int_t^1 \chi_s \, \mathrm{d}s = \int_t^1 \frac{\dot{\bar{\alpha}}_s}{2\bar{\alpha}_s} \big( 1 - \frac{2}{1-\bar{\alpha}_s + \bar{\alpha}_s \sigma_1^2/\sigma_0^2} \big) \, \mathrm{d}s = \log \big( \frac{\sigma_1^2 \sqrt{\bar{\alpha}_s}}{(1-\bar{\alpha}_s)\sigma_0^2 + \bar{\alpha}_s \sigma_1^2} \big),
    \\ &\implies \|\tilde{a}(t,X^u)\| = \frac{\sigma_1^2 \sqrt{\bar{\alpha}_s}}{(1-\bar{\alpha}_s)\sigma_0^2 + \bar{\alpha}_s \sigma_1^2} \| \nabla_x r(X^{u}_1)\|. 
\end{split}
\end{talign}
If we set $\sigma_0^2 = \sigma_1^2$, we obtain $\nabla b(x,t) = - \frac{\dot{\bar{\alpha}}_t}{2\bar{\alpha}_t} \mathrm{I}$, and $\int_t^1 \chi_s \, \mathrm{d}s = \frac{1}{2}\log(\bar{\alpha}_t)$, and $\|\tilde{a}(t,X^u)\| = \sqrt{\bar{\alpha}_s} \| \nabla_x r(X^{u}_1)\|$.
\item Rectified Flow:
\begin{talign}
    &\nabla b(x,t) = \big( \kappa_t - \frac{2\eta_t}{\beta^2_t + \alpha^2_t \sigma_1^2} \big) \mathrm{I} = \big( \frac{\dot{\bar{\alpha}}_t}{\bar{\alpha}_t} - \frac{2\frac{(1-\bar{\alpha}_t) \dot{\bar{\alpha}}_t \sigma_0^2}{\bar{\alpha}_t}}{(1-\bar{\alpha}_t)^2 \sigma^2_0 + \bar{\alpha}^2_t \sigma_1^2} \big) \mathrm{I} = \frac{\dot{\bar{\alpha}}_t}{\bar{\alpha}_t} \big( 1 - \frac{2(1-\bar{\alpha}_t)}{(1-\bar{\alpha}_t)^2 + \bar{\alpha}^2_t \sigma_1^2/\sigma_0^2} \big) \mathrm{I}, \\
    &\implies \int_t^1 \chi_s \, \mathrm{d}s = \int_t^1 \frac{\dot{\bar{\alpha}}_s}{\bar{\alpha}_s} \big( 1 - \frac{2(1-\bar{\alpha}_s)}{(1-\bar{\alpha}_s)^2 + \bar{\alpha}^2_s \sigma_1^2/\sigma_0^2} \big) \, \mathrm{d}s = \log \big( \frac{\sigma_1^2 \bar{\alpha}_s}{(1-\bar{\alpha}_s)^2\sigma_0^2 + \bar{\alpha}^2_s \sigma_1^2} \big),
    \\ &\implies \|\tilde{a}(t,X^u)\| = \frac{\sigma_1^2 \bar{\alpha}_s}{(1-\bar{\alpha}_s)^2\sigma_0^2 + \bar{\alpha}^2_s \sigma_1^2} \| \nabla_x r(X^{u}_1)\|. 
\end{talign}
The maximizer of $\alpha \mapsto \log \big( \frac{\sigma_1^2 \bar{\alpha}_s}{(1-\bar{\alpha}_s)^2\sigma_0^2 + \bar{\alpha}^2_s \sigma_1^2} \big)$ is $\alpha^{\star} = \frac{\sigma_0}{\sqrt{\sigma_0^2 + \sigma_1^2}}$, and the maximum value is $\frac{1 + \sqrt{1 + \sigma_1^2/\sigma_0^2}}{2}$.
If we set $\sigma_0^2 = \sigma_1^2$, we obtain 
\begin{talign}
    &\nabla b(x,t) = \frac{\dot{\bar{\alpha}}_t}{\bar{\alpha}_t} \big( 1 - \frac{2(1-\bar{\alpha}_t)}{(1-\bar{\alpha}_t)^2 + \bar{\alpha}^2_t} \big) \mathrm{I} = \frac{\dot{\bar{\alpha}}_t}{\bar{\alpha}_t} \frac{1-2\bar{\alpha}_t+2\bar{\alpha}_t^2-2(1-\bar{\alpha}_t)}{1-2\bar{\alpha}_t+2\bar{\alpha}_t^2} \mathrm{I} = - \frac{(1-2\bar{\alpha}_t)\dot{\bar{\alpha}}_t}{(1-2\bar{\alpha}_t + 2\bar{\alpha}_t^2)\bar{\alpha}_t} \mathrm{I}, \\
    &\implies \int_t^1 \chi_s \, \mathrm{d}s = \log \big( \frac{\bar{\alpha}_s}{(1-\bar{\alpha}_s)^2 + \bar{\alpha}^2_s} \big) \implies \|\tilde{a}(t,X^u)\| = \frac{\bar{\alpha}_s}{(1-\bar{\alpha}_s)^2 + \bar{\alpha}^2_s} \| \nabla_x r(X^{u}_1)\|.
\end{talign}
\end{enumerate}
\noindent\paragraph{The gradient $\nabla b$ for $\frac{1}{\sigma_1^2}$-strongly convex data distributions} Corollary \ref{cor:p_t_log_concavity} proves that when $p_{\mathrm{data}}$ is $\frac{1}{\sigma_1^2}$-strongly log-concave, then $p_t$ is also in $C^2(\mathbb{R}^d)$ and $\frac{1}{\beta_t^2 + \alpha^2_t \sigma_1^2}$-strongly log-concave. Equivalently, for all $x \in \mathbb{R}^d$, $t \in [0,1]$,
\begin{talign}
    - \nabla \mathfrak{s}_t(x) = - \nabla^2 \log p_t(x) \succeq \frac{\mathrm{I}}{\beta_t^2 + \alpha^2_t \sigma_1^2},
\end{talign}
which means that
\begin{talign}
    \nabla b(x,t) &= \kappa_t \mathrm{I} + 2\eta_t \nabla \mathfrak{s}_t(x) \preceq \big( \kappa_t - \frac{2\eta_t}{\beta^2_t + \alpha^2_t \sigma_1^2} \big) \mathrm{I} = \big( \frac{\dot{\alpha}_t}{\alpha_t} - \frac{2\beta_t \big( \frac{\dot{\alpha}_t}{\alpha_t} \beta_t - \dot{\beta}_t \big)}{\beta^2_t + \alpha^2_t \sigma_1^2} \big) \mathrm{I}.
\end{talign}
Observe that this upper-bound matches the right-hand side of \eqref{eq:nabla_b}. Hence, we obtain immediately that for the three subcases, all the equalities involving $\nabla b(x,t)$ and $\int_t^1 \chi_s \, \mathrm{d}s$ become inequalities.

\subsection{Derivation of the Target, Conditional and Novel Score Matching loss functions} \label{subsec:derivation_TSM}

By the target score identity from Lemma \ref{lem:csi_tsi}, the density $p_t$ of the marginal $\bar{X}_t$ of the reference flow in equation \eqref{eq:reference_flow} satisfies:
\begin{talign}
\begin{split} \label{eq:TSI_applied}
    \nabla \log p_t(x) &= \frac{1}{\alpha_t} \frac{\int_{\mathbb{R}^d} 
    \mathcal{N}(x;\alpha_t y, \beta^2_t \mathrm{I})
    \nabla \log p_{\mathrm{data}}(Y)
    p_{\mathrm{data}}(y)
    \, \mathrm{d}y}{\int_{\mathbb{R}^d} 
    \mathcal{N}(x;\alpha_t y, \beta^2_t \mathrm{I})
    p_{\mathrm{data}}(y)
    \, \mathrm{d}y} 
    = \frac{1}{\alpha_t} \mathbb{E}[
    \nabla \log p_{\mathrm{data}}(Y)
    \, | \, \bar{X}_t = x ]
\end{split}
\end{talign}
We use a well-known argument: for any $\mathbb{R}^{d}$-valued neural network $\hat{s}$,
\begin{talign}
\begin{split} \label{eq:csm_argument}
    &\mathbb{E}_{Y \sim p_{\mathrm{data}}, \bar{X}_t = \alpha_t Y + \beta_t \varepsilon} \big[ \int_0^1 \| \hat{s}(\bar{X}_t,t) - \nabla \log p_t(\bar{X}_t) \|^2 \, \mathrm{d}t \big]
    \\ &= \mathbb{E}_{Y \sim p_{\mathrm{data}}, \bar{X}_t = \alpha_t Y + \beta_t \varepsilon} \big[ \int_0^1 \| \hat{s}(\bar{X}_t,t) \|^2 \, \mathrm{d}t - 2 \int_0^1 \langle \hat{s}(\bar{X}_t,t), \nabla \log p_t(\bar{X}_t) \rangle \, \mathrm{d}t \big] + \mathrm{const.}
    \\ &= \mathbb{E}_{Y \sim p_{\mathrm{data}}, \bar{X}_t = \alpha_t Y + \beta_t \varepsilon} \big[ \int_0^1 \| \hat{s}(\bar{X}_t,t) \|^2 \, \mathrm{d}t - \frac{2}{\alpha_t} \int_0^1 \langle \hat{s}(\bar{X}_t,t), \mathbb{E}[
    \nabla \log p_{\mathrm{data}}(Y)
    \, | \, \bar{X}_t] \rangle \, \mathrm{d}t \big] + \mathrm{const.}
    \\ &= \mathbb{E}_{Y \sim p_{\mathrm{data}}, \bar{X}_t = \alpha_t Y + \beta_t \varepsilon} \big[ \int_0^1 \| \hat{s}(\bar{X}_t,t) \|^2 \, \mathrm{d}t - \frac{2}{\alpha_t} \int_0^1 \langle \hat{s}(\bar{X}_t,t), \nabla \log p^{\mathrm{base}}(Y) + \nabla r(Y) \rangle \, \mathrm{d}t \big] + \mathrm{const.}
    \\ &= \mathbb{E}_{Y \sim p_{\mathrm{data}}, \bar{X}_t = \alpha_t Y + \beta_t \varepsilon} \big[ \int_0^1 \| \hat{s}(\bar{X}_t,t) - \frac{1}{\alpha_t} \big( \nabla \log p^{\mathrm{base}}(Y) + \nabla r(Y) \big) \|^2 \, \mathrm{d}t \big] + \mathrm{const.}
\end{split}
\end{talign}
where the third equality holds by \eqref{eq:TSI_applied}.
Hence,
\begin{talign} \label{eq:TSM_exp}
    \mathcal{L}_{\mathrm{TSM}}(\hat{s}) = \mathbb{E}[\mathcal{L}_{\mathrm{TSM}}(\hat{s},\bar{X})] = \mathbb{E}_{Y \sim p_{\mathrm{data}}, \bar{X}_t = \alpha_t Y + \beta_t \varepsilon} \big[ \int_0^1 \| \hat{s}(\bar{X}_t,t) - \frac{1}{\alpha_t} \big( \nabla \log p^{\mathrm{base}}(Y) + \nabla r(Y) \big) \|^2 \, \mathrm{d}t \big].
\end{talign}
Analogously to \eqref{eq:TSI_applied}, the conditional score identity \eqref{eq:csi_2} and the novel score identity \eqref{eq:nsi_2} yield
\begin{talign} \label{eq:CSI_applied}
    \nabla \log p_t(x) &= 
         - \frac{\int_{\mathbb{R}^d} (x - \alpha_t y)
        \mathcal{N}(x;\alpha_t y, \beta^2_t \mathrm{I}) \,
        p_{\mathrm{data}}(y) \, \mathrm{d}y}{\beta_t^2 \int_{\mathbb{R}^d} 
        \mathcal{N}(x;\alpha_t y, \beta^2_t \mathrm{I}) \, 
        p_{\mathrm{data}}(y) \, \mathrm{d}y} = - \frac{1}{\beta_t^2} \mathbb{E}[
    x - \alpha_t Y
    \, | \, \bar{X}_t = x ], \\
    \nabla \log p_t(x) &= \frac{\int_{\mathbb{R}^d} (\alpha_t \nabla \log p_{\mathrm{data}}(y) -  (x - \alpha_t y))
        \mathcal{N}(x;\alpha_t y, \beta^2_t \mathrm{I}) \,
        p_{\mathrm{data}}(y) \, \mathrm{d}y}{(\alpha_t^2 + \beta_t^2) \int_{\mathbb{R}^d} 
        \mathcal{N}(x;\alpha_t y, \beta^2_t \mathrm{I}) \, 
        p_{\mathrm{data}}(y) \, \mathrm{d}y} \\ &= \frac{1}{\alpha_t^2 + \beta_t^2} \mathbb{E}[
        \alpha_t \nabla \log p_{\mathrm{data}}(Y) -  (x - \alpha_t Y)
        \, | \, \bar{X}_t = x ],
        \label{eq:NSI_applied}
\end{talign}
The expressions for the Conditional Score Matching loss $\mathcal{L}_{\mathrm{CSM}}$ and the Novel Score Matching loss $\mathcal{L}_{\mathrm{NSM}}$ follow from an argument analogous to equation \eqref{eq:csm_argument}, the only differences being that in the second equality we use equations \eqref{eq:CSI_applied} and \eqref{eq:NSI_applied} instead.

\subsection{Proof of \Cref{prop:bv_adjoint}: bias-variance decomposition for Adjoint Matching and Sampling}

For Adjoint Matching and Sampling, observe that $u(x,t) = \frac{1}{\sqrt{2 \eta_t}} \big( v_{\mathrm{ft}}(x,t) - v_{\mathrm{base}}(x,t) \big)$. Hence,
\begin{talign}
\begin{split} \label{eq:exp_adj_match}
    &\mathbb{E}[\mathcal{L}_{\mathrm{Adj-Match}}(u; X^{\bar{u}})] \\ &= \mathbb{E} \big[ \frac{1}{2} \int_0^{1} \big\| u(X^{\bar{u}}_t,t)
    + \sigma(t)^{\top} \tilde{a}(t;X^{\bar{u}}) \big\|^2 \, \mathrm{d}t \big] \\ &= \mathbb{E} \big[ \frac{1}{2} \int_0^{1} \big\| \frac{1}{\sqrt{2 \eta_t}} \big( v_{\mathrm{ft}}(X^{\bar{u}}_t,t) - v_{\mathrm{base}}(X^{\bar{u}}_t,t) \big)
    + \sqrt{2 \eta_t} \tilde{a}(t;X^{\bar{u}}) \big\|^2 \, \mathrm{d}t \big] \\ &\stackrel{(i)}{=} \mathbb{E} \big[ \frac{1}{2} \int_0^{1} \big\| v_{\mathrm{ft}}(X^{\bar{u}}_t,t) - v_{\mathrm{base}}(X^{\bar{u}}_t,t)
    + 2 \eta_t \tilde{a}(t;X^{\bar{u}}) \big\|^2 \frac{1}{2 \eta_t} \, \mathrm{d}t \big]
    \\ &= \mathbb{E} \big[ \frac{1}{2} \int_0^{1} \big\| v_{\mathrm{ft}}(X^{\bar{u}}_t,t) - \mathbb{E} \big[ v_{\mathrm{base}}(X^{\bar{u}}_t,t)
    - 2 \eta_t \tilde{a}(t;X^{\bar{u}}) | X^{\bar{u}}_t \big] \big\|^2 \frac{1}{2 \eta_t} \, \mathrm{d}t \big] \\ &\qquad + \mathbb{E} \big[ \frac{1}{2} \int_0^{1} \big\| \mathbb{E} \big[ v_{\mathrm{base}}(X^{\bar{u}}_t,t)
    - 2 \eta_t \tilde{a}(t;X^{\bar{u}}) | X^{\bar{u}}_t \big] - \big(v_{\mathrm{base}}(X^{\bar{u}}_t,t)
    - 2 \eta_t \tilde{a}(t;X^{\bar{u}}) \big) \big\|^2 \frac{1}{2 \eta_t} \, \mathrm{d}t \big]
\end{split}
\end{talign}
Observe that equality (i) yields an expression on the same form as equation \eqref{eq:loss_general}, with $\xi(t,X^{\bar{u}}) = v_{\mathrm{base}}(X^{\bar{u}}_t,t)
    + 2 \eta_t \tilde{a}(t;X^{\bar{u}})$.
The second term in the right-hand side of \eqref{eq:exp_adj_match} (the variance term) can be simplified to 
\begin{talign}
\begin{split}
    &\mathbb{E} \big[ \frac{1}{2} \int_0^{1} \big\| \mathbb{E} \big[ 2 \eta_t \tilde{a}(t;X^{\bar{u}}) | X^{\bar{u}}_t \big] - 2 \eta_t \tilde{a}(t;X^{\bar{u}}) \big\|^2 \frac{1}{2 \eta_t} \, \mathrm{d}t \big] = \mathbb{E} \big[ \int_0^{1} \eta_t \big\| \mathbb{E} \big[ \tilde{a}(t;X^{\bar{u}}) | X^{\bar{u}}_t \big] - \tilde{a}(t;X^{\bar{u}}) \big\|^2 \, \mathrm{d}t \big] \\ &\leq \mathbb{E} \big[ \int_0^{1} \eta_t \big\| \tilde{a}(t;X^{\bar{u}}) \big\|^2 \, \mathrm{d}t \big] \leq \int_0^{1} \eta_t \exp\big( 2 \int_t^1 \chi_s \, \mathrm{d}s \big) \, \mathrm{d}t \times \mathbb{E} \big[ \| \nabla_x r(X^{u}_1)\|^2 \big].
\end{split}
\end{talign}
For the particular case in which $p_{\mathrm{data}}$ is Gaussian, we can similarly obtain an equality:
\begin{talign}
\begin{split}
    &\mathbb{E} \big[ \int_0^{1} \big\| \mathbb{E} \big[ 2 \eta_t \tilde{a}(t;X^{\bar{u}}) | X^{\bar{u}}_t \big] - 2 \eta_t \tilde{a}(t;X^{\bar{u}}) \big\|^2 \frac{1}{2 \eta_t} \, \mathrm{d}t \big] \\ &= \mathbb{E} \big[ \int_0^{1} \eta_t \exp\big( 2 \int_t^1 \chi_s \, \mathrm{d}s \big) \big( \| \nabla_x r(X^{u}_1)\|^2 - \mathbb{E} \big[ \| \nabla_x r(X^{u}_1)\|^2 | X^{u}_t \big]\big)\, \mathrm{d}t \big],
\end{split}
\end{talign}
where the last equality holds by equation \eqref{eq:tilde_a_bound}.
Next, we compute $\int_0^{1} \eta_t \exp\big( 2 \int_t^1 \chi_s \, \mathrm{d}s \big) \, \mathrm{d}t$ in the three subcases:
\begin{enumerate}[label=(\roman*),left=0pt]
\item Föllmer process: 
\begin{talign}
    \int_0^{1} \eta_t \exp\big( 2 \int_t^1 \chi_s \, \mathrm{d}s \big) \, \mathrm{d}t = \int_0^{1} \big( \frac{\sigma_1^2}{(1-\bar{\alpha}_t)\sigma_0^2 + \bar{\alpha}_t \sigma_1^2} \big)^2 \frac{\dot{\bar{\alpha}}_t \sigma_0^2}{2} \, \mathrm{d}t = \int_0^{1} \big( \frac{\sigma_1^2}{(1-t)\sigma_0^2 + t \sigma_1^2} \big)^2 \frac{\sigma_0^2}{2} \, \mathrm{d}t = \frac{\sigma_1^2}{2}.
\end{talign}
\item DDIM/DDPM:
\begin{talign}
    \int_0^{1} \eta_t \exp\big( 2 \int_t^1 \chi_s \, \mathrm{d}s \big) \, \mathrm{d}t = \int_0^{1} \big( \frac{\sigma_1^2 \sqrt{\bar{\alpha}_t}}{(1-\bar{\alpha}_t)\sigma_0^2 + \bar{\alpha}_t \sigma_1^2} \big)^2 \frac{\dot{\bar{\alpha}}_t \sigma_0^2}{2 \bar{\alpha}_t} \, \mathrm{d}t = \int_0^{1} \big( \frac{\sigma_1^2 \sqrt{t}}{(1-t)\sigma_0^2 + t \sigma_1^2} \big)^2 \frac{\sigma_0^2}{2 t} \, \mathrm{d}t = \frac{\sigma_1^2}{2}.
\end{talign}
\item Rectified Flow:
\begin{talign}
\begin{split}
    &\int_0^{1} \eta_t \exp\big( 2 \int_t^1 \chi_s \, \mathrm{d}s \big) \, \mathrm{d}t = \int_0^{1} \big( \frac{\sigma_1^2 \bar{\alpha}_t}{(1-\bar{\alpha}_t)^2\sigma_0^2 + \bar{\alpha}^2_t \sigma_1^2} \big)^2 \frac{(1-\bar{\alpha}_t) \dot{\bar{\alpha}}_t \sigma_0^2}{\bar{\alpha}_t} \, \mathrm{d}t \\ &= \int_0^{1} \frac{(1-t) \sigma_0^2}{t} \big( \frac{\sigma_1^2 t}{(1-t)^2\sigma_0^2 + t^2 \sigma_1^2} \big)^2 \, \mathrm{d}t = \frac{\sigma_1^2}{2}.
\end{split}
\end{talign}
\end{enumerate}

\subsection{Proof of \Cref{prop:bv_SM}: bias-variance decomposition for score matching algorithms} \label{subsec:proof_bv_score}

\paragraph{Target Score Matching}
Next, we write the Target Score Matching and Novel Score Matching losses in the general form \eqref{eq:loss_general}. Plugging $\sigma(t) = \sqrt{2\eta_t}$, we can write
\begin{talign}
    v_{\mathrm{ft}}(x,t) = \kappa_t x + 2 \eta_t \hat{s}(x,t) \implies \hat{s}(x,t) = \frac{v_{\mathrm{ft}}(x,t) - \kappa_t x}{2 \eta_t}.
\end{talign}
Thus, for Target Score Matching we have that
\begin{talign}
\begin{split} \label{eq:exp_tsm} 
    &\mathbb{E}_{Y \sim p_{\mathrm{data}}, \bar{X}_t = \alpha_t Y + \beta_t \varepsilon} \big[ \frac{1}{2} \int_0^1 \| \hat{s}(\bar{X}_t,t) - \frac{1}{\alpha_t} \big( \nabla \log p_{\mathrm{base}}(Y) + \nabla r(Y) \big) \|^2 (2 \eta_t) \, \mathrm{d}t \big] \\ &= \mathbb{E}_{Y \sim p_{\mathrm{data}}, \bar{X}_t = \alpha_t Y + \beta_t \varepsilon} \big[ \frac{1}{2} \int_0^1 \| \frac{v_{\mathrm{ft}}(\bar{X}_t,t) - \kappa_t \bar{X}_t}{2 \eta_t} - \frac{1}{\alpha_t} \big( \nabla \log p_{\mathrm{base}}(Y) + \nabla r(Y) \big) \|^2 (2 \eta_t) \, \mathrm{d}t \big]
    \\ &= \mathbb{E}_{Y \sim p_{\mathrm{data}}, \bar{X}_t = \alpha_t Y + \beta_t \varepsilon} \big[ \frac{1}{2} \int_0^1 \| v_{\mathrm{ft}}(\bar{X}_t,t) - \kappa_t \bar{X}_t - \frac{2\eta_t}{\alpha_t} \big( \nabla \log p_{\mathrm{base}}(Y) + \nabla r(Y) \big) \|^2 \frac{1}{2\eta_t} \, \mathrm{d}t \big]
    \\ &= \mathbb{E}_{Y \sim p_{\mathrm{data}}, \bar{X}_t = \alpha_t Y + \beta_t \varepsilon} \big[ \frac{1}{2} \int_0^1 \| v_{\mathrm{ft}}(\bar{X}_t,t) - \mathbb{E} \big[ \kappa_t \bar{X}_t + \frac{2\eta_t}{\alpha_t} \big( \nabla \log p_{\mathrm{base}}(Y) + \nabla r(Y) \big) | \bar{X}_t \big] \|^2 \frac{1}{2\eta_t} \, \mathrm{d}t \big]
    \\ &\qquad + \mathbb{E}_{Y \sim p_{\mathrm{data}}, \bar{X}_t = \alpha_t Y + \beta_t \varepsilon} \big[  \frac{1}{2} \int_0^1 \| \mathbb{E} \big[\kappa_t \bar{X}_t + \frac{2\eta_t}{\alpha_t} \big( \nabla \log p_{\mathrm{base}}(Y) + \nabla r(Y) \big) | \bar{X}_t \big] \\ &\qquad\qquad\qquad\qquad\qquad\qquad - \big( \kappa_t \bar{X}_t + \frac{2\eta_t}{\alpha_t} \big( \nabla \log p_{\mathrm{base}}(Y) + \nabla r(Y) \big) \big) \|^2 \frac{1}{2\eta_t} \, \mathrm{d}t \big]
\end{split}
\end{talign}
The second term in the right-hand side of \eqref{eq:exp_tsm} (the variance term) can be simplified to 
\begin{talign}
\begin{split}
    &\mathbb{E}_{Y \sim p_{\mathrm{data}}, \bar{X}_t = \alpha_t Y + \beta_t \varepsilon} \big[ \frac{1}{2}  \int_0^1 \| \mathbb{E} \big[ \frac{2\eta_t}{\alpha_t} \big( \nabla \log p_{\mathrm{base}}(Y) + \nabla r(Y) \big) | \bar{X}_t \big] \\ &\qquad\qquad\qquad\qquad\qquad\qquad - \frac{2\eta_t}{\alpha_t} \big( \nabla \log p_{\mathrm{base}}(Y) + \nabla r(Y) \big) \|^2 \frac{1}{2\eta_t} \, \mathrm{d}t \big] \\ &= \mathbb{E}_{Y \sim p_{\mathrm{data}}, \bar{X}_t = \alpha_t Y + \beta_t \varepsilon} \big[  \frac{1}{2} \int_0^1 \frac{2\eta_t}{\alpha^2_t } \| \mathbb{E} \big[ \nabla \log p_{\mathrm{base}}(Y) + \nabla r(Y) | \bar{X}_t \big] \\ &\qquad\qquad\qquad\qquad\qquad\qquad\qquad\quad - \big( \nabla \log p_{\mathrm{base}}(Y) + \nabla r(Y) \big) \|^2 \, \mathrm{d}t \big] \\ &\leq \int_0^1 \frac{\eta_t}{\alpha^2_t } \, \mathrm{d}t \times \mathbb{E}_{Y \sim p_{\mathrm{data}}} \big[  \|  \nabla \log p_{\mathrm{base}}(Y) + \nabla r(Y) \big) \|^2 \big].
\end{split}
\end{talign}
Observe that 
$\frac{\eta_t}{\alpha^2_t} = \frac{\beta_t \big( \frac{\dot{\alpha}_t}{\alpha_t} \beta_t - \dot{\beta}_t \big)}{\alpha^2_t}$. 
And in particular, for the three subcases:
\begin{enumerate}[label=(\roman*)]
\item Föllmer process: 
\begin{talign}
    \frac{\eta_t}{\alpha^2_t} = \frac{\dot{\bar{\alpha}}_t \sigma_0^2}{2\bar{\alpha}^2_t} \implies \int_0^1 \frac{\eta_t}{\alpha^2_t} \, \mathrm{d}t = \int_0^1 \frac{\dot{\bar{\alpha}}_t \sigma_0^2}{2\bar{\alpha}^2_t}  \, \mathrm{d}t = \int_0^1 \frac{\sigma_0^2}{2t^2}  \, \mathrm{d}t = +\infty.
\end{talign}
\item DDIM/DDPM:
\begin{talign}
    \frac{\eta_t}{\alpha^2_t} = \frac{\frac{\dot{\bar{\alpha}}_t \sigma_0^2}{2\bar{\alpha}_t}}{\bar{\alpha}_t} =\frac{\dot{\bar{\alpha}}_t \sigma_0^2}{2\bar{\alpha}^2_t} \implies \int_0^1 \frac{\eta_t}{\alpha^2_t} \, \mathrm{d}t = \int_0^1 \frac{\dot{\bar{\alpha}}_t \sigma_0^2}{2\bar{\alpha}^2_t} \, \mathrm{d}t = +\infty
\end{talign}
\item Rectified Flow:
\begin{talign}
    \frac{\eta_t}{\alpha^2_t} = \frac{\frac{(1-\bar{\alpha}_t) \dot{\bar{\alpha}}_t \sigma_0^2}{\bar{\alpha}_t}}{\bar{\alpha}^2_t} = \frac{(1-\bar{\alpha}_t) \dot{\bar{\alpha}}_t \sigma_0^2}{\bar{\alpha}^3_t} \implies \int_0^1 \frac{\eta_t}{\alpha^2_t} \, \mathrm{d}t = \int_0^1 \frac{(1-\bar{\alpha}_t) \dot{\bar{\alpha}}_t \sigma_0^2}{\bar{\alpha}^3_t} \, \mathrm{d}t = \int_0^1 \frac{(1-t) \dot{\sigma}_0^2}{t^3} \, \mathrm{d}t = +\infty
\end{talign}
\end{enumerate}
\paragraph{Conditional Score Matching}
And for Conditional Score Matching, we have that
\begin{talign}
\begin{split} \label{eq:exp_csm}
&\mathbb{E}_{Y \sim p_{\mathrm{data}}, \bar{X}_t = \alpha_t Y + \beta_t \varepsilon} \big[ \frac{1}{2} \int_0^1 \| \hat{s}(\bar{X}_t,t) + \frac{ \bar{X}_t - \alpha_t Y}{\beta_t^2} \|^2 (2 \eta_t) \, \mathrm{d}t \big] 
\\ &= \mathbb{E}_{Y \sim p_{\mathrm{data}}, \bar{X}_t = \alpha_t Y + \beta_t \varepsilon} \big[ \int_0^1 \| \frac{v_{\mathrm{ft}}(\bar{X}_t,t) - \kappa_t \bar{X}_t}{2 \eta_t} + \frac{ \bar{X}_t - \alpha_t Y}{\beta_t^2} \|^2 (2 \eta_t) \, \mathrm{d}t \big]
\\ &= \mathbb{E}_{Y \sim p_{\mathrm{data}}, \bar{X}_t = \alpha_t Y + \beta_t \varepsilon} \big[ \frac{1}{2} \int_0^1 \| v_{\mathrm{ft}}(\bar{X}_t,t) - \kappa_t \bar{X}_t + \frac{ 2\eta_t ( \bar{X}_t - \alpha_t Y )}{\beta_t^2} \|^2 \frac{1}{2 \eta_t} \, \mathrm{d}t \big]
\\ &= \mathbb{E}_{Y \sim p_{\mathrm{data}}, \bar{X}_t = \alpha_t Y + \beta_t \varepsilon} \big[ \frac{1}{2}  \int_0^1 \| v_{\mathrm{ft}}(\bar{X}_t,t) - \mathbb{E}\big[\kappa_t \bar{X}_t + \frac{ 2\eta_t ( \bar{X}_t - \alpha_t Y )}{\beta_t^2} | \bar{X}_t \big] \|^2 \frac{1}{2 \eta_t} \, \mathrm{d}t \big] 
\\ &\ + \mathbb{E}_{Y \sim p_{\mathrm{data}}, \bar{X}_t = \alpha_t Y + \beta_t \varepsilon} \big[ \frac{1}{2}  \int_0^1 \| \mathbb{E}\big[\kappa_t \bar{X}_t + \frac{ 2\eta_t ( \bar{X}_t - \alpha_t Y )}{\beta_t^2} | \bar{X}_t \big] 
- \big( \kappa_t \bar{X}_t + \frac{ 2\eta_t ( \bar{X}_t - \alpha_t Y)}{\beta_t^2} \big) \|^2 \frac{1}{2 \eta_t} \, \mathrm{d}t \big].
\end{split}
\end{talign}
The second term in the right-hand side of \eqref{eq:exp_csm} (the variance term) can be simplified to 
\begin{talign}
\begin{split}
    &\mathbb{E}_{Y \sim p_{\mathrm{data}}, \bar{X}_t = \alpha_t Y + \beta_t \varepsilon} \big[ \frac{1}{2} \int_0^1 \| \mathbb{E}\big[\frac{ 2\eta_t \big( \bar{X}_t - \alpha_t Y \big)}{\beta_t^2} | \bar{X}_t \big] 
    - \frac{ 2\eta_t \big( \bar{X}_t - \alpha_t Y \big)}{\beta_t^2} \|^2 \frac{1}{2 \eta_t} \, \mathrm{d}t \big] \\ &= \mathbb{E}_{Y \sim p_{\mathrm{data}}, \bar{X}_t = \alpha_t Y + \beta_t \varepsilon} \big[ \int_0^1 \frac{\eta_t}{\beta^4_t} \| \mathbb{E} \big[ \nabla \log p_{\mathrm{base}}(Y) + \nabla r(Y) | \bar{X}_t \big] - \big( \nabla \log p_{\mathrm{base}}(Y) + \nabla r(Y) \big) \|^2 \, \mathrm{d}t \big] \\ &\leq \int_0^1 \frac{\eta_t}{\beta^4_t} \, \mathrm{d}t \times \mathbb{E}_{Y \sim p_{\mathrm{data}}} \big[  \|  \nabla \log p_{\mathrm{base}}(Y) + \nabla r(Y) \big) \|^2 \big].
\end{split}
\end{talign}
Observe that 
    $\frac{\eta_t}{\beta^4_t} = \frac{\beta_t \big( \frac{\dot{\alpha}_t}{\alpha_t} \beta_t - \dot{\beta}_t \big)}{\beta^4_t} = \frac{\frac{\dot{\alpha}_t}{\alpha_t} - \frac{\dot{\beta}_t}{\beta_t}}{\beta^2_t}$. 
And in particular, for the three subcases:
\begin{enumerate}[label=(\roman*),left=0pt]
\item Föllmer process: 
\begin{talign}
\begin{split}
    &\frac{\eta_t}{\alpha^2_t} = \frac{\dot{\bar{\alpha}}_t \sigma_0^2}{2\bar{\alpha}_t^2 (1-\bar{\alpha}_t)^2 \sigma_0^4} = \frac{\dot{\bar{\alpha}}_t}{2\bar{\alpha}_t^2 (1-\bar{\alpha}_t)^2 \sigma_0^2} \\ &\implies \int_0^1 \frac{\eta_t}{\alpha^2_t} \, \mathrm{d}t = \int_0^1 \frac{\dot{\bar{\alpha}}_t}{2\bar{\alpha}_t^2 (1-\bar{\alpha}_t)^2 \sigma_0^2}  \, \mathrm{d}t = \int_0^1 \frac{1}{2 \sigma_0^2 t^2 (1-t)^2}  \, \mathrm{d}t = +\infty.
\end{split}
\end{talign}
\item DDIM/DDPM:
\begin{talign}
\begin{split}
    &\frac{\eta_t}{\beta^4_t} = \frac{\frac{\dot{\bar{\alpha}}_t \sigma_0^2}{2\bar{\alpha}_t}}{(1-\bar{\alpha}_t)^2 \sigma_0^4} = \frac{\dot{\bar{\alpha}}_t}{2\bar{\alpha}_t(1-\bar{\alpha}_t)^2 \sigma_0^2} \\ &\implies \int_0^1 \frac{\eta_t}{\beta^4_t} \, \mathrm{d}t = \int_0^1 \frac{\dot{\bar{\alpha}}_t}{2\bar{\alpha}_t(1-\bar{\alpha}_t)^2 \sigma_0^2} \, \mathrm{d}t = \int_0^1 \frac{1}{2 t(1-t)^2 \sigma_0^2} \, \mathrm{d}t = +\infty.
\end{split}
\end{talign}
\item Rectified Flow:
\begin{talign}
\begin{split}
    &\frac{\eta_t}{\beta^4_t} = \frac{\frac{(1-\bar{\alpha}_t) \dot{\bar{\alpha}}_t \sigma_0^2}{\bar{\alpha}_t}}{(1-\bar{\alpha}_t)^4} = \frac{(1-\bar{\alpha}_t) \dot{\bar{\alpha}}_t \sigma_0^2}{\bar{\alpha}_t(1-\bar{\alpha}_t)^4} = \frac{\dot{\bar{\alpha}}_t \sigma_0^2}{\bar{\alpha}_t(1-\bar{\alpha}_t)^3} \\ &\implies \int_0^1 \frac{\eta_t}{\beta^4_t} \, \mathrm{d}t = \int_0^1 \frac{\dot{\bar{\alpha}}_t \sigma_0^2}{\bar{\alpha}_t(1-\bar{\alpha}_t)^3} \, \mathrm{d}t = \int_0^1 \frac{\sigma_0^2}{t(1-t)^3} \, \mathrm{d}t = +\infty.
\end{split}
\end{talign}
\end{enumerate}
\paragraph{Novel Score Matching}
And for Novel Score Matching, we have that
\begin{talign}
\begin{split} \label{eq:exp_nsm}
&\mathbb{E}_{Y \sim p_{\mathrm{data}}, \bar{X}_t = \alpha_t Y + \beta_t \varepsilon} \big[ \frac{1}{2} \int_0^1 \| \hat{s}(\bar{X}_t,t) - \frac{\alpha_t ( \nabla \log p_{\mathrm{base}}(Y) + \nabla r(Y) ) - ( \bar{X}_t - \alpha_t Y )}{\alpha_t^2 + \beta_t^2} \|^2 (2 \eta_t) \, \mathrm{d}t \big] 
\\ &= \mathbb{E}_{Y \sim p_{\mathrm{data}}, \bar{X}_t = \alpha_t Y + \beta_t \varepsilon} \big[ \frac{1}{2} \int_0^1 \| \frac{v_{\mathrm{ft}}(\bar{X}_t,t) - \kappa_t \bar{X}_t}{2 \eta_t} - \frac{\alpha_t ( \nabla \log p_{\mathrm{base}}(Y) + \nabla r(Y) ) - ( \bar{X}_t - \alpha_t Y )}{\alpha_t^2 + \beta_t^2} \|^2 (2 \eta_t) \, \mathrm{d}t \big]
\\ &= \mathbb{E}_{Y \sim p_{\mathrm{data}}, \bar{X}_t = \alpha_t Y + \beta_t \varepsilon} \big[ \frac{1}{2} \int_0^1 \| v_{\mathrm{ft}}(\bar{X}_t,t) - \kappa_t \bar{X}_t - \frac{2 \eta_t \big( \alpha_t ( \nabla \log p_{\mathrm{base}}(Y) + \nabla r(Y) ) - ( \bar{X}_t - \alpha_t Y ) \big)}{\alpha_t^2 + \beta_t^2} \|^2 \frac{1}{2 \eta_t} \, \mathrm{d}t \big]
\\ &= \mathbb{E}_{Y \sim p_{\mathrm{data}}, \bar{X}_t = \alpha_t Y + \beta_t \varepsilon} \big[ \frac{1}{2} \int_0^1 \| v_{\mathrm{ft}}(\bar{X}_t,t) - \mathbb{E}\big[\kappa_t \bar{X}_t + \frac{2 \eta_t \big( \alpha_t ( \nabla \log p_{\mathrm{base}}(Y) + \nabla r(Y) ) - ( \bar{X}_t - \alpha_t Y ) \big)}{\alpha_t^2 + \beta_t^2} | \bar{X}_t \big] \|^2 \frac{1}{2 \eta_t} \, \mathrm{d}t \big] 
\\ &\qquad + \mathbb{E}_{Y \sim p_{\mathrm{data}}, \bar{X}_t = \alpha_t Y + \beta_t \varepsilon} \big[ \frac{1}{2} \int_0^1 \| \mathbb{E}\big[\kappa_t \bar{X}_t + \frac{2 \eta_t \big( \alpha_t ( \nabla \log p_{\mathrm{base}}(Y) + \nabla r(Y) ) - ( \bar{X}_t - \alpha_t Y ) \big)}{\alpha_t^2 + \beta_t^2} | \bar{X}_t \big] \\ &\qquad\qquad\qquad\qquad\qquad\qquad - \big( \kappa_t \bar{X}_t + \frac{2 \eta_t \big( \alpha_t ( \nabla \log p_{\mathrm{base}}(Y) + \nabla r(Y) ) - ( \bar{X}_t - \alpha_t Y ) \big)}{\alpha_t^2 + \beta_t^2} \big) \|^2 \frac{1}{2 \eta_t} \, \mathrm{d}t \big].
\end{split}
\end{talign}
The second term in the right-hand side of \eqref{eq:exp_nsm} (the variance term) can be simplified to 
\begin{talign}
\begin{split}
    &\mathbb{E}_{Y \sim p_{\mathrm{data}}, \bar{X}_t = \alpha_t Y + \beta_t \varepsilon} \big[ \frac{1}{2} \int_0^1 \| \mathbb{E}\big[\frac{2 \eta_t \big( \alpha_t ( \nabla \log p_{\mathrm{base}}(Y) + \nabla r(Y) ) + \alpha_t Y \big)}{\alpha_t^2 + \beta_t^2} | \bar{X}_t \big] 
    \\ &\qquad\qquad\qquad\qquad\qquad\qquad 
    - \big( \frac{2 \eta_t \big( \alpha_t ( \nabla \log p_{\mathrm{base}}(Y) + \nabla r(Y) ) + \alpha_t Y \big)}{\alpha_t^2 + \beta_t^2} \big) \|^2 \frac{1}{2 \eta_t} \, \mathrm{d}t \big] \\ &= \mathbb{E}_{Y \sim p_{\mathrm{data}}, \bar{X}_t = \alpha_t Y + \beta_t \varepsilon} \big[ \int_0^1 \frac{\eta_t \alpha_t^2}{\alpha_t^2 + \beta_t^2} \| \mathbb{E}\big[ \nabla \log p_{\mathrm{base}}(Y) + \nabla r(Y) + Y | \bar{X}_t \big] 
    \\ &\qquad\qquad\qquad\qquad\qquad\qquad\qquad\qquad\qquad 
    - \big( \nabla \log p_{\mathrm{base}}(Y) + \nabla r(Y) + Y \big) \|^2 \, \mathrm{d}t \big] \\ &\leq \int_0^1 \frac{\eta_t \alpha_t^2}{\alpha_t^2 + \beta_t^2} \, \mathrm{d}t \times \mathbb{E}_{Y \sim p_{\mathrm{data}}} \big[ \| \nabla \log p_{\mathrm{base}}(Y) + \nabla r(Y) + Y \|^2 \big].
\end{split}
\end{talign}

And in particular, for the three subcases:
\begin{enumerate}[label=(\roman*),left=0pt]
\item Föllmer process: 
\begin{talign}
\begin{split}
    &\frac{\eta_t \alpha_t^2}{\alpha_t^2 + \beta_t^2} = \frac{\dot{\bar{\alpha}}_t \sigma_0^2}{2} \frac{\bar{\alpha}^2_t}{\bar{\alpha}^2_t + \bar{\alpha}_t (1-\bar{\alpha}_t) \sigma_0^2} = \frac{\dot{\bar{\alpha}}_t \sigma_0^2 \bar{\alpha}_t}{2 (\bar{\alpha}_t + (1-\bar{\alpha}_t) \sigma_0^2)} = \frac{\dot{\bar{\alpha}}_t \sigma_0^2 \bar{\alpha}_t}{2 (\sigma_0^2 + (1-\sigma_0^2)\bar{\alpha}_t)}
    \\ &\implies \int_0^1 \frac{\eta_t \alpha_t^2}{\alpha_t^2 + \beta_t^2} \, \mathrm{d}t = \int_0^1 \frac{\dot{\bar{\alpha}}_t \sigma_0^2 \bar{\alpha}_t}{2 (\sigma_0^2 + (1-\sigma_0^2)\bar{\alpha}_t)} 
    \, \mathrm{d}t = \int_0^1 \frac{\sigma_0^2 t}{2 (\sigma_0^2 + (1-\sigma_0^2) t)}
    \, \mathrm{d}t \\ &\qquad\qquad\qquad = 
    \begin{cases}
    \frac{\sigma_{0}^{2}}{2(1-\sigma_{0}^{2})}
+ \frac{\sigma_{0}^{4}}{2(1-\sigma_{0}^{2})^{2}}
\log\bigl(\sigma_{0}^{2}\bigr) &\text{if } \sigma_0 \neq 1, \\
    \frac{1}{4} &\text{if } \sigma_0 = 1.
    \end{cases}
\end{split}
\end{talign}
\item DDIM/DDPM:
\begin{talign}
    &\frac{\eta_t \alpha_t^2}{\alpha_t^2 + \beta_t^2} =  \frac{\frac{\dot{\bar{\alpha}}_t \sigma_0^2}{2\bar{\alpha}_t} \bar{\alpha}_t}{\bar{\alpha}_t + (1 - \bar{\alpha}_t)\sigma_0^2} = \frac{\dot{\bar{\alpha}}_t \sigma_0^2}{2(\bar{\alpha}_t + (1 - \bar{\alpha}_t)\sigma_0^2)} \\ 
    &\implies \int_0^1 \frac{\eta_t \alpha_t^2}{\alpha_t^2 + \beta_t^2} \, \mathrm{d}t = \int_0^1 \frac{\dot{\bar{\alpha}}_t \sigma_0^2}{2(\bar{\alpha}_t + (1 - \bar{\alpha}_t)\sigma_0^2)} \, \mathrm{d}t = \int_0^1 \frac{\sigma_0^2}{2(t + (1 -t)\sigma_0^2)} \, \mathrm{d}t =
    \begin{cases}
    -\,\frac{\sigma_{0}^{2}}{2(1-\sigma_{0}^{2})}
    \log\bigl(\sigma_{0}^{2}\bigr) &\text{if } \sigma_0 \neq 1, \\
    \frac{1}{2} &\text{if } \sigma_0 = 1.
    \end{cases}
\end{talign}
\item Rectified Flow:
\begin{talign}
    &\frac{\eta_t \alpha_t^2}{\alpha_t^2 + \beta_t^2} = \frac{\frac{(1-\bar{\alpha}_t) \dot{\bar{\alpha}}_t \sigma_0^2}{\bar{\alpha}_t} \bar{\alpha}^2_t}{\bar{\alpha}^2_t + \bar{\beta}^2_t} = \frac{(1-\bar{\alpha}_t) \dot{\bar{\alpha}}_t \bar{\alpha}_t \sigma_0^2}{\bar{\alpha}^2_t + \bar{\beta}^2_t}
    \\ &\implies \int_0^1 \frac{\eta_t \alpha_t^2}{\alpha_t^2 + \beta_t^2} \, \mathrm{d}t = \int_0^1 \frac{(1-\bar{\alpha}_t) \dot{\bar{\alpha}}_t \bar{\alpha}_t \sigma_0^2}{\bar{\alpha}^2_t + \bar{\beta}^2_t} \, \mathrm{d}t = \int_0^1 \frac{(1-t) t \sigma_0^2}{t^2 + (1-t)^2} \, \mathrm{d}t = \frac{\sigma_0^2 (\pi-2)}{4}
\end{talign}
\end{enumerate}

\section{Proofs for the thermodynamics-based methods}

\subsection{Proof of \Cref{prop:cmcd_exponential}: CMCD for exponential tilting} \label{proof:cmcd_exponential}
We apply \Cref{prop:forward-backward-rnd} with $T=1$, and the choices $\Gamma_0 = \mathcal{N}(0,\beta^2_0 \mathrm{I})$, $\Gamma_1 = p^{\mathrm{base}}$, $\mu \propto \mathcal{N}(0, \beta^2_0 \mathrm{I}) \exp(r_0)$, $\nu \propto p^{\mathrm{base}} \exp(r_1)$, and 
\begin{talign}
    \gamma_t^{+}(x) &= b_{\sigma}(x,t) = \kappa_t x + \big( \frac{\sigma(t)^2}{2} + \eta_t \big) \mathfrak{s}_t(x), \\
    \gamma_t^{-}(x) &= b_{\sigma}(x,t) - \sigma(t)^2 \mathfrak{s}_t(x) = \kappa_t x + \big( -\frac{\sigma(t)^2}{2} + \eta_t \big) \mathfrak{s}_t(x), \\
    \begin{split}
    a_t(x) &= \kappa_t x + \big( \frac{\sigma(t)^2}{2} + \eta_t \big) \big(\mathfrak{s}_t(x) + \nabla r_t(x) \big) + v(x,t) \\ &= \gamma_t^{+}(x) + \big( \frac{\sigma(t)^2}{2} + \eta_t \big) \nabla r_t(x) + v(x,t),
    \end{split} \\
    \begin{split}
    b_t(x) &= a_t(x) - \sigma(t)^2 \big(\mathfrak{s}_t(x) + \nabla r_t(x) \big) \\
    &= \kappa_t x + \big( - \frac{\sigma(t)^2}{2} + \eta_t \big) \big(\mathfrak{s}_t(x) + \nabla r_t(x) \big) + v(x,t)
    \\ &= \gamma_t^{-}(x) + \big( - \frac{\sigma(t)^2}{2} + \eta_t \big) \nabla r_t(x) + v(x,t),
    \end{split}
\end{talign}
Observe that when $Y$ solves $\overrightarrow{\mathrm{d}Y_t} = a_t(Y_t)\,\mathrm{d}t + \sigma(t)\,\overrightarrow{\mathrm{d}W_t}$, 
it also solves $\overleftarrow{\mathrm{d}Y_t} = a_t(Y_t)\,\mathrm{d}t + \sigma(t)\,\overleftarrow{\mathrm{d}W_t}$ because $\sigma$ does not depend on the position. When $Y$ solves these SDEs, \Cref{prop:forward-backward-rnd} implies that
\begin{talign}
\begin{split} \label{eq:log_RND}
\log \frac{\mathrm{d}\overrightarrow{\mathbb{P}}^{\mu,a}}
        {\mathrm{d}\overleftarrow{\mathbb{P}}^{\nu,b}} (Y)
&= 
r_0(Y_0) - \log \mathbb{E}_{\mathcal{N}(0, \beta_0^2 \mathrm{I})} [\exp(r_0)]
- \big( r_1(Y_1) - \log \mathbb{E}_{p^{\mathrm{base}}} [\exp(r_1)] \big)
\\
&\quad
+ \int_{0}^{1}
\sigma(t)^{-2} \big\langle \big( \frac{\sigma(t)^2}{2} + \eta_t \big) \nabla r_t(Y_t) + v(Y_t,t), \\
&\qquad\qquad\qquad\quad
a_t(Y_t)\,\mathrm{d}t + \sigma(t)\,\overrightarrow{\mathrm{d}W_t}
- \tfrac{1}{2}\bigl(a_t + \gamma^{+}_t\bigr)(Y_t)\,\mathrm{d}t \big\rangle 
\\
&\quad
- \int_{0}^{1}
\sigma(t)^{-2}
\big\langle \big( - \frac{\sigma(t)^2}{2} + \eta_t \big) \nabla r_t(Y_t) + v(Y_t,t), \\
&\qquad\qquad\qquad\quad
a_t(Y_t)\,\mathrm{d}t + \sigma(t)\,\overleftarrow{\mathrm{d}W_t}
- \tfrac{1}{2}\bigl(b_t + \gamma^{-}_t\bigr)(Y_t)\,\mathrm{d}t \big\rangle \\
&= 
r_0(Y_0) - \log \mathbb{E}_{\mathcal{N}(0, \beta_0^2 \mathrm{I})} [\exp(r_0)]
- \big( r_1(Y_1) - \log \mathbb{E}_{p^{\mathrm{base}}} [\exp(r_1)] \big)
\\
&\quad
+ \int_{0}^{1}
\sigma(t)^{-2} \big\langle \big( \frac{\sigma(t)^2}{2} + \eta_t \big) \nabla r_t(Y_t) + v(Y_t,t), \\
&\qquad\qquad\qquad\quad
\frac{1}{2}\big( \big( \frac{\sigma(t)^2}{2} + \eta_t \big) \nabla r_t(Y_t) + v(Y_t,t) \big) \,\mathrm{d}t + \sigma(t)\,\overrightarrow{\mathrm{d}W_t} \big\rangle 
\\
&\quad
- \int_{0}^{1}
\sigma(t)^{-2}
\big\langle \big( - \frac{\sigma(t)^2}{2} + \eta_t \big) \nabla r_t(Y_t) + v(Y_t,t), \\
&\qquad\qquad\qquad\quad
\frac{1}{2}\big( \big( -\frac{\sigma(t)^2}{2} + \eta_t \big) \nabla r_t(Y_t) + v(Y_t,t) \big) \,\mathrm{d}t + \sigma(t)\,\overleftarrow{\mathrm{d}W_t} \\
&\qquad\qquad\qquad\quad
+ \sigma(t)^2 \big(\mathfrak{s}_t(Y_t) + \nabla r_t(Y_t) \big) \,\mathrm{d}t
\big\rangle,
\end{split}
\end{talign}
where the last equality holds because 
\begin{talign}
\frac{1}{2} (a_t(x) - \gamma^{+}_t(x)) = \frac{1}{2}\big( \big( \frac{\sigma(t)^2}{2} + \eta_t \big) \nabla r_t(x) + v(x,t) \big)
\end{talign}
and 
\begin{talign}
\begin{split}
&a_t(x) - \frac{1}{2} \big( b_t(x) - \gamma_t^{-}(x) \big) = \frac{1}{2}\big( b_t(x) - \gamma_t^{-}(x) \big) + \sigma(t)^2 \big(\mathfrak{s}_t(x) + \nabla r_t(x) \big) \\ &= \frac{1}{2}\big( \big( -\frac{\sigma(t)^2}{2} + \eta_t \big) \nabla r_t(x) + v(x,t) \big) + \sigma(t)^2 \big(\mathfrak{s}_t(x) + \nabla r_t(x) \big).
\end{split}
\end{talign}
The right-hand side of \eqref{eq:log_RND} can be further simplified into
\begin{talign}
\begin{split} \label{eq:simplification}
&\log \frac{\mathrm{d}\overrightarrow{\mathbb{P}}^{\mu,a}}{\mathrm{d}\overleftarrow{\mathbb{P}}^{\nu,b}} (Y) = r_0(Y_0) - \log \mathbb{E}_{\mathcal{N}(0, \beta_0^2 \mathrm{I})} [\exp(r_0)]
- \big( r_1(Y_1) - \log \mathbb{E}_{p^{\mathrm{base}}} [\exp(r_1)] \big) \\
&\qquad + \int_0^1 \Big[
    -\langle v(Y_t,t),\,\mathfrak{s}_t(Y_t)\rangle
    + \big( \tfrac{\sigma(t)^2}{2} - \eta_t \big)\,\langle \nabla r_t(Y_t),\,\mathfrak{s}_t(Y_t)\rangle
    + \tfrac{\sigma(t)^2}{2}\,\|\nabla r_t(Y_t)\|^2
\Big]\,\mathrm{d}t \\
&\qquad\quad+
\int_0^1 \Big[
    \sigma(t)^{-1}\big(
        \langle v(Y_t,t) + \eta_t \nabla r_t(Y_t),\,\overrightarrow{\mathrm{d}W_t}\rangle
        - \langle v(Y_t,t) + \eta_t \nabla r_t(Y_t),\,\overleftarrow{\mathrm{d}W_t}\rangle
    \big) \\
&\qquad\qquad\qquad
    + \tfrac{\sigma(t)}{2}\big(
        \langle \nabla r_t(Y_t),\,\overrightarrow{\mathrm{d}W_t}\rangle
        + \langle \nabla r_t(Y_t),\,\overleftarrow{\mathrm{d}W_t}\rangle
    \big)
\Big].
\end{split}
\end{talign}
To obtain this simplification, it is convenient to define $w(Y_t,t):= v(Y_t,t)+\eta_t \nabla r_t(Y_t)$ and $w^{+}(x,t) := w(x,t) + \frac{\sigma(t)^2}{2} \nabla r_t(Y_t)$, $w^{-}(x,t) := w(x,t) - \frac{\sigma(t)^2}{2} \nabla r_t(Y_t)$, which means that we can rewrite the right-hand side of \eqref{eq:log_RND} as
\begin{talign}
\begin{split}
&\int_{0}^{1} 
    \sigma(t)^{-2} \big\langle 
        w^{+},\, 
        \tfrac{1}{2} w^{+}\,\mathrm{d}t + \sigma(t)\,\overrightarrow{\mathrm{d}W_t} 
    \big\rangle \\
&\quad - \int_{0}^{1} 
    \sigma(t)^{-2} \big\langle 
        w^{-},\, 
        \tfrac{1}{2} w^{-}\,\mathrm{d}t + \sigma(t)\,\overleftarrow{\mathrm{d}W_t} 
        + \sigma(t)^{2}\big(\mathfrak{s}_t(Y_t) + \nabla r_t(Y_t)\big)\,\mathrm{d}t 
    \big\rangle.
\end{split}
\end{talign}
Equation \eqref{eq:simplification} then follows from manipulating this expression.

To conclude the proof, we plug equation \eqref{eq:simplification} into the definition of the KL and log-variance CMCD losses:
\begin{talign}
\begin{split}
    &\mathcal{L}_{\mathrm{KL-CMCD}}(v) = \mathbb{E}_{Y^v \sim \overrightarrow{\mathbb{P}}^{\mu,a^v}} \big[ \log \frac{\mathrm{d}\overrightarrow{\mathbb{P}}^{\mu,a^v}}{\mathrm{d}\overleftarrow{\mathbb{P}}^{\nu,b^v}} (Y^v) \big] \\ &= \mathbb{E}_{Y^v \sim \overrightarrow{\mathbb{P}}^{\mu,a^v}} \Big[ \int_0^1 
    \big( - \langle v(Y^v_t,t),\,\mathfrak{s}_t(Y^v_t)\rangle
    + \big( \tfrac{\sigma(t)^2}{2} - \eta_t \big)\,\langle \nabla r_t(Y^v_t),\,\mathfrak{s}_t(Y^v_t)\rangle
    + \tfrac{\sigma(t)^2}{2}\,\|\nabla r_t(Y^v_t)\|^2 \big)
\,\mathrm{d}t \\ &\qquad\qquad\qquad -
\int_0^1 \sigma(t)^{-1} \langle v(Y^v_t,t) + (\eta_t - \frac{\sigma(t)^2}{2}) \nabla r_t(Y^v_t),\,\overleftarrow{\mathrm{d}W_t}\rangle - r(Y^v_1)
\Big] + \mathrm{const.},
\end{split}
\end{talign}
where we write $Y^v := Y$ and $a^v := a$ to make the dependency of $a$ on $v$ explicit.
And
\begin{talign}
\begin{split}
    &\mathcal{L}_{\mathrm{Var-CMCD}}(v) = \mathrm{Var}_{Y^v \sim \overrightarrow{\mathbb{P}}^{\mu,a^v}} \Big[ \log \frac{\mathrm{d}\overrightarrow{\mathbb{P}}^{\mu,a}}{\mathrm{d}\overleftarrow{\mathbb{P}}^{\nu,b}} (Y^v) \Big]
    \\ &= \mathrm{Var}_{Y^v \sim \overrightarrow{\mathbb{P}}^{\mu,a^v}} \bigg[ r_0(Y^v_0) - \log \mathbb{E}_{\mathcal{N}(0, \beta_0^2 \mathrm{I})} [\exp(r_0)]
- \big( r_1(Y^v_1) - \log \mathbb{E}_{p^{\mathrm{base}}} [\exp(r_1)] \big) \\
&\qquad\qquad + \int_0^1 \Big[
    -\langle v(Y^v_t,t),\,\mathfrak{s}_t(Y^v_t)\rangle
    + \big( \tfrac{\sigma(t)^2}{2} - \eta_t \big)\,\langle \nabla r_t(Y^v_t),\,\mathfrak{s}_t(Y^v_t)\rangle
    + \tfrac{\sigma(t)^2}{2}\,\|\nabla r_t(Y^v_t)\|^2
\Big]\,\mathrm{d}t \\
&\qquad\qquad\quad+
\int_0^1 \Big[
    \sigma(t)^{-1}\big(
        \langle v(Y^v_t,t) + \eta_t \nabla r_t(Y^v_t),\,\overrightarrow{\mathrm{d}W_t}\rangle
        - \langle v(Y^v_t,t) + \eta_t \nabla r_t(Y^v_t),\,\overleftarrow{\mathrm{d}W_t}\rangle
    \big) \\
&\qquad\qquad\qquad\qquad
    + \tfrac{\sigma(t)}{2}\big(
        \langle \nabla r_t(Y^v_t),\,\overrightarrow{\mathrm{d}W_t}\rangle
        + \langle \nabla r_t(Y^v_t),\,\overleftarrow{\mathrm{d}W_t}\rangle
    \big)
\Big] \bigg].
\end{split}    
\end{talign}
Observe that the terms $- \log \mathbb{E}_{\mathcal{N}(0, \beta_0^2 \mathrm{I})} [\exp(r_0)]$ and $\log \mathbb{E}_{p^{\mathrm{base}}} [\exp(r_1)]$ are unknown constants that can be removed, because they appear inside of the divergence. This yields the final expression of the log-variance CMCD loss.

\subsection{Proof of \Cref{prop:crooks_exponential}: Crooks fluctuation theorem for exponential tilting} \label{subsec:proof_crooks}

We use the notation of \Cref{proof:cmcd_exponential}.
We apply \Cref{prop:forward-backward-rnd} with the same choices as in \Cref{proof:cmcd_exponential}, but in this case we leave the expression explicitly in terms of $\overrightarrow{\mathrm{d}Y_t}$ and $\overleftarrow{\mathrm{d}Y_t}$, without assuming that $Y_t$ solves any particular SDE. The expression reads: 
\begin{talign}
\begin{split}
    \log \frac{\mathrm{d}\overrightarrow{\mathbb{P}}^{\mu,a}}
        {\mathrm{d}\overleftarrow{\mathbb{P}}^{\nu,b}} (Y)
    &= 
    r_0(Y_0) - \log \mathbb{E}_{\mathcal{N}(0, \beta_0^2 \mathrm{I})} [\exp(r_0)]
    - \big( r_1(Y_1) - \log \mathbb{E}_{p^{\mathrm{base}}} [\exp(r_1)] \big)
    \\
    &\quad
    + \int_{0}^{1}
    \sigma(t)^{-2} \big\langle \big( \frac{\sigma(t)^2}{2} + \eta_t \big) \nabla r_t(Y_t) + v(Y_t,t), \\
    &\qquad\qquad\qquad\quad
    \overrightarrow{\mathrm{d}Y_t}
    - 
    \big( \kappa_t Y_t + \big( \frac{\sigma(t)^2}{2} + \eta_t \big) \big(\mathfrak{s}_t(Y_t) + \frac{1}{2}\nabla r_t(Y_t) \big) + \frac{1}{2}v(Y_t,t) \big)
    \,\mathrm{d}t \big\rangle 
    \\
    &\quad
    - \int_{0}^{1}
    \sigma(t)^{-2}
    \big\langle \big( - \frac{\sigma(t)^2}{2} + \eta_t \big) \nabla r_t(Y_t) + v(Y_t,t), \\
    &\qquad\qquad\qquad\quad
    \overleftarrow{\mathrm{d}Y_t}
    - 
    \big( \kappa_t Y_t + \big( - \frac{\sigma(t)^2}{2} + \eta_t \big) \big(\mathfrak{s}_t(Y_t) + \frac{1}{2}\nabla r_t(Y_t) \big) + \frac{1}{2}v(Y_t,t) \big)
    \,\mathrm{d}t \big\rangle.
\end{split}
\end{talign}
This can be simplified to:
\begin{talign}
\begin{split}
\log \frac{\mathrm{d}\overrightarrow{\mathbb{P}}^{\mu,a}}
{\mathrm{d}\overleftarrow{\mathbb{P}}^{\nu,b}} (Y)
&= r_0(Y_0) - \log \mathbb{E}_{\mathcal{N}(0, \beta_0^2 \mathrm{I})} [\exp(r_0)]
- \big( r_1(Y_1) - \log \mathbb{E}_{p^{\mathrm{base}}} [\exp(r_1)] \big) \\ &\quad + \int_{0}^{1}\sigma(t)^{-2}
\Big\langle \eta_t\,\nabla r_t(Y_t)+v(Y_t,t),\,
\overrightarrow{\mathrm{d}Y_t}-\overleftarrow{\mathrm{d}Y_t}\Big\rangle 
+\frac{1}{2}\int_{0}^{1}
\Big\langle \nabla r_t(Y_t),\,
\overrightarrow{\mathrm{d}Y_t}+\overleftarrow{\mathrm{d}Y_t}\Big\rangle \\
&\quad-\int_{0}^{1}\Big(
\langle \kappa_t Y_t,\,\nabla r_t(Y_t)\rangle
+\langle v(Y_t,t),\,\mathfrak{s}_t(Y_t)+\nabla r_t(Y_t)\rangle
\\ &\qquad\qquad +2\eta_t\langle \nabla r_t(Y_t),\,\mathfrak{s}_t(Y_t)\rangle
+\eta_t\|\nabla r_t(Y_t)\|^2
\Big)\,\mathrm{d}t. \label{eq:I-expanded_app}
\end{split}
\end{talign}
Applying \eqref{eq:int_diff_2} and \eqref{eq:int_sum_2}, we obtain that
\begin{talign}
    \int_{0}^{1}\sigma(t)^{-2}
\Big\langle \eta_t\,\nabla r_t(Y_t)+v(Y_t,t),\,
\overrightarrow{\mathrm{d}Y_t}-\overleftarrow{\mathrm{d}Y_t}\Big\rangle &= - \int_{0}^{1} \big( \eta_t\,\Delta r_t(Y_t)+ \nabla \cdot v(Y_t,t) \big) \, \mathrm{dt}, \\
    \frac{1}{2}\int_{0}^{1}
\Big\langle \nabla r_t(Y_t),\,
\overrightarrow{\mathrm{d}Y_t}+\overleftarrow{\mathrm{d}Y_t}\Big\rangle &= r_1(Y_1) - r_0(Y_0) - \int_0^1 \partial_t r_t(Y_t) \, \mathrm{d}t,
\end{talign}
and plugging these into the right-hand side of \eqref{eq:I-expanded_app} concludes the proof:
\begin{talign}
\begin{split}
\log \frac{\mathrm{d}\overrightarrow{\mathbb{P}}^{\mu,a}}
{\mathrm{d}\overleftarrow{\mathbb{P}}^{\nu,b}} (Y)
&= 
- \log \mathbb{E}_{\mathcal{N}(0, \beta_0^2 \mathrm{I})} [\exp(r_0)]
+ \log \mathbb{E}_{p^{\mathrm{base}}} [\exp(r_1)] 
\\ &\quad-\int_{0}^{1}\Big(
\langle \kappa_t Y_t + 2\eta_t \mathfrak{s}_t(Y_t),\,\nabla r_t(Y_t)\rangle
+\langle v(Y_t,t),\,\mathfrak{s}_t(Y_t)+\nabla r_t(Y_t)\rangle + \partial_t r_t(Y_t)
\\ &\qquad\qquad 
+\eta_t\|\nabla r_t(Y_t)\|^2 + \eta_t\,\Delta r_t(Y_t) + \nabla \cdot v(Y_t,t)
\Big)\,\mathrm{d}t. \label{eq:I-expanded_app_2}
\end{split}
\end{talign}

\begin{lemma} \label{lem:sum_diff}
    Suppose that $Y$ satisfies the SDE $\overrightarrow{\mathrm{d}Y_t} = a_t(Y_t)\,\mathrm{d}t + \sigma(t)\,\overrightarrow{\mathrm{d}W_t}$, and that $\omega_t : \mathbb{R}^d \to \mathbb{R}^d$ is differentiable and that $r_t : \mathbb{R}^d \to \mathbb{R}^d$ twice-differentiable with respect to the position variable and differentiable with respect to the time variable. We have that
    \begin{talign} 
        \int_0^T \langle \omega_t(Y_t), \overleftarrow{\mathrm{d}Y_t} \rangle - \int_0^T \langle \omega_t(Y_t), \overrightarrow{\mathrm{d}Y_t} \rangle = \int_0^T \sigma(t)^2 \, \nabla \cdot \omega_t(Y_t) \, \mathrm{d}t, \label{eq:int_diff_2}
    \end{talign}
    and
    \begin{talign}
    \begin{split} \label{eq:int_sum_2}
        &\int_0^T \langle \nabla r_t(Y_t), \overleftarrow{\mathrm{d}Y_t} \rangle + \int_0^T \langle \nabla r_t(Y_t), \overrightarrow{\mathrm{d}Y_t} \rangle = 2 \int_0^T \nabla r_t(Y_t) \circ \mathrm{d}Y_t \\ &= 2 \big( r_T(Y_T) - r_0(Y_0) - \int_0^T \partial_t r_t(Y_t) \, \mathrm{d}t \big),
    \end{split}
    \end{talign}
\end{lemma}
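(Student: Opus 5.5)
We will prove \Cref{lem:sum_diff} by writing both stochastic integrals as limits of Riemann sums over a partition $0=t_0<t_1<\dots<t_N=T$ with mesh $\to 0$. By convention, the forward (Itô) integral evaluates the integrand at the left endpoint and the backward Itô integral at the right endpoint:
\begin{talign*}
\int_0^T \langle \omega_t(Y_t), \overrightarrow{\mathrm{d}Y_t}\rangle = \lim \sum_k \langle \omega_{t_k}(Y_{t_k}), Y_{t_{k+1}}-Y_{t_k}\rangle, \qquad \int_0^T \langle \omega_t(Y_t), \overleftarrow{\mathrm{d}Y_t}\rangle = \lim \sum_k \langle \omega_{t_{k+1}}(Y_{t_{k+1}}), Y_{t_{k+1}}-Y_{t_k}\rangle .
\end{talign*}
The convergence of the backward sum is the quadratic covariation statement below.

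For \eqref{eq:int_diff_2}, subtract the two sums. This gives $\sum_k \langle \omega_{t_{k+1}}(Y_{t_{k+1}})-\omega_{t_k}(Y_{t_k}), Y_{t_{k+1}}-Y_{t_k}\rangle$, which converges to the quadratic covariation $\sum_i [\omega^i(Y), Y^i]_T$.

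To compute this covariation, apply Itô's formula to $\omega^i_t(Y_t)$. Its martingale part is $\sigma(t)\,\partial_j\omega^i_t(Y_t)\,\mathrm{d}W^j_t$, while $Y^i$ has martingale part $\sigma(t)\,\mathrm{d}W^i_t$. Hence $\mathrm{d}[\omega^i(Y),Y^i]_t=\sigma(t)^2\partial_i\omega^i_t(Y_t)\,\mathrm{d}t$. Summing over $i$ gives $\int_0^T\sigma(t)^2\nabla\cdot\omega_t(Y_t)\,\mathrm{d}t$, which is exactly \eqref{eq:int_diff_2}. The drift and time-derivative parts of $\omega$ have finite variation, so they contribute nothing to the covariation.

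For \eqref{eq:int_sum_2}, the average of the forward and backward Riemann sums is the trapezoidal (midpoint-type) sum. This sum converges to the Stratonovich integral $\int_0^T \nabla r_t(Y_t)\circ \mathrm{d}Y_t$, which gives the first equality. Equivalently, the Stratonovich integral equals the Itô integral plus $\tfrac12[\nabla r(Y),Y]_T$, and the previous step identifies this correction. For the second equality, use the Stratonovich chain rule:
\begin{talign*}
\mathrm{d}\,r_t(Y_t)=\partial_t r_t(Y_t)\,\mathrm{d}t+\nabla r_t(Y_t)\circ \mathrm{d}Y_t .
\end{talign*}
Integrating from $0$ to $T$ and multiplying by two yields the claim. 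As a check, one can instead apply Itô's formula directly, $\mathrm{d}r_t(Y_t)=\partial_t r_t\,\mathrm{d}t+\langle\nabla r_t,\overrightarrow{\mathrm{d}Y_t}\rangle+\tfrac{\sigma^2}{2}\Delta r_t\,\mathrm{d}t$, and combine it with \eqref{eq:int_diff_2} for $\omega=\nabla r$; the Laplacian terms cancel consistently.

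The main obstacle is justifying that the backward sum converges and that the difference of sums converges to the quadratic covariation. Since $Y$ is a continuous semimartingale and $\omega_t(Y_t)$ is again a semimartingale (assuming $C^{1,2}$ regularity, which we slightly strengthen from the stated differentiability), this follows from the standard result that $\sum \Delta A\,\Delta B\to[A,B]$ in probability. The rest is bookkeeping.
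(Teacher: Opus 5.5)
Your proof is correct and follows essentially the same route as the paper: the backward-minus-forward difference is identified with the quadratic covariation $[\omega(Y),Y]_T$, which is computed from the martingale part of $\omega_t(Y_t)$ via Itô's formula, and the forward-plus-backward sum is twice the Stratonovich integral, evaluated with the Stratonovich chain rule. Your explicit strengthening to $C^{1,2}$ regularity of $\omega$ is sensible. The paper's argument needs it implicitly too, since it invokes $\Delta\omega_t$ in Itô's formula.
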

\begin{proof}
    We have that
    \begin{talign}
        \int_0^T \langle \omega_t(Y_t), \overleftarrow{\mathrm{d}Y_t} \rangle - \int_0^T \langle \omega_t(Y_t), \overrightarrow{\mathrm{d}Y_t} \rangle &= [\omega(Y), Y]_T,
    \end{talign}
    where 
    $[\omega(Y), Y]_t$ denotes the quadratic variation. Since by Itô's lemma,
    \begin{talign}
    \overrightarrow{\mathrm{d}}\big(\omega_t(Y_t)\big)
    = \Big( \partial_t \omega_t(Y_t) 
       + \nabla \omega_t(Y_t)^{\top} \,a_t(Y_t) 
       + \tfrac{1}{2}\,\sigma(t)^2\,\Delta \omega_t(Y_t) \Big)\,\mathrm{d}t
       + \sigma(t)\, \nabla \omega_t(Y_t)^{\top} \,\overrightarrow{\mathrm{d}W_t},
    \end{talign}
    we have that 
    \begin{talign}
    [\omega(Y), Y]_T = \int_0^T \sigma(t)^2\, \nabla \cdot \omega_t(Y_t) \, \mathrm{d}t
    \end{talign}
    The first equality in
    \eqref{eq:int_sum_2} holds by the fact that
    \begin{talign}
    \begin{split}
        &\int_0^T \langle \nabla r_t(Y_t), \overleftarrow{\mathrm{d}Y_t} \rangle + \int_0^T \langle \nabla r_t(Y_t), \overrightarrow{\mathrm{d}Y_t} \rangle \\ &= \lim_{|\pi| \to 0} \sum_{k=0}^{K-1} \langle \nabla r_{t_{k+1}}(Y_{t_{k+1}}) - \nabla r_{t_{k}}(Y_{t_{k}}), Y_{k+1} - Y_k \rangle := 2 \int_0^T \nabla r_t(Y_t) \circ \mathrm{d}Y_t
    \end{split}
    \end{talign}
    where $\pi = (t_k)_{k=0}^{K}$ with $0 = t_0 < \cdots < t_K = T$ and $|\pi| = \max_{k=0:K-1} |t_{k+1} - t_k|$, and the second equality holds by
    Itô's lemma in the Stratonovich formulation.
\end{proof}

\subsection{Proof of \Cref{prop:nets_exponential}: NETS for exponential tilting} \label{subsec:proof_nets}

We use the notation of \Cref{proof:cmcd_exponential} and \Cref{subsec:proof_crooks}. Define 
\begin{talign} \label{eq:F_t_def}
    F_t = \log \mathbb{E}_{x \sim p^{\mathrm{base}}_1} [ \exp (r_t(x))].
\end{talign}
Since $\log \mathbb{E}_{p^{\mathrm{base}}} [\exp(r_1)] - \log \mathbb{E}_{\mathcal{N}(0, \beta_0^2 \mathrm{I})} [\exp(r_0)] = F_1 - F_0 = \int_{0}^1 \partial_t F_t \, \mathrm{d}t$, we can rewrite equation \eqref{eq:I-expanded} as
\begin{talign}
\begin{split}
\log \frac{\mathrm{d}\overrightarrow{\mathbb{P}}^{\mu,a}}
{\mathrm{d}\overleftarrow{\mathbb{P}}^{\nu,b}} (Y)
&= -\int_{0}^{1}\Big(
\langle \kappa_t Y_t + 2\eta_t \mathfrak{s}_t(Y_t),\,\nabla r_t(Y_t)\rangle
+\langle v(Y_t,t),\,\mathfrak{s}_t(Y_t)+\nabla r_t(Y_t)\rangle + \partial_t r_t(Y_t)
\\ &\qquad\qquad 
+\eta_t\|\nabla r_t(Y_t)\|^2 + \eta_t\,\Delta r_t(Y_t) + \nabla \cdot v(Y_t,t) - \partial_t F_t
\Big)\,\mathrm{d}t.
\end{split}
\end{talign}
Thus, for an arbitrary process $Y$, using Jensen's inequality
\begin{talign}
\begin{split} \label{eq:NETS_derivation_1}
    \mathbb{E}\big[\log \frac{\mathrm{d}\overrightarrow{\mathbb{P}}^{\mu,a}}
{\mathrm{d}\overleftarrow{\mathbb{P}}^{\nu,b}}(Y)^2\big] \! &= \! \mathbb{E}\Big[ \Big( \int_{0}^{1}\Big(
\langle \kappa_t Y_t \! + \! 2\eta_t \mathfrak{s}_t(Y_t),\,\nabla r_t(Y_t)\rangle
\! + \! \langle v(Y_t,t),\,\mathfrak{s}_t(Y_t)\! + \! \nabla r_t(Y_t)\rangle 
\\ &\qquad\qquad + \! \partial_t r_t(Y_t) 
\! + \! \eta_t\|\nabla r_t(Y_t)\|^2 \! + \! \eta_t\,\Delta r_t(Y_t) \! + \! \nabla \cdot v(Y_t,t) \! - \! \partial_t F_t
\Big)\,\mathrm{d}t \Big)^2 \Big] \\ &\leq
\mathbb{E}\Big[ \int_{0}^{1}\Big(
\langle \kappa_t Y_t \! + \! 2\eta_t \mathfrak{s}_t(Y_t),\,\nabla r_t(Y_t)\rangle
\! + \! \langle v(Y_t,t),\,\mathfrak{s}_t(Y_t)\! + \! \nabla r_t(Y_t)\rangle 
\\ &\qquad\qquad
+ \! \partial_t r_t(Y_t) 
\! + \! \eta_t\|\nabla r_t(Y_t)\|^2 \! + \! \eta_t\,\Delta r_t(Y_t) \! + \! \nabla \cdot v(Y_t,t) \! - \! \partial_t F_t
\Big)^2 \,\mathrm{d}t \Big],
\end{split}
\end{talign}
and the right-hand side is the PINN (physics informed neural network) NETS loss for exponential tilting.

We provide an alternative derivation. The Fokker-Planck equation corresponding to the process $X^v$ defined in \eqref{eq:X_v_SDE} is:
\begin{talign} \label{eq:FPE_b_r_sigma}
    \partial_t \rho_t + \nabla \cdot ((b_{\sigma} + v) \rho_t) = 
    \frac{\sigma^2}{2} \nabla \cdot \nabla \rho_t
\end{talign}
We rewrite $b^r_{\sigma}$ as follows:
\begin{talign}
\begin{split}
    b^r_{\sigma}(x,t) &= \kappa_t x + \big( \frac{\sigma(t)^2}{2} + \eta_t \big) \big( \mathfrak{s}_t(x) + \nabla r_t(x) \big) \\ &= \kappa_t x + \eta_t \big( \mathfrak{s}_t(x) + \nabla r_t(x) \big) + \frac{\sigma(t)^2}{2} \big( \mathfrak{s}_t(x) + \nabla r_t(x) \big) \\ &=: \tilde{b}^r(x,t)  + \frac{\sigma(t)^2}{2} \big( \mathfrak{s}_t(x) + \nabla r_t(x) \big).
\end{split}
\end{talign}
Thus, equation \eqref{eq:FPE_b_r_sigma} becomes
\begin{talign}
\begin{split}
    \partial_t \rho_t + \nabla \cdot ((\tilde{b}^r + v) \rho_t) = 
    \frac{\sigma^2}{2} \nabla \cdot \big( \big( - \mathfrak{s}_t - \nabla r_t \big) \rho_t + \nabla \rho_t \big)
\end{split}    
\end{talign}
Now we want $\rho^{\star}_t \propto \rho^{\mathrm{base}}_t \exp(r_t)$ to fulfill this PDE. We write explicitly
\begin{talign}
    \rho^{\star}_t(x) = \frac{\rho^{\mathrm{base}}_t(x) \exp(r_t(x))}{F_t}, \qquad \hat{F}_t = \mathbb{E}_{y \sim \rho^{\mathrm{base}}_t}[\exp(r_t(y))].
\end{talign}
Since $\nabla \cdot \big( \big( - \mathfrak{s}_t - \nabla r_t \big) \rho^{\star}_t + \nabla \rho^{\star}_t \big) = 0$ by construction, we must enforce
\begin{talign}
\begin{split} \label{eq:simplification_alternative}
    0 &= \partial_t \rho^{\star}_t + \nabla \cdot ((\tilde{b}^r + v) \rho^{\star}_t) \\ &= \partial_t \big( \frac{\rho^{\mathrm{base}}_t(x) \exp(r_t(x))}{\hat{F}_t} \big) + \nabla \cdot (\tilde{b}^r + v) \rho^{\star}_t + \langle \tilde{b}^r + v, \nabla \log \rho^{\star}_t \rangle \rho^{\star}_t \\ &= \partial_t \big( \log \rho^{\mathrm{base}}_t + r_t  -  \log \hat{F}_t \big) \rho^{\star}_t + \nabla \cdot (\tilde{b}^r + v) \rho^{\star}_t + \langle \tilde{b}^r + v, \mathfrak{s}_t + \nabla r_t \rangle \rho^{\star}_t.
\end{split}
\end{talign}
Since $\rho^{\mathrm{base}}_t$ satisfies $0 = \partial_t \rho^{\mathrm{base}}_t + \nabla \cdot \big( (\kappa_t x + \eta_t \mathfrak{s}_t) \rho^{\mathrm{base}}_t \big) = \partial_t \rho^{\mathrm{base}}_t + \nabla \cdot \big( (\tilde{b}^r - \eta_t \nabla r_t) \rho^{\mathrm{base}}_t \big)$, we obtain that
\begin{talign}
    \partial_t \log \rho^{\mathrm{base}}_t = - \nabla \cdot \big( \tilde{b}^r - \eta_t \nabla r_t \big) - \langle \tilde{b}^r - \eta_t \nabla r_t, \mathfrak{s}_t \rangle.
\end{talign}
Plugging this into the right-hand side of \eqref{eq:simplification_alternative} and using that $\log \hat{F}_t = F_t$ with $F_t$ defined in \eqref{eq:F_t_def} yields
\begin{talign}
\begin{split} \label{eq:alternative_equality}
    0 &= - \nabla \cdot \big( \tilde{b}^r - \eta_t \nabla r_t \big) - \langle \tilde{b}^r - \eta_t \nabla r_t, \mathfrak{s}_t \rangle + \partial_t r_t - \partial_t F_t + \nabla \cdot (\tilde{b}^r + v) + \langle \tilde{b}^r + v, \mathfrak{s}_t + \nabla r_t \rangle \\ &= \nabla \cdot \big( v + \eta_t \nabla r_t \big) + \partial_t r_t - \partial_t F_t + \langle v, \mathfrak{s}_t + \nabla r_t \rangle \\ &\quad - \langle \kappa_t x + \eta_t \mathfrak{s}_t, \mathfrak{s}_t \rangle + \langle \kappa_t x + \eta_t \big( \mathfrak{s}_t + \nabla r_t \big), \mathfrak{s}_t + \nabla r_t \rangle
    \\ &= \nabla \cdot \big( v + \eta_t \nabla r_t \big) + \partial_t r_t - \partial_t F_t + \langle v, \mathfrak{s}_t + \nabla r_t \rangle \\ &\quad - \langle \kappa_t x + \eta_t \mathfrak{s}_t, \mathfrak{s}_t \rangle + \langle \kappa_t x + \eta_t \big( \mathfrak{s}_t + \nabla r_t \big), \mathfrak{s}_t + \nabla r_t \rangle
    \\ &= \nabla \cdot \big( v + \eta_t \nabla r_t \big) + \partial_t r_t - \partial_t F_t + \langle v, \mathfrak{s}_t + \nabla r_t \rangle 
    + \langle \kappa_t x + 2 \eta_t \mathfrak{s}_t, \nabla r_t \rangle + \eta_t \|\nabla r_t \|^2
\end{split}
\end{talign}
Thus, for $\rho^{\star}_t$ to satisfy the FPE \eqref{eq:FPE_b_r_sigma}, which means that is the law of the marginal $X^v_t$, we need $v$ to satisfy \eqref{eq:alternative_equality}. Observe that the terms in the right-hand side of \eqref{eq:alternative_equality} match one to one the terms in the right-hand side of \eqref{eq:NETS_derivation_1}. Hence, the NETS loss can be interpreted as a PINN loss on the residual of \eqref{eq:alternative_equality}.

\section{Additional experiments}

\subsection{Derivation of the generative SDE with different initial variance $\sigma^2_0$} \label{subsec:derivation}

Consider Rectified Flow with $\sigma(t) = \gamma \eta_t$ for some $\gamma > 0$, and with two different choices for the initial variance: $\sigma_0^2$ and $\tilde{\sigma}_0^2$. 
We have that
\begin{talign}
    b(x,t) &= \kappa_t x + (1+\gamma)\eta_t \mathfrak{s}(x,t) = \frac{\dot{\bar{\alpha}}_t}{\bar{\alpha}_t} x + \frac{(1+\gamma)(1-\bar{\alpha}_t) \dot{\bar{\alpha}}_t \sigma_0^2}{\bar{\alpha}_t} \mathfrak{s}(x,t), \\
    \tilde{b}(x,t) &= \kappa_t x + (1+\gamma)\tilde{\eta}_t \tilde{\mathfrak{s}}(x,t) = \frac{\dot{\bar{\alpha}}_t}{\bar{\alpha}_t} x + \frac{(1+\gamma)(1-\bar{\alpha}_t) \dot{\bar{\alpha}}_t \tilde{\sigma}_0^2}{\bar{\alpha}_t} \tilde{\mathfrak{s}}(x,t),
\end{talign}
where $\mathfrak{s}$ and $\tilde{\mathfrak{s}}$ are the scores corresponding to $\sigma_0^2$ and $\tilde{\sigma}_0^2$, respectively. Next, we apply Corollary \ref{cor:OT_FM} to relate $\mathfrak{s}$ and $\tilde{\mathfrak{s}}$. Defining
\begin{talign} \label{eq:s_r}
    t(r) = \bar{\alpha}^{-1} (\frac{\sigma_0 \bar{\alpha}_r}{\tilde{\sigma}_0(1-\bar{\alpha}_r)+\sigma_0 \bar{\alpha}_r}), \qquad s_r = \frac{\tilde{\sigma}_0(1-\bar{\alpha}_r)+\sigma_0 \bar{\alpha}_r}{\sigma_0},
\end{talign}
we have that
\begin{talign} \label{eq:tilde_s}
    \frac{\bar{\alpha}_r}{(1-\bar{\alpha}_r)\tilde{\sigma}_0} = \frac{\bar{\alpha}_{t(r)}}{(1-\bar{\alpha}_{t(r)})\sigma_0}, \qquad \tilde{\mathfrak{s}}(x,r) = \frac{1}{s_r} \mathfrak{s}(x/s_r,t(r)),
\end{talign}
and
\begin{talign}
\begin{split}
    \tilde{b}(x,r) \! &= \! \frac{\dot{\bar{\alpha}}_r}{\bar{\alpha}_r} x + \frac{(1+\gamma)(1-\bar{\alpha}_r) \dot{\bar{\alpha}}_r \tilde{\sigma}_0^2}{\bar{\alpha}_r s_r} \mathfrak{s}(\frac{x}{s_r},t(r)) \! = \! \frac{\dot{\bar{\alpha}}_r}{\bar{\alpha}_r} x + \frac{(1+\gamma)(1-\bar{\alpha}_r) \dot{\bar{\alpha}}_r \tilde{\sigma}_0^2}{\bar{\alpha}_r s_r} \frac{\bar{\alpha}_{t(r)}}{(1-\bar{\alpha}_{t(r)}) \dot{\bar{\alpha}}_{t(r)} \sigma_0^2} \big( v(\frac{x}{s_r},t(r)) \! - \! \frac{\dot{\bar{\alpha}}_{t(r)}}{\bar{\alpha}_{t(r)}} \frac{x}{s_r} \big) \\ &= \! \frac{\dot{\bar{\alpha}}_r}{\bar{\alpha}_r} x \! + \! \frac{\dot{\bar{\alpha}}_r \tilde{\sigma}_0}{ s_r} \frac{1+\gamma}{\dot{\bar{\alpha}}_{t(r)} \sigma_0} \big( v(\frac{x}{s_r},t(r)) \! - \! \frac{\dot{\bar{\alpha}}_{t(r)}}{\bar{\alpha}_{t(r)}} \frac{x}{s_r} \big) \! = \! \frac{\dot{\bar{\alpha}}_r}{\bar{\alpha}_r} x \! + \! \frac{\dot{\bar{\alpha}}_r}{\dot{\bar{\alpha}}_{t(r)}} \frac{(1+\gamma)\tilde{\sigma}_0}{\tilde{\sigma}_0(1-\bar{\alpha}_r)+\sigma_0 \bar{\alpha}_r} \big( v(\frac{x}{s_r},t(r)) - \frac{\dot{\bar{\alpha}}_{t(r)}}{\bar{\alpha}_{t(r)}} \frac{x}{s_r} \big) \\ &= \frac{\dot{\bar{\alpha}}_r}{\dot{\bar{\alpha}}_{t(r)}} \frac{\tilde{\sigma}_0 (1+\gamma)}{\tilde{\sigma}_0(1-\bar{\alpha}_r)+\sigma_0 \bar{\alpha}_r} v(\frac{x}{s_r},t(r)) + \dot{\bar{\alpha}}_r \big( \frac{1}{\bar{\alpha}_r} - \frac{1+\gamma}{\bar{\alpha}_{t(r)}} \frac{\tilde{\sigma}_0 \sigma_0}{( \tilde{\sigma}_0(1-\bar{\alpha}_r)+\sigma_0 \bar{\alpha}_r )^2} \big) x \\ &= \frac{\dot{\bar{\alpha}}_r}{\dot{\bar{\alpha}}_{t(r)}} \frac{\tilde{\sigma}_0 (1+\gamma)}{\tilde{\sigma}_0(1-\bar{\alpha}_r)+\sigma_0 \bar{\alpha}_r} v(\frac{x}{s_r},t(r)) + \frac{\dot{\bar{\alpha}}_r}{\bar{\alpha}_r} \big( 1 - \frac{\tilde{\sigma}_0 (1+\gamma)}{\tilde{\sigma}_0(1-\bar{\alpha}_r)+\sigma_0 \bar{\alpha}_r} \big) x
\end{split}
\end{talign}
where the second equality holds because $\mathfrak{s}(x,t) = \frac{\bar{\alpha}_t}{(1-\bar{\alpha}_t) \dot{\bar{\alpha}}_t \sigma_0^2} \big( v(x,t) - \frac{\dot{\bar{\alpha}}_t}{\bar{\alpha}_t} x \big)$, the third equality holds because of the first equality in \eqref{eq:tilde_s}, the fourth and fifth equalities hold because of the second equality in \eqref{eq:s_r}, and the sixth equality holds because $\frac{\bar{\alpha}_{t(r)}}{\bar{\alpha}_r} = \frac{\sigma_0}{\tilde{\sigma}_0(1-\bar{\alpha}_r)+\sigma_0 \bar{\alpha}_r}$, and the seventh equality holds by simplifying. 
And if we want to run inference with a different noise schedule $\sigma(t)$, we have that
\begin{talign}
    \tilde{b}_{\sigma}(x,r) = \tilde{b}(x,r) + \big( \frac{\sigma(t)^2}{2} + \tilde{\eta}_t \big) \tilde{\mathfrak{s}}(x,t).
\end{talign}

\subsection{Additional plots}

\Cref{fig:sd3_15} is the analog of \Cref{fig:sd3} with training $\sigma_0^2 = 1.5$.

\begin{figure}[h]
    \centering
    \includegraphics[width=0.75\linewidth]{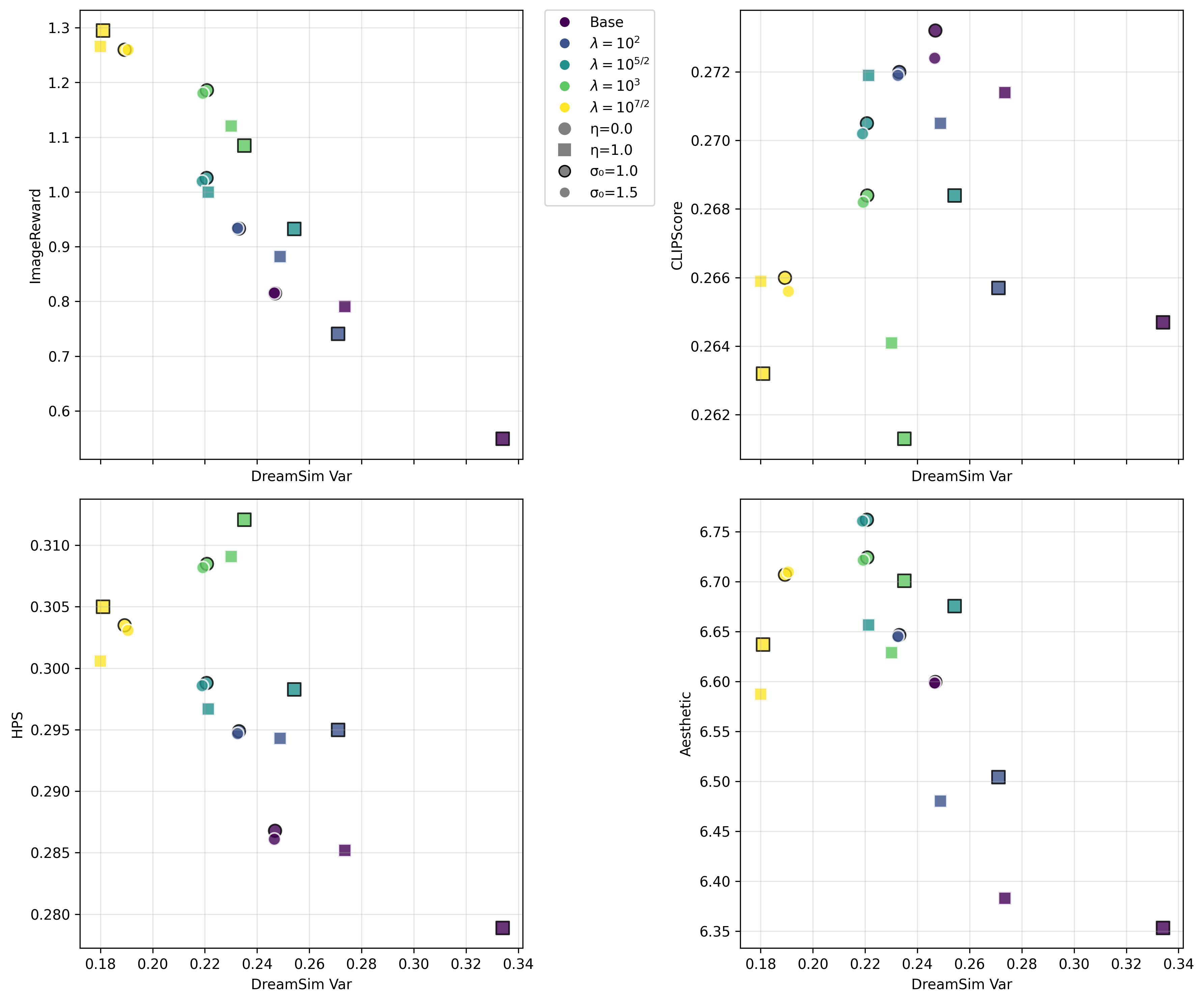}
    \caption{Quality metrics for the base Stable Diffusion 3 model and models fine-tuned at $\sigma_0^2 = 1.5$ and $\lambda \in \{10^2, 10^{5/2}, 10^3\}$, with inference at $\eta \in \{0, 1\}$ and $\sigma_0 = \{1, 1.5\}$.}
    \label{fig:sd3_15}
\end{figure}

\end{document}